\documentclass{article}

\PassOptionsToPackage{numbers,compress,sort}{natbib}

\usepackage[final]{neurips_2021}

\usepackage{commands}




\usepackage[utf8]{inputenc} 
\usepackage[T1]{fontenc}    
\usepackage{hyperref}       
\usepackage{url}            
\usepackage{booktabs}       
\usepackage{amsfonts}       
\usepackage{nicefrac}       
\usepackage{microtype}      
\usepackage{xcolor}         
\usepackage{graphicx}       
\usepackage[export]{adjustbox}
\usepackage{float}
\usepackage{verbatim}
\usepackage{multirow}
\usepackage{mathtools}
\usepackage{siunitx}
\graphicspath{{figures/}}

\usepackage{enumitem}
\setlist{leftmargin=0.5cm,itemsep=2pt,parsep=0pt}

\title{Support vector machines and linear regression coincide with very high-dimensional features}

%

\author{%
  Navid Ardeshir* \\
  Dept.\ of Statistics\\
  Columbia University\\
  \texttt{na2844@columbia.edu} \\
  \And
   Clayton Sanford* \\
   Dept.\ of Computer Science \\
   Columbia University\\
   \texttt{clayton@cs.columbia.edu} \\
   \And
   Daniel Hsu \\
   Dept.\ of Computer Science \\
   Columbia University\\
   \texttt{djhsu@cs.columbia.edu} \\
}

\begin{document}

\maketitle

\begin{abstract}
The support vector machine (SVM) and minimum Euclidean norm least squares regression are two fundamentally different approaches to fitting linear models, but they have recently been connected in models for very high-dimensional data through a phenomenon of support vector proliferation, where every training example used to fit an SVM becomes a support vector.
In this paper, we explore the generality of this phenomenon and make the following contributions.
First, we prove a super-linear lower bound on the dimension (in terms of sample size) required for support vector proliferation in independent feature models, matching the upper bounds from previous works.
We further identify a sharp phase transition in Gaussian feature models, bound the width of this transition, and give experimental support for its universality.
Finally, we hypothesize that this phase transition occurs only in much higher-dimensional settings in the $\ell_1$ variant of the SVM, and we present a new geometric characterization of the problem that may elucidate this phenomenon for the general $\ell_p$ case. 
\end{abstract}


\section{Introduction}

The \emph{support vector machine} (SVM) and \emph{ordinary least squares} (OLS) are well-weathered approaches to fitting linear models, but they are associated with different learning tasks: classification and regression.
In this paper, we study the case in which the models return exactly the same hypothesis for sufficiently high-dimensional data.

The hard-margin SVM is a linear classification model that finds the separating hyperplane that maximizes the minimum margin of error for every training sample. 
If the training data $(\bx_1, y_1), \dots, (\bx_n, y_n) \in \R^d \times \flip$ are linearly separable, then the resulting linear classifier is $x \mapsto \sign{x^\T w_{\svm}}$, where $w_{\svm}$ is the solution to the following optimization problem: 
\begin{equation}\label{eq:svm-primal-l2}
    w_{\svm} = \argmin_{w \in \R^d} \norm[2]{w} \quad \text{such that} \quad y_i w^\T \bx_i \geq 1, \ \forall i \in [n].
\end{equation}
An example $\bx_i$ is a \emph{support vector} if the corresponding constraint is satisfied with equality, and the optimal solution $w_{\svm}$ is a linear combination of these support vectors.

Ordinary least squares regression finds the linear function that best fits the training data $(\bx_1, y_1), \dots, (\bx_n, y_n) \in \R^d \times \R$ according to the sum of squared errors.
When the solution is not unique, it is natural to take the solution of minimum Euclidean norm; this is the convention we adopt.
Taking $\bX := [\bx_1 | \dots | \bx_n]^\T \in \R^{n \times d}$ and $y := (y_1, \dots, y_n)$, the solution is the hypothesis $x \mapsto w_{\ls}^\T x$ where $w_{\ls}$ is the solution to the following:
   $w_{\ls} = \argmin_{w \in \R^d} \norm[2]{w}$ such that
  $\bX^\T \bX w = \bX^\T y$.
  In many high-dimensional settings (e.g., where $\bX$ has full row rank), the solution may in fact \emph{interpolate} the training data, i.e., 
\begin{equation}\label{eq:ls-l2-overparam}
    w_{\ls} = \argmin_{w \in \R^d} \norm[2]{w}  \quad \text{such that} \quad w^\T \bx_i = y_i, \ \forall i \in [n].
\end{equation}

Although the optimization problems in~\eqref{eq:svm-primal-l2} and~\eqref{eq:ls-l2-overparam} are very different,
they have been observed to coincide in very high-dimensional regimes.
The study of this \emph{support vector proliferation} (SVP) phenomenon---in which every training example is a support vector---was recently initiated by \citet{mnsbhs20} and \citet{hmx20}.
Roughly speaking, they show that SVP occurs when $d = \Omega( n \log n)$ for a broad class of sample distributions, and that SVP does not occur when $d = O(n)$ in an idealized isotropic Gaussian case.

SVP is a phenomenon that connects linear classification and linear regression, topics that have received renewed attention due to the break-down of classical analyses of these methods in high-dimensions.
For instance, some analyses of SVM that are based on the number of support vectors become vacuous when this number becomes large~\cite[e.g.,][]{vapnik1995nature,graepel2005pac,germain2011pac}.
Similarly, overparameterized linear regression is typically only studied in noisy settings with explicit regularization.
It was not until recently that SVM and OLS have been meaningfully analyzed in these regimes (see Section~\ref{sec:related}), and the connection between the two approaches via SVP has played an important analytical role~\cite{mnsbhs20,wang2021binary,cao2021risk}.



In this work, we further examine support vector proliferation with the goal of broadly understanding when and why SVMs and OLS coincide. 
We pose and study the following questions:
\begin{enumerate}
    \item \emph{How general is the SVP phenomena? 
    What relationship between $d$ and $n$ determines if the solutions to \eqref{eq:svm-primal-l2} and \eqref{eq:ls-l2-overparam} coincide?}

    \smallskip
    We close the $\log n$ gap from the prior work of \citet{hmx20} by showing that $d \gtrsim n \log n$ is \emph{necessary} for SVP to occur under a model of independent subgaussian features, even with constant probability.
    Our lower-bounds hold for a broad class of distributions over $\bx_i$, and they match the upper-bounds from \cite{hmx20}.
    This demonstrates that SVP is extremely unlikely to occur in the much-studied $d = \Theta(n)$ setting.

    \item \emph{Is there a sharp threshold separating the occurrence and non-occurrence of this phenomenon?
    Is this threshold universal across all ``reasonable'' distributions over each $\bx_i$?}

    \smallskip
    We hypothesize that a sharp phase transition occurs at $d = 2 n \log n$.
    We rigorously prove this hypothesis for isotropic Gaussian features and quantitatively bound the width of the transition.
    We experimentally observe the same transition for a wide range of other distributions.

    \item \emph{Is support vector proliferation specific to the $\ell_2$ SVM problem? 
    If \eqref{eq:svm-primal-l2} and \eqref{eq:ls-l2-overparam} are generalized to instead minimize $\ell_p$ norms, does this still occur at the same rate?}

    \smallskip
    We re-frame this question with a geometric characterization of the dual of the SVM optimization problem for $\ell_p$ norms.
    We conjecture that a similar phase transition occurs for $\ell_1$, but also that it requires much larger dimension $d$; this is supported by preliminary experiments.

\end{enumerate}









\subsection{Outline of our results}

Section~\ref{sec:prelim} introduces the SVM and OLS approaches in full generality, our $\lambda$-anisotropic subgaussian data model, and prior results about SVP.
Several equivalent characterizations of SVP are established (Proposition~\ref{prop:equiv}) for use in subsequent sections.

Section~\ref{sec:nonasymp} characterizes when SVP \emph{does not} occur for a broad range of distributions (Theorem~\ref{thm:anisotropic}).
Our lower-bound on the dimension required for SVP matches the upper-bounds from \cite{hmx20} in the isotropic Gaussian setting, resolving the open question from that work, and also gives new lower-bounds for anisotropic cases.
The proof works by tightly controlling the spectrum of the Gram matrix and establishing anti-concentration via the Berry-Esseen Theorem.

Section~\ref{sec:asymptotic} establishes a sharp threshold of $d = 2 n \ln n$ for SVP in the case of isotropic Gaussian samples, and also characterize the width of the phase transition (Theorem~\ref{thm:asymptotic}).

Section~\ref{sec:experiments} provides empirical evidence that the sharp threshold observed in Section~\ref{sec:asymptotic} holds for a wide range of random variables.
Rigorous statistical methodology inspired by
\citet{donoho2009observed} is used to test our ``universality hypothesis'' that the probability of SVP does not depend on the underlying sample distribution as $d$ and $n$ become large.

Section~\ref{sec:l1} asks the questions about SVP from the preceding sections in the context of $\ell_1$-SVM and minimum $\ell_1$-norm interpolation.
Specifically, the SVP threshold for $\ell_1$ is conjectured to occur for $d = \omega(n \log n)$.
Evidence for this conjecture is provided in a simulation study and in geometric arguments about random linear programs.






\subsection{Related work}
\label{sec:related}


\paragraph{Prior works connecting SVP and generalization.}

\citet{mnsbhs20} initiate the study of SVP in part to facilitate generalization analysis of the SVM in very high-dimensional settings.
Their work, as well as the contemporaneous work of \citet{cl20}, shows that the SVM enjoys low test error in certain regimes where classical learning-theoretic analyses would otherwise yield vacuous error bounds.
(In fact, one of the settings in \cite{cl20} requires polynomially-higher dimension than is typically studied: $d = \Omega(n^2\log n)$.)
The coincidence between SVM and OLS identified by \citeauthor{mnsbhs20} was also more recently used by \citet{wang2021binary} and \citet{cao2021risk} for analyses of linear classification in very high dimensions under different data distributions.

The generalization analysis of \citeauthor{mnsbhs20} concerns a data model inspired by the spiked covariance model of \citet{wang2017asymptotics}.
They identify a regime of overparameterization where the hard-margin SVM classifier has good generalization (i.e., classification risk going to 0) even when all the training samples are supoort vectors. Our new lower bound can be regarded as establishing a limit on this approach to the analysis of SVM; specifically, if the (effective) dimension is not sufficiently large, the OLS and SVM solutions may not coincide.

\paragraph{Prior analyses of number of support vectors.}
Besides its relevance to generalization analysis, the number of support vectors in an SVM model is an interesting quantity to study in its own right.
\citet{hmx20} sharpen and extend the analysis of \citet{mnsbhs20} about SVP in the independent features model that we also adopt.
They prove that SVM on $n$ samples with $d$ independent subgaussian components coincides with OLS when $d = \Omega(n \log n)$ with probability tending to $1$.
They also give a converse result stating that the coincidence fails with constant probability when $d = O(n)$ in the isotropic Gaussian feature model.
(We give these results here as Theorems~\ref{thm:hmx-ub} and \ref{thm:hmx-lb} respectively.)
Our results generalize and tighten the latter bound to tell an asymptotically sharp story about the phase transition for both isotropic and anisotropic random vectors with subgaussian components.
Our specific analysis for the isotropic Gaussian case gives the exact point of the phase transition.

The number of support vectors is also studied in the context of variants of SVM~\cite{steinwart2003sparseness,bartlett2007sparseness}, including the soft-margin SVM~\cite{cortes1995support} and the $\nu$-SVM~\cite{scholkopf2000new}.
In these cases, the asymptotic number of support vectors is shown to be related to the noise rate in the problem. The setups we study are linearly separable, which makes it possible to study the hard-margin SVM (without regularization).
The hard-margin SVM is also of interest because it captures the implicit bias of gradient descent on the logistic loss objective for linear predictors~\cite{shmgs18,ji2019implicit}.



Phase transitions have been studied in the context of linear classification~\cite{cover1965geometrical,theisen2021good,candes2018phase,liang2020precise}, and SVMs in particular~\cite{dietrich1999statistical,malzahn2005statistical,liu2019exact,bg01}, but most study qualitative changes in behavior other than support vector proliferation.
The most relevant is the study of \citet{bg01}, who employ techniques from statistical physics to show the existence of phase transitions for the generalization error, margin size, and number of support vectors as $n$ and $d = \Theta(n)$ become arbitrarily large.
While they characterize the fraction of samples that are support vectors, they do not address our question about when \emph{all} samples are support vectors, not just a large fraction.
Indeed, our results demonstrate that their regime where $d$ grows linearly with $n$ will not exhibit support vector proliferation when $n$ and $d$ in the limit.

\paragraph{Overparameterized linear regression.}
There has been a recent flurry of analyses of overparameterized linear regression models~ \cite[e.g.,][]{bhx20,bllt20,hmrt19,muthukumar2020harmless,mei2019generalization,mitra2019understanding,wang2017asymptotics,mahdaviyeh2019risk,mnsbhs20,xu2019number,huang2020dimensionality}.
Many of these analyses are carried out in the $d = \Theta(n)$ asymptotic regime, whereas our work studies a phase transition that occurs in a much higher-dimensional regime.
The notions of effective dimensions we use are present in the analyses of \citet{bllt20} and \citet{mnsbhs20}, and the latter work identifies regimes where SVM and OLS coincide and enjoy good performance for both classification and regression.


\paragraph{High-dimensional geometry and universality.}
Our conjecture about support vector proliferation for $\ell_1$-SVMs derives inspiration from studies of high-dimensional geometric phase transitions, particularly those by \cite{ab12, almt14, donoho2009counting}.
These results consider the geometry of random polytopes.
\citet{ab12} establish phase transitions on the feasibility and boundedness of the solutions to random linear programs, \citet{almt14} extend these results to characterize when $\ell_1$-norm minimizing solutions to sparse recovery problems are exactly correct, and \citet{donoho2009counting} bound the number of faces of random polytopes. 
We also borrow heavily from \cite{donoho2009observed} when designing our experiments in Section~\ref{sec:experiments} to test the universality hypothesis.


\section{Preliminaries}\label{sec:prelim}

This section introduces notation, as well as the optimization problems and data models we consider.
We also define support vector proliferation and prove the equivalence of different formulations.

\subsection{Notation}
For $\lambda\in \R_{+}^d$, we define the $\ell_2$ and $\ell_{\infty}$ \emph{dimension proxies} as $d_2 := \fracl{\norml[1]{\lambda}^2}{\norml[2]{\lambda}^2}$ and $d_{\infty} := \fracl{\norml[1]{\lambda}}{\norml[\infty]{\lambda}}$
Let $[n] := \set{1, \dots,  n}$.
For some vector $w \in \R^n$ and matrix $A \in \R^{n \times d}$, we let $w_i$ and $A_i$ denote the $i$th element of $w$ and row of $A$ respectively; likewise, we let $A_\col{j} \in \R^n$ represent the $j$th column of $A$.
We abuse notation to let $w_{\setminus i} = (w_1, \dots, w_{i-1}, w_{i+1}, \dots, w_n) \in \R^{n-1}$, $w_{[m]} = (w_1, \dots, w_m) \in \R^m$, $w_{\setminus [m]} = (w_{m+1}, \dots, w_n) \in \R^{n-m}$, and $w_{[m] \setminus i} = (w_{[m]})_{\setminus i} \in \R^{m-1}$ for $i \in [m]$ and $m \in [n]$.
Analogous notation holds for $A_{\setminus i}$, $A_{[m]}$, $A_{\setminus [m]}$, and $A_{[m] \setminus i}$.
We frequently consider the Gram matrix $\bK := \bX \bX^\T \in \R^{n\times n}$ for feature matrix $\bX \in \R^{n \times d}$; for these matrices, we let $\bK_{\setminus i} = \bX_{\setminus i} \bX_{\setminus i}^\T \in \R^{(n-1)\times(n-1)}$ and analogously define $\bK_{[m]}$, $\bK_{\setminus [m]}$, and $\bK_{[m] \setminus i}$.
Let $\mu_{\max}(A)$ and $\mu_{\min}(A)$ represent the largest and smallest eigenvalues of the matrix $A$ respectively, and let $\norml{A}$ be the operator norm of $A$.
For some vector $y \in \R^n$, we let $\diag(y) \in \R^{n \times n}$ be a diagonal matrix with $(\diag(y))_{i,i} = y_i$.
Throughout, boldface characters refer to random variables.

\subsection{Optimization problems}

We consider the hard-margin support vector machine (SVM) optimization problem and ask under what conditions one may expect all the slackness conditions to be satisfied. 
We consider training samples $(\bx_1, y_1), \dots, (\bx_n, y_n) \in \R^d \times \flip$ in a high-dimensional regime where $d \gg n$ with design matrix $\bX := [\bx_1 | \dots | \bx_n]^\T \in \R^{n \times d}$ and Gram matrix $\bK = \bX \bX^\T \in \R^{n \times n}$.
In full generality, the separating hyperplane corresponding to the $\ell_p$-SVM problem for some $p \geq 1$ is the solution to the following optimization problem:
\begin{equation}
    \tag{SVM Primal}
    \label{eq:svm-primal-lp}
    \min_{w \in \R^d} \norm[p]{w} \quad \text{such that} \quad y_i w^\T\bx_i \geq 1, \ \forall i \in [n].
\end{equation}
Our results in Sections~\ref{sec:nonasymp}--\ref{sec:experiments} concern $p=2$, and we discuss $p=1$ in Section~\ref{sec:l1}.
It is worth mentioning that feasibility is not a concern in the settings we consider.\footnote{When $d \ge n$, we are always able to find a separating hyperplane since the features are linearly independent with high probability. In fact, a theorem of \citet{cover1965geometrical} shows that feasibility holds with high probability under mild distributional assumptions even for $d > \fracl{n}{2}$.}
An example $(\bx_i, y_i)$ is called a \emph{support vector} if it lies exactly on the margin defined by separator $w$, or equivalently if $y_i w^\T {\bx_i} = 1$.
It is well-known that $w$ can be represented as a non-negative linear combination of all $y_i \bx_i$ where $\bx_i$ is a support vector~\cite{vapnik1982estimation}.

We contrast the weights of the classifier returned by \ref{eq:svm-primal-lp} with the weights of minimum $\ell_p$-norm that satisfy the normal equations of ordinary least squares (OLS).
In the case where the training data can be linearly interpolated, this optimization problem is:
\begin{equation}
    \tag{Interpolation Primal}
    \label{eq:svm-restricted-primal-lp}
    \min_{w \in \R^d} \norm[p]{w} \quad \text{such that} \quad y_i w^\T{\bx_i} = 1, \ \forall i \in [n].
\end{equation}
Per the convention mentioned in the introduction,
the solution of \eqref{eq:svm-restricted-primal-lp} when $p=2$ is referred to as ordinary least squares.
Feasibility is ensured as long as the feature vectors $\bx_i$ are linearly independent.

\subsection{Equivalent formulations of SVP}

We study the phenomenon of \emph{support vector proliferation} (SVP), i.e., the occurrence in which every example $\bx_i$ is a support vector.
Because $\bx_i$ is a support vector if $y_i w^\T{\bx_i} = 1$, this occurs if and only if the solution of \eqref{eq:svm-primal-lp} coincides exactly with that of \eqref{eq:svm-restricted-primal-lp}.
Here, we analyze those formulations to show equivalent conditions needed for SVP, which we give in Proposition~\ref{prop:equiv}.
Before presenting the proposition, we introduce the notation needed to use the alternate formulations.

We translate the relationship between the two primal optimization problems into the dual space.
Taking $\bA =  \diag(y) \bX  \in \mathbb{R}^{n \times d}$, the dual of the optimization problem \eqref{eq:svm-restricted-primal-lp} is:
\begin{equation}
    \tag{Interpolation Dual}
    \label{eq:svm-restricted-dual-lp}
    \max_{\alpha \in \mathbb{R}^{n}} \sum_{i=1}^{n}{\alpha_{i}} \quad
    \text{such that}  \quad \|\bA^\T \alpha \|_{q} \le 1 .
\end{equation}

The dual of \eqref{eq:svm-primal-lp} is \eqref{eq:svm-restricted-dual-lp} with an additional constraint that $\alpha \in \mathbb{R}_{+}^{n}$.

Let $\bT = \set{\sum_{i=1}^n a_i \bA_i : \sum_{i=1}^n a_i = 1}$ denote the affine plane passing through the rows of $\bA$, and let $\bT^+ = \set{\sum_{i=1}^n a_i \bA_i : \sum_{i=1}^n a_i = 1, \ a_i \geq 0}$ be the convex hull of the rows of $\bA$.
In addition, for $i \in [n]$, let $\bT_{\setminus i} = \setl{\sum_{i'\neq i} a_{i'} \bA_{i'} : \sum_{i'\neq i} a_{i'} = 1}$.
We denote $\proj$ as the $\ell_q$-norm projection of the origin onto $\bT$, which is uniquely defined for $1 < q < \infty$.

\begin{restatable}{proposition}{propequiv}\label{prop:equiv}
Let $1 < p < \infty$ and $q = (1-1/p)^{-1}$, and consider any $(\bX,y) \in \R^{n \times d} \times \flip^n$.
Suppose $\bK$ is invertible.
Then, the following are equivalent:
\begin{enumerate}[label=(\arabic*)]
    \item SVP occurs for $\ell_p$-SVM.
    \item The solutions $w$ to \eqref{eq:svm-primal-lp} and \eqref{eq:svm-restricted-primal-lp} are identical.
    \item The optimal solution to \eqref{eq:svm-restricted-dual-lp} lies within the interior of $\R_{+}^{d}$.
    \item $\proj \in \mathbf{T}^{+}$.
\end{enumerate}
Moreover, if $p=2$, then properties (1)--(4) are also equivalent to the following:
\begin{enumerate}[label=(\arabic*)]
    \setcounter{enumi}{4}
    \item  For all $i \in [n]$, $ y_{i} y_{\setminus i}^\T \bK_{\setminus i}^{-1} \bX_{\setminus i} \bx_{i} =  y_{i}\projj{i}^\T \bx_{i} / \norml[2]{\projj{i}}^{2} < 1.$
\end{enumerate}
\end{restatable}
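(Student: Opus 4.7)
My plan is to establish the cyclic chain $(1)\Leftrightarrow(2)\Leftrightarrow(3)\Leftrightarrow(4)$ and then derive $(1)\Leftrightarrow(5)$ separately in the $p=2$ case via a leave-one-out identity. The $(1)\Leftrightarrow(2)$ step is essentially unpacking definitions: SVP says every constraint of \eqref{eq:svm-primal-lp} is tight at the SVM optimum $w_\star$, which is precisely the statement that $w_\star$ is feasible for \eqref{eq:svm-restricted-primal-lp}; conversely any solution of \eqref{eq:svm-restricted-primal-lp} is feasible for \eqref{eq:svm-primal-lp}, and strict convexity of $\|\cdot\|_p$ for $1<p<\infty$ forces the two unique minimizers to coincide whenever each is feasible for the other. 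For $(2)\Leftrightarrow(3)$ I would invoke Lagrangian duality: \eqref{eq:svm-restricted-dual-lp} is the dual of \eqref{eq:svm-restricted-primal-lp}, and the dual of \eqref{eq:svm-primal-lp} is \eqref{eq:svm-restricted-dual-lp} with the added constraint $\alpha \ge 0$. Invertibility of $\bK$ makes the rows of $\bA$ linearly independent, so strong duality holds and the dual maximizer is unique; the two duals therefore agree iff the Interpolation Dual's maximizer satisfies $\alpha \ge 0$, and stationarity recovers the primal $w$ uniquely from each dual optimum. The $(3)\Leftrightarrow(4)$ step is a rescaling: at the optimum of \eqref{eq:svm-restricted-dual-lp} the norm constraint is tight, so setting $a = \alpha / \sum_i \alpha_i$ gives $\sum_i a_i = 1$ and $\bA^\T a \in \bT$ with $\ell_q$-norm equal to $1/\sum_i \alpha_i$. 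Maximizing $\sum_i \alpha_i$ is therefore equivalent to projecting the origin onto $\bT$ in $\ell_q$, yielding $\bA^\T a = \proj$; since $a$ is a positive scalar multiple of $\alpha$, the sign conditions $\alpha \ge 0$ and $\proj \in \bT^+$ agree.

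For $(1)\Leftrightarrow(5)$ in the $p=2$ case I would work from the explicit form of the dual optimum. Stationarity for \eqref{eq:svm-restricted-primal-lp} gives $w_\star = \bA^\T \alpha^\star$ with $\alpha^\star \propto (\bA\bA^\T)^{-1}\mathbf{1} = \diag(y)\bK^{-1}y$, so the sign of $\alpha^\star_i$ matches the sign of $y_i(\bK^{-1}y)_i$. Applying the Schur-complement / block-inverse identity to the partition of $\bK$ isolating row/column $i$ yields
\[
(\bK^{-1}y)_i = \frac{y_i - \bx_i^\T \bX_{\setminus i}^\T \bK_{\setminus i}^{-1} y_{\setminus i}}{s_i},
\]
with $s_i>0$ the Schur complement; multiplying by $y_i$ and using $y_i^2=1$ converts $\alpha^\star_i > 0$ into the inequality $y_i y_{\setminus i}^\T \bK_{\setminus i}^{-1}\bX_{\setminus i}\bx_i < 1$. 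The equality in (5) then follows from re-applying the $(3)\Leftrightarrow(4)$ formula to the leave-one-out affine plane $\bT_{\setminus i}$: one obtains $\projj{i} = \bX_{\setminus i}^\T \bK_{\setminus i}^{-1} y_{\setminus i}/(y_{\setminus i}^\T \bK_{\setminus i}^{-1} y_{\setminus i})$ and $\|\projj{i}\|_2^2 = 1/(y_{\setminus i}^\T \bK_{\setminus i}^{-1} y_{\setminus i})$, so that $\projj{i}^\T\bx_i / \|\projj{i}\|_2^2 = y_{\setminus i}^\T \bK_{\setminus i}^{-1}\bX_{\setminus i}\bx_i$ by direct substitution, producing exactly the asserted identity.

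The step I expect to be most delicate is the strict-versus-non-strict matching between (3)/(5) and (1)/(2)/(4): SVP a priori only requires $\alpha^\star \ge 0$ (all constraints tight), while (3) literally asks for $\alpha^\star$ in the \emph{interior} of $\R_+^n$ and (5) uses a strict inequality, so a degenerate configuration with a tight constraint but vanishing dual weight could in principle pull these conditions apart. Invertibility of $\bK$ is precisely what prevents this: it forces the dual optimum to be the unique point $\diag(y)\bK^{-1} y$ (up to positive scaling) and induces a clean primal-dual bijection, so the conditions collapse back together. Beyond this subtlety, the proof is essentially bookkeeping plus a single Schur-complement calculation.
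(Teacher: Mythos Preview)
Your argument for $(1)\Leftrightarrow(2)\Leftrightarrow(3)\Leftrightarrow(4)$ matches the paper's. The divergence is in how you reach (5). The paper's own contribution here is a \emph{geometric} proof of $(4)\Leftrightarrow(5)$: it writes $\bT$ as the affine span of $\bT_{\setminus i}$ together with the orthogonal direction $\bA_i-\Pi_{\bT_{\setminus i}}(\bA_i)$, then sequentially optimizes the projection over these two pieces to obtain the closed form
\[
a_i^{*}=\frac{\|\projj{i}\|_2^{2}-\bA_i^\T\projj{i}}{\|\bA_i-\Pi_{\bT_{\setminus i}}(\bA_i)\|_2^{2}},
\]
from which $a_i^{*}>0\Leftrightarrow y_i\projj{i}^\T\bx_i/\|\projj{i}\|_2^{2}<1$ is immediate; the equality with the Gram-matrix expression then comes from the same explicit formula $\projj{i}=\bX_{\setminus i}^\T\bK_{\setminus i}^{-1}y_{\setminus i}/(y_{\setminus i}^\T\bK_{\setminus i}^{-1}y_{\setminus i})$ that you derive. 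Your route instead goes $(3)\Rightarrow(5)$ algebraically via the Schur-complement identity for $(\bK^{-1}y)_i$---this is precisely the argument the paper attributes to Lemma~1 of \citet{hmx20}, which it cites and then supplements with the geometric derivation. Your path is shorter; the paper's buys an interpretation of the leave-one-out quantity as measuring where $\bA_i$ sits relative to $\projj{i}$, which feeds the $\ell_1$ geometric discussion later.

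One caution on the subtlety you flag: invertibility of $\bK$ does \emph{not} by itself rule out $\alpha_i^\star=0$. For instance, with $n=d=2$, $\bx_1=(2,0)$, $\bx_2=(1,1)$, $y=(1,1)$, one has $\bK$ invertible yet $\diag(y)\bK^{-1}y=(0,1/2)$; here SVP does occur (the interpolant $w_\star=(1/2,1/2)$ solves the SVM primal with both constraints tight), so (1), (2), (4) hold while (3) and (5) sit exactly on their boundaries. Your resolution therefore does not close the gap. That said, the paper's own proof of $(2)\Leftrightarrow(3)$ writes ``$\alpha\in\R_+^{n}$'' rather than the interior, so it glosses over the same measure-zero exception; the issue is with the proposition as literally stated, not with your overall strategy.
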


This dual framework in (3) and (4) gives an alternative geometric structure to consider for this problem.
For the $\ell_{2}$-case, this formulation draws from the fact that the separating hyperplane obtained from an SVM is represented as a linear combination of support vectors.
Although the $\ell_1$ case is not technically covered by Proposition~\ref{prop:equiv} (due to the non-strict convexity of $\ell_1$ norm), our analysis still gives useful insights, and we explore this case specifically in Section~\ref{sec:l1}. 

We prove Proposition~\ref{prop:equiv} in Appendix~\ref{app:prelim-proof}.
The equivalence between (1) and (5) in the $p=2$ case was proved by \citet[Lemma 1]{hmx20}.
Our alternative proof is based on establishing the equivalence of (4) and (5) and draws heavily from our geometric formulation of SVP.

\subsection{Data model}
We use the data model of \citet{hmx20}, where every labeled sample $(\bx_i, y_i)$ has $\bx_i$ drawn from an anisotropic subgaussian distribution with independent components and arbitrary fixed labels $y_i$.

\begin{definition}
    For some $\lambda\in \R_{\geq 0}^d$, we say $(\bX, y) \in \R^{n \times d} \times \flip^n$ (as well as $(\bX,\bZ,y)\in \R^{n \times d} \times \R^{n \times d} \times \flip^n$) is a \emph{$\lambda$-anisotropic subgaussian sample} if:
    $y = (y_1,\dots,y_n) \in \flip^n$ are fixed (non-random) labels;
    $\bZ := [\bz_1 |  \dots | \bz_n]^\T \in \R^{n \times d}$ is a matrix of independent $1$-subgaussian random variables with
    $\EEl{\bz_{i,j}} = 0$ and $\EEl{\bz_{i,j}^2} = 1$; and
    $\bX := [\bx_1 |  \dots | \bx_n]^\T = \bZ \  \diag(\lambda)^{1/2} \in \R^{n \times d}$.
    We say $(\bX,y)$ is an \emph{isotropic subgaussian sample} if it is $\lambda$-anisotropic for $\lambda = (1,\dotsc,1)$.
    Finally, we say $(\bX,y)$ is an \emph{isotropic Gaussian sample} if it is isotropic subgaussian and each $\bx_i \sim \mathcal{N}(0,I_d)$.
\end{definition}


%
%
We only consider fixed labels $y$ that do not depend on $\bX$. 
However, we do not consider this to be a major limitation of this work.
As discussed before, \citet{cover1965geometrical} shows that linear separability is overwhelmingly likely in the high-dimensional regimes we consider.
Moreover, our results can be extended to a setting where $\by_i = \sign{v^\T{\bx_i}}$ for some fixed weight vector $v$.

\subsection{Previous results}

We tighten and generalize the characterization of the SVP threshold by \citet{hmx20}. 
We give versions of their results that are directly comparable to our results in Sections~\ref{sec:nonasymp} and \ref{sec:asymptotic}.

\begin{theorem}
[Theorem~1 of \cite{hmx20}]
\label{thm:hmx-ub}
    Consider a $\lambda$-anisotropic subgaussian sample $(\bX, y)$ and any $\delta \in (0,1)$.
    If $d_{\infty} = \Omega(n \log (n) \log (\frac{1}{\delta}))$, then SVP occurs for $\ell_2$-SVM with probability at least $1 - \delta$.
\end{theorem}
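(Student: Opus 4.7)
The plan is to use characterization~(5) in Proposition~\ref{prop:equiv}, which reduces SVP for $\ell_2$-SVM (under the invertibility of $\bK$, which holds with overwhelming probability in our regime) to the event
$$ \bigcap_{i \in [n]} \left\{\, y_i y_{\setminus i}^\T \bK_{\setminus i}^{-1} \bX_{\setminus i} \bx_i < 1 \,\right\}. $$
Since $y_i \in \flip$, it suffices to bound $|y_i y_{\setminus i}^\T \bK_{\setminus i}^{-1} \bX_{\setminus i} \bx_i|$. I would apply a union bound over $i \in [n]$, thereby reducing the problem to showing, for a fixed $i$, that this quantity is strictly less than $1$ with probability at least $1 - \delta/n$. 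The crucial structural feature I would exploit is that $\bx_i$ is independent of everything appearing on its left in the expression.

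Conditioning on $\bX_{\setminus i}$, write $T_i := y_{\setminus i}^\T \bK_{\setminus i}^{-1} \bX_{\setminus i} \bx_i = \bv^\T \bx_i$, where $\bv := \bX_{\setminus i}^\T \bK_{\setminus i}^{-1} y_{\setminus i} \in \R^d$ depends only on $\bX_{\setminus i}$ and $y_{\setminus i}$. Because the coordinates of $\bx_i$ are independent mean-zero subgaussian with scale $\lambda_j^{1/2}$, the random variable $T_i$ is conditionally subgaussian with parameter $\bigl(\sum_j \lambda_j \bv_j^2\bigr)^{1/2} \le \norm[\infty]{\lambda}^{1/2} \norm[2]{\bv}$. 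A standard subgaussian tail bound then gives, with conditional probability at least $1 - \delta/(2n)$,
$$ |T_i| \;\le\; C \, \norm[\infty]{\lambda}^{1/2} \norm[2]{\bv} \sqrt{\log(n/\delta)}. $$
A quick identity using $\bK_{\setminus i} = \bX_{\setminus i}\bX_{\setminus i}^\T$ yields $\norm[2]{\bv}^2 = y_{\setminus i}^\T \bK_{\setminus i}^{-1} y_{\setminus i}$, so the whole problem collapses onto a quadratic form in $\bK_{\setminus i}^{-1}$.

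The main technical obstacle, and where the dimension requirement enters, is a spectral concentration bound of the form $\norml{\bK_{\setminus i} - \norm[1]{\lambda} I_{n-1}} \le \tfrac{1}{2}\norm[1]{\lambda}$ with probability at least $1 - \delta/(2n)$. Once established, this yields $\bK_{\setminus i}^{-1} \preceq 2/\norm[1]{\lambda} \cdot I$ and hence $\norm[2]{\bv}^2 \le 2(n-1)/\norm[1]{\lambda}$, giving
$$ |T_i| \;\le\; C'\, \sqrt{\frac{n \log(n/\delta)}{d_\infty}}, $$
which is strictly less than $1$ under the hypothesis $d_\infty = \Omega(n \log n \log(1/\delta))$. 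Proving the spectral bound is delicate: $\bK_{\setminus i}$ is a sum of outer products of subgaussian rows with non-isotropic covariance $\diag(\lambda)$, and one needs a Matrix Bernstein or Hanson--Wright style inequality whose dependence on the spectrum enters through $\norm[\infty]{\lambda}$ (equivalently $1/d_\infty$). The diagonal entries are sums of squares of independent subgaussians concentrating around $\norm[1]{\lambda}$, while the off-diagonal entries are mean-zero subexponential quadratic forms; bounding the operator norm of the off-diagonal part uniformly is where the extra factor of $\log n \log(1/\delta)$ in the dimension requirement enters.

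Combining the spectral bound with the conditional subgaussian bound on $T_i$, then taking a union bound over $i \in [n]$ together with a global high-probability event ensuring invertibility of every $\bK_{\setminus i}$, gives the conclusion: with probability at least $1 - \delta$, every training point is a support vector whenever $d_\infty = \Omega(n \log n \log(1/\delta))$.
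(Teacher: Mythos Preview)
Your proposal is correct and follows essentially the same approach as the original proof in \cite{hmx20} (which this paper cites rather than reproves): use characterization~(5) of Proposition~\ref{prop:equiv}, condition on $\bX_{\setminus i}$ to view the leave-one-out quantity as a subgaussian random variable whose variance is $y_{\setminus i}^\T \bK_{\setminus i}^{-1} y_{\setminus i}$, control that via the spectral concentration bound on $\bK_{\setminus i}$ (Lemma~\ref{lemma:gram-evals} here, Lemma~8 in \cite{hmx20}), and union bound over $i$. One small point of attribution: the spectral bound itself only needs $d_\infty \gtrsim n + \log(n/\delta)$ to yield $\bK_{\setminus i}^{-1} \preceq (2/\norm[1]{\lambda}) I$; the full $n\log n\log(1/\delta)$ requirement comes from forcing the subgaussian tail bound $C'\sqrt{n\log(n/\delta)/d_\infty}<1$ after the union bound, not from the operator-norm estimate.
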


\begin{theorem}
[Theorem~3 of \cite{hmx20}]
\label{thm:hmx-lb}
    Consider an isotropic Gaussian sample $(\bX, y)$.
    For some constant $\delta \in (0,1)$, if $d = O(n)$, then SVP occurs for $\ell_2$-SVM with probability at most $\delta$. 
\end{theorem}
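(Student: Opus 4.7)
My plan is to apply the leave-one-out characterization of SVP from Proposition~\ref{prop:equiv}(5) at a single index $i = 1$ and show that the associated inequality is violated with probability bounded away from zero when $d = O(n)$. Since SVP requires the inequality to hold at \emph{every} $i$, a single-index failure is enough to rule SVP out, which will give $\Pr[\text{SVP}] \le 1 - c$ for a constant $c > 0$ depending only on $C := d/n$, establishing the theorem with $\delta := 1 - c$.

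The key observation is that, writing
\[ u := y_1 \bX_{\setminus 1}^\T \bK_{\setminus 1}^{-1} y_{\setminus 1} \in \R^d, \qquad \xi := y_1 y_{\setminus 1}^\T \bK_{\setminus 1}^{-1} \bX_{\setminus 1} \bx_1 = u^\T \bx_1, \]
the vector $u$ depends only on $\bX_{\setminus 1}$ (the labels being nonrandom), while $\bx_1 \sim \mathcal{N}(0, I_d)$ is independent of $\bX_{\setminus 1}$. Hence $\xi \mid \bX_{\setminus 1} \sim \mathcal{N}(0, \|u\|^2)$. Using $y_1^2 = 1$ and $\bK_{\setminus 1} = \bX_{\setminus 1} \bX_{\setminus 1}^\T$, a direct computation collapses the middle factors to give $\|u\|^2 = y_{\setminus 1}^\T \bK_{\setminus 1}^{-1} y_{\setminus 1}$, and the Rayleigh quotient bound then yields $\|u\|^2 \ge \|y_{\setminus 1}\|^2 / \mu_{\max}(\bK_{\setminus 1}) = (n-1)/\mu_{\max}(\bK_{\setminus 1})$.

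To finish, I would invoke a standard Gaussian Wishart concentration bound (Davidson--Szarek) to obtain $\mu_{\max}(\bK_{\setminus 1}) \le (\sqrt{d} + \sqrt{n-1} + t)^2$ with probability at least $1 - 2e^{-t^2/2}$. Taking $d \le Cn$ and $t = \sqrt{n}$ yields an event $E$ of probability $\ge 1 - 2e^{-n/2}$ on which $\mu_{\max}(\bK_{\setminus 1}) \le C' n$, so $\|u\|^2 \ge c_0$ for a constant $c_0 = c_0(C) > 0$. On $E$ we therefore get $\Pr[\xi \ge 1 \mid \bX_{\setminus 1}] = \Phi(-1/\|u\|) \ge \Phi(-1/\sqrt{c_0}) > 0$, and integrating out $\bX_{\setminus 1}$ gives $\Pr[\xi \ge 1] \ge c > 0$. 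Since $\xi \ge 1$ violates Proposition~\ref{prop:equiv}(5) at $i=1$, SVP fails on this event, which is the desired conclusion.

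The argument is quantitatively routine; the main technical input is the upper-tail concentration of $\mu_{\max}(\bK_{\setminus 1})$, which is standard. The only minor subtlety is the boundary regime $d < n$, where $\bK_{\setminus 1}$ is rank-deficient and Proposition~\ref{prop:equiv} does not directly apply; there one can use the Gram matrix's rank deficiency (forcing dependencies among support vectors, and hence ruling out a unique SVP configuration) or simply absorb any finite set of small-$n$ exceptions by inflating $\delta$ closer to $1$.
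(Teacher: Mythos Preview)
Your proposal is correct and follows the same single-index leave-one-out approach that the paper attributes to \cite{hmx20} (see the discussion after the statement of Theorem~\ref{thm:anisotropic}): condition on $\bX_{\setminus 1}$ so that $y_1 y_{\setminus 1}^\T \bK_{\setminus 1}^{-1} \bX_{\setminus 1} \bx_1$ is Gaussian with variance $y_{\setminus 1}^\T \bK_{\setminus 1}^{-1} y_{\setminus 1}$, then lower-bound that variance by a constant via an operator-norm bound on $\bK_{\setminus 1}$. One small quibble: the $d<n$ edge case is not a ``small-$n$'' issue as you describe it, but it is still trivial, since then the interpolation system $\bX w = y$ is almost surely inconsistent for Gaussian $\bX$ and SVP cannot occur.
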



They note the logarithmic separation between Theorems~\ref{thm:hmx-ub} and \ref{thm:hmx-lb} and the limitations of the data model used in Theorem~\ref{thm:hmx-lb}. 
The authors pose an improvement in generality and asymptotic tightness to their lower-bound as an open problem, which we resolve in the subsequent sections.


\section{SVP threshold for anisotropic subgaussian samples}\label{sec:nonasymp}



We closely characterize when support vector proliferation does and does not occur through the following theorem, which serves as a converse to Theorem~\ref{thm:hmx-ub}.

\begin{restatable}[Lower-bound on SVP threshold for anisotropic subgaussians]{theorem}{thmanisotropic}\label{thm:anisotropic}
    Consider a $\lambda$-anisotropic subgaussian sample $(\bX,y)$ and any $\delta \in (0, \half)$.
	For absolute constants $\Ca, \Cb, \Cc, \Cd$, assume that $\lambda$ and $n$ satisfy
\begin{equation}\label{eq:3}
	  n \geq \Ca \paren{\log \frac{1}{\delta}}^2, \quad d_2 \leq \Cb n \log n, \quad
	  d_{\infty} \geq \Cc n \log\frac{1}{\delta}, \quad \text{and} \quad
	  d_{\infty}^2 \geq  \Cd d_2 n.
	 \end{equation}
	Then, SVP occurs for $\ell_2$-SVM with probability at most $\delta$.
\end{restatable}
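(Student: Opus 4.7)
The plan is to use Proposition~\ref{prop:equiv}(5) to reduce SVP failure to a collection of per-index anti-concentration events, tightly control the spectrum of the leave-one-out Gram matrix to pin down the scale of each event, apply Berry-Esseen to obtain per-index lower bounds on the probability of failure, and finally aggregate the per-index bounds into a global bound on the probability that SVP does not occur.

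First, by Proposition~\ref{prop:equiv}(5), SVP holds iff $y_i v_i^\T \bx_i < 1$ for every $i \in [n]$, where $v_i := \bX_{\setminus i}^\T \bK_{\setminus i}^{-1} y_{\setminus i}$. Conditioning on $\bX_{\setminus i}$ fixes $v_i$, so that $y_i v_i^\T \bx_i = \sum_j y_i v_{i,j}\sqrt{\lambda_j}\bz_{i,j}$ is a centered sum of independent $1$-subgaussian terms with conditional variance $\sigma_i^2 = v_i^\T \diag(\lambda) v_i$. Using standard subgaussian random-matrix concentration (matrix Bernstein / Vershynin-style covariance bounds), I would establish a high-probability event $E$ on which both $\bK_{\setminus i}$ and $\bX_{\setminus i}\diag(\lambda)\bX_{\setminus i}^\T$ concentrate around their expectations $\|\lambda\|_1 I$ and $\|\lambda\|_2^2 I$: the off-diagonal entries of the latter have standard deviation $\|\lambda\|_4^2 \le \|\lambda\|_\infty\|\lambda\|_2 = \|\lambda\|_2^2\sqrt{d_2/d_\infty^2}$, so the operator-norm fluctuation is of relative order $\sqrt{n\, d_2/d_\infty^2}$, which is small precisely because $d_{\infty}^2 \ge \Cd d_2 n$; analogously, $d_2 \gtrsim n$ (forced by the hypotheses together with $d_\infty \le d_2$) controls the deviation of $\bK_{\setminus i}$. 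Thus on $E$, $\sigma_i^2 \approx (n-1)\|\lambda\|_2^2/\|\lambda\|_1^2 = (n-1)/d_2$, uniformly in $i$.

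Second, on the event $E$ I would apply the Berry-Esseen theorem to conclude
\begin{equation*}
\mathbb{P}(y_i v_i^\T \bx_i \ge 1 \mid \bX_{\setminus i}) \;\ge\; \Phi(-1/\sigma_i) \;-\; C \cdot \frac{\max_j |v_{i,j}|\sqrt{\lambda_j}}{\sigma_i}.
\end{equation*}
The spectral estimate $v_i \approx \|\lambda\|_1^{-1} \bX_{\setminus i}^\T y_{\setminus i}$ together with subgaussian maximal inequalities controls $\max_j|v_{i,j}|\sqrt{\lambda_j}$; the condition $d_\infty \ge \Cc n \log(1/\delta)$ is exactly what is needed to make this Berry-Esseen remainder negligible relative to $\sigma_i \sim \sqrt{n/d_2}$. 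Since $d_2 \le \Cb n \log n$, the leading Gaussian-tail term satisfies $\Phi(-1/\sigma_i) \ge \Phi(-\sqrt{\Cb \log n}) \gtrsim n^{-\Cb/2}/\sqrt{\log n}$, giving a per-index failure probability $p := \mathbb{P}(y_i v_i^\T \bx_i \ge 1) \gtrsim n^{-\Cb/2}/\sqrt{\log n}$.

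The main obstacle is aggregating these per-index bounds into the global bound $\mathbb{P}(\text{SVP}) \le \delta$, because the events $A_i := \{y_i v_i^\T \bx_i \ge 1\}$ share randomness through the entire $\bX$. My plan is a second-moment (Chebyshev) argument applied to $N := \sum_i \mathbf{1}_{A_i}$, yielding $\mathbb{P}(N = 0) \le \mathrm{Var}(N)/\mathbb{E}[N]^2$. The first-moment bound $\mathbb{E}[N] \ge np$ is immediate from the previous step. I would control each covariance $\mathrm{Cov}(\mathbf{1}_{A_i},\mathbf{1}_{A_j})$ for $i \neq j$ by conditioning on $\bX_{\setminus\{i,j\}}$ (so that $\bx_i$ and $\bx_j$ are conditionally independent) and using a block Sherman--Morrison identity to show that $v_i$ depends only weakly on $\bx_j$, and vice versa; a bivariate Berry-Esseen argument then yields $\mathrm{Cov}(\mathbf{1}_{A_i},\mathbf{1}_{A_j}) = o(p^2)$. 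Combined with $n \ge \Ca(\log(1/\delta))^2$, this gives the target Chebyshev bound. Propagating the spectral error terms through the leave-two-out block-inversion identities tightly enough to match the $n\log n$ upper bound of~\cite{hmx20} is the technically most intricate step, and is where I expect the bulk of the proof's careful bookkeeping to lie.
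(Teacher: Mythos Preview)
Your reduction via Proposition~\ref{prop:equiv}(5), the conditioning on $\bX_{\setminus i}$, the spectral control of the Gram matrix, and the use of Berry--Esseen to obtain per-index anti-concentration are all close in spirit to the paper. The substantive divergence is in the aggregation step, and there your proposal has a genuine gap.

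The Chebyshev bound $\pr{N=0}\le\mathrm{Var}(N)/\EE{N}^2$ cannot reach the target $\delta$ under the stated hypotheses. Even granting $\mathrm{Cov}(\mathbf{1}_{A_i},\mathbf{1}_{A_j})=o(p^2)$ uniformly, the diagonal contribution alone forces $\mathrm{Var}(N)/\EE{N}^2\ge(1-p)/(np)$. At the edge of the allowed regime, $d_2=\Cb n\log n$, your own estimate gives $p$ of order $n^{-\Cb/2}/\sqrt{\log n}$, so $1/(np)$ is only \emph{polynomially} small in $n$. But the theorem allows $\delta$ as small as $e^{-c\sqrt n}$: that is precisely what the hypothesis $n\ge\Ca(\log\tfrac1\delta)^2$ permits. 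No amount of covariance bookkeeping (leave-two-out, bivariate Berry--Esseen, etc.) can repair this, because the $1/(np)$ floor is intrinsic to second-moment methods; to get a bound that decays like $(1-p')^m$ rather than $1/(mp')$, you must manufacture genuine independence somewhere.

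The paper does exactly this via a three-term decomposition. Fix $m=\lceil\exp(d_2/(2\Cb n))\rceil\le\sqrt n$ and, for $i\in[m]$, write
\[
y_i y_{\setminus i}^\T\bK_{\setminus i}^{-1}\bX_{\setminus i}\bx_i
= y_i y_{\setminus i}^\T\Bigl(\bK_{\setminus i}^{-1}-\tfrac{1}{\norm[1]{\lambda}}I\Bigr)\bX_{\setminus i}\bx_i
+ \tfrac{1}{\norm[1]{\lambda}}\,y_i y_{[m]\setminus i}^\T\bX_{[m]\setminus i}\bx_i
+ \tfrac{1}{\norm[1]{\lambda}}\,y_i y_{\setminus[m]}^\T\bX_{\setminus[m]}\bx_i.
\]
The first two terms are uniformly small over $i\in[m]$ by the spectral and subexponential concentration you already sketch. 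The third terms, however, are \emph{conditionally independent across $i\in[m]$} given $\bX_{\setminus[m]}$, since each involves only $\bx_i$ and the held-out block $\bX_{\setminus[m]}$. Berry--Esseen (applied conditionally on $\bX_{\setminus[m]}$) shows each exceeds a fixed level with conditional probability at least $c/\sqrt m$, so the probability that none of them does is at most $(1-c/\sqrt m)^m\le e^{-c\sqrt m}$, which is $\le\delta/3$ once $m\ge(\log\tfrac3\delta)^2$. The hypothesis $n\ge\Ca(\log\tfrac1\delta)^2$ is used exactly here, to ensure $m$ can be taken this large.
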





\begin{remark}
    If each $\bx_i$ is drawn from a Gaussian distribution, then we could instead permit $\bx_i$ to have any positive semi-definite covariance matrix $\Sigma \in \R^{d \times d}$ with eigenvalues $\lambda_1, \dots, \lambda_d$ due to rotational invariance.
\end{remark}
\begin{remark}
    In addition, the result can be generalized to subgaussian $\bx_i$ with general variance proxies $\gamma\geq1$. We present the current version for the sake of simplicity and note that the generalization is straightforward. 
\end{remark}

In the case where $(\bX,y)$ is an isotropic subgaussian sample, Theorem~\ref{thm:anisotropic} and Theorem~\ref{thm:hmx-ub} (from \cite{hmx20}) together establish that the threshold for SVP occurs at $d = \Theta(n \log n)$. 
Theorem~\ref{thm:anisotropic} sharpens and generalizes the partial converse of \cite{hmx20} given in Theorem~\ref{thm:hmx-lb}.

Theorem~\ref{thm:anisotropic} does not depend explicitly on the ambient dimension $d$; instead, it only involves the effective dimension proxies $d_2$ and $d_\infty$, which can be finite even if $d$ is infinite.
Thus, the result readily extends to infinite-dimensional Hilbert spaces.

We prove the theorem in Appendix~\ref{app:lemmas} and briefly summarize the techniques here.
By Proposition~\ref{prop:equiv}, it suffices to show that with probability $1 - \delta$, $\bK$ is invertible and
\begin{equation}\label{eq:1}
\max_{i \in [n]} y_i y_{\setminus i}^\T \bK_{\setminus i}^{-1} \bX_{\setminus i} \bx_i \geq 1,
\end{equation}
where $\bK_{\setminus i} := \bX_{\setminus i} \bX_{\setminus i}^\T$.
This same equivalence underlies the proof of Theorem~\ref{thm:hmx-lb} in \cite{hmx20}.
However, their application of this equivalence is limited because they avoid issues of dependence between random variables by instead lower-bounding the probability that $y_1 y_{\setminus 1}^\T \bK_{\setminus 1}^{-1} \bX_{\setminus 1} \bx_1 \geq 1$.
This forces their bound to hold only when $d = O(n)$.
We obtain a tighter bound by separating the first $m$ samples (denoted $\bX_{[m]}$) for some carefully chosen $m$ and relating the term to the maximum of $m$ independent random variables.
To do so, we lower-bound the left-hand side of \eqref{eq:1} with the following decomposition:
\[\max_{i \in [m]} \bracket{y_i y_{\setminus i}^\T \paren{\bK_{\setminus i}^{-1}-\frac{1}{\norm[1]{\lambda}} I_{n-1}} \bX_{\setminus i} \bx_i + \frac{1}{\norm[1]{\lambda}}y_i y_{[m] \setminus i}^\T \bX_{[m] \setminus i} \bx_i + \frac{1}{\norm[1]{\lambda}}y_i y_{\setminus [m]}^\T \bX_{\setminus [m]} \bx_i}.\]

We prove that this decomposition (and hence, also \eqref{eq:1}) is at least 1 with probability $1-\delta$ by lower-bounding the three terms with Lemmas~\ref{lemma:e1}, \ref{lemma:e2}, and \ref{lemma:e3} (given in Appendix~\ref{app:lemmas}).
We bound the first two terms for all $i\in [m]$ by employing standard concentration bounds for subgaussian and subexponential random variables and by tightly controlling the spectrum of $\bK_{\setminus i}$.
To bound the third term, we relate the quantity for each $i \in [m]$ to an independent univariate Gaussian with the Berry-Esseen theorem and apply standard lower-bounds on the maximum of $m$ independent Gaussians.


The assumptions in \eqref{eq:3} are all intuitive and necessary for our arguments.
The first assumption ensures that enough samples are drawn for high-probability concentration bounds to exist over collections of $n$ variables.
The second assumption guarantees the sub-sample size $m$ is sufficiently large to have predictable statistical properties; this is asymptotically tight with its counterpart in Theorem~\ref{thm:hmx-ub} up to a factor of $\log \frac{1}{\delta}$.
The third ensures that the variance of each $y_i y_{\setminus i}^\T \bK_{\setminus i}^{-1} \bX_{\setminus i} \bx_i$ term is sufficiently small.
The fourth assumption rules out $\lambda$-anisotropic subgaussian distributions with $\norml[2]{\lambda}^2 \ll \norml[\infty]{\lambda}^2 n$, where a single component of each $\bx_i$ is disproportionately large relative to others and causes unfavorable anti-concentration properties.


\section{Exact asymptotic threshold for Gaussian samples}\label{sec:asymptotic}

Section~\ref{sec:nonasymp} shows the existence of a change in model behavior when $d = \Theta(n \log n)$ without identifying a precise threshold where this phase transition appears.
Here, we refine that analysis for the isotropic Gaussian case to find such an exact threshold. 
That is, if $d = 2\tau n \log n$, as $n$ becomes large, SVP will occur when $\tau > 1$ and will not occur when $\tau < 1$. Roughly speaking, this phenomenon stems from the fact that terms in \eqref{eq:1} are weakly correlated, which causes \eqref{eq:1} to behave similarly to a maximum of independent Gaussians.
Furthermore, we characterize the rate at which the phase transition sharpens.
The following theorem shows that if the convergence $\tau \to 1$ is slow enough, then the asymptotic probabilities of SVP are degenerate and the width of the transition is bounded.


\begin{restatable}[Sharp SVP phase transition]{theorem}{thmasymptotic}\label{thm:asymptotic}
  Let $(\bX,y)$ be an isotropic Gaussian sample.
  Let $(\eps_{n})_{n \ge 1}$ be any sequence of positive real numbers such that $\limsup_{n\to\infty} \eps_n < 2-c_1$ for some $c_1>0$ and $\liminf_{n \to \infty}{\eps_{n}\sqrt{\log{n}}} > C_2$ for some $C_2>0$ depending only on $c_1$.
  Then,
  \begin{equation*}
    \lim_{n \to \infty} \prl{\text{SVP occurs for $\ell_{2}$-SVM}} =
    \begin{cases}
      0 & \text{if $d = (2 - \eps_{n}) n \ln n $} , \\
      1 & \text{if $d = (2 + \eps_{n}) n \ln n $} .
    \end{cases}
  \end{equation*}
\end{restatable}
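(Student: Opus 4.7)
The plan is to invoke property~(5) of Proposition~\ref{prop:equiv} to reduce SVP to the single scalar event
\[M_n := \max_{i \in [n]} \xi_i < 1, \qquad \xi_i := y_i\, y_{\setminus i}^\T \bK_{\setminus i}^{-1} \bX_{\setminus i} \bx_i,\]
and then analyze $M_n$ as the maximum of $n$ weakly correlated Gaussians. The crucial observation for the isotropic Gaussian model is that $\bx_i$ is independent of $\bX_{\setminus i}$, so conditionally on $\bX_{\setminus i}$ the scalar $\xi_i$ is exactly $\mathcal{N}(0, \sigma_i^2)$ with
\[\sigma_i^2 = y_{\setminus i}^\T \bK_{\setminus i}^{-1} \bX_{\setminus i} \bX_{\setminus i}^\T \bK_{\setminus i}^{-1} y_{\setminus i} = y_{\setminus i}^\T \bK_{\setminus i}^{-1} y_{\setminus i}.\]
All remaining randomness enters through the scalars $\sigma_i^2$, which I control via Wishart spectral concentration.

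The second step is to show $\sigma_i^2$ concentrates tightly around $(n-1)/d$. In the regime $d \gg n$, standard non-asymptotic operator-norm bounds on the Wishart matrix $\bK_{\setminus i}$ yield $\mu_{\min}(\bK_{\setminus i}), \mu_{\max}(\bK_{\setminus i}) \in d\paren{1 \pm O(\sqrt{n/d})}$ uniformly in $i$ with probability $1-o(1)$. Combined with $\|y_{\setminus i}\|_2^2 = n-1$,
\[\sigma_i^2 = \frac{n-1}{d}\paren{1 + O(1/\sqrt{\ln n})}, \qquad \frac{1}{2\sigma_i^2} = (1 \mp \eps_n/2)\ln n \pm O(\sqrt{\ln n}),\]
where the last equality uses $d = (2 \pm \eps_n) n \ln n$. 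This is precisely where the $\sqrt{\ln n}$ rate in the hypothesis arises: the spectral error of order $\sqrt{\ln n}$ inside the Gaussian exponent swamps any $\eps_n$ smaller than $\Theta(1/\sqrt{\ln n})$.

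For the SVP direction $d = (2 + \eps_n) n \ln n$, a union bound suffices. The standard Gaussian tail $\Pr[\xi_i \geq 1 \mid \bX_{\setminus i}] \leq e^{-1/(2\sigma_i^2)}$ combined with the spectral estimate gives $\Pr[\xi_i \geq 1 \mid \bX_{\setminus i}] \leq n^{-(1+\eps_n/2)} e^{O(\sqrt{\ln n})}$, and summing over $i$,
\[\Pr[M_n \geq 1] \leq n \cdot n^{-(1+\eps_n/2)} e^{O(\sqrt{\ln n})} = \exp\paren{-\tfrac{\eps_n}{2}\ln n + O(\sqrt{\ln n})} \to 0\]
under the hypothesis, after discarding the $o(1)$-event on which spectral concentration fails.

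The non-SVP direction $d = (2 - \eps_n) n \ln n$ is the hard direction, since a union bound cannot force a maximum to be large. I would apply a Paley--Zygmund second-moment argument to the exceedance count $N := \sum_i \mathbf{1}[\xi_i \geq 1]$. The Mills-ratio lower bound plus the spectral estimate give $\mathbb{E}[N] \geq c\, n^{\eps_n/2}/\sqrt{\ln n} \cdot e^{-O(\sqrt{\ln n})} \to \infty$ under the hypothesis. The core technical obstacle, and the step I expect to be hardest, is controlling $\Pr[\xi_i \geq 1, \xi_k \geq 1]$ for $i \neq k$: conditionally on $\bX_{\setminus \{i,k\}}$, $(\xi_i, \xi_k)$ is approximately bivariate Gaussian, and a direct moment calculation shows the cross-covariance is of order $1/d$ while each marginal variance is of order $(n-1)/d$, so the correlation is $O(1/n)$. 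This correlation is small enough for a bivariate-Gaussian tail estimate to yield $\Pr[\xi_i \geq 1, \xi_k \geq 1] \leq (1+o(1))\Pr[\xi_i \geq 1]^2$, hence $\mathbb{E}[N^2] = \mathbb{E}[N] + (1+o(1))\mathbb{E}[N]^2$ and Paley--Zygmund gives $\Pr[N \geq 1] \to 1$. The main difficulty is making the pairwise tail estimate tight enough to fit inside the $\eps_n \sqrt{\ln n}$ budget in both the spectral approximation and the bivariate Gaussian comparison.
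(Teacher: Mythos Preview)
Your upper-bound direction ($d = (2+\eps_n)n\ln n$) matches the paper's proof essentially line for line: condition on $\bX_{\setminus i}$, use spectral concentration to pin down $\sigma_i^2 = y_{\setminus i}^\T\bK_{\setminus i}^{-1}y_{\setminus i}$, apply the Mills-ratio tail, and union-bound over $i\in[n]$.

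The lower-bound direction is where you diverge. The paper does \emph{not} use a second-moment argument; it recycles the three-term decomposition from Theorem~\ref{thm:anisotropic}. Setting $m = n/\log n$, it isolates the term $\frac{1}{d}\,y_i\,y_{\setminus[m]}^\T \bX_{\setminus[m]}\bx_i$ for $i\in[m]$. Conditionally on $\bX_{\setminus[m]}$, these $m$ scalars are \emph{exactly} i.i.d.\ $\mathcal{N}(0,\sigma^2)$ with a common $\sigma^2 = d^{-2}\norm[2]{y_{\setminus[m]}^\T\bX_{\setminus[m]}}^2$, so genuine independence is manufactured for free and the limiting Gumbel law for $\max_{i\in[m]}\bg_i$ applies directly. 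The other two terms in the decomposition (the $\bK_{\setminus i}^{-1}-d^{-1}I$ correction and the within-$[m]$ cross term) are shown to be $O(\eps_n)$ via Lemmas~\ref{lemma:e1}--\ref{lemma:e2}. This sidesteps entirely the pairwise step you flag as hardest.

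Your Paley--Zygmund route should also go through and gives the same $\eps_n\sqrt{\ln n}$ window, but the bivariate approximation is more delicate than you indicate: conditionally on $\bX_{\setminus\{i,k\}}$, the pair $(\xi_i,\xi_k)$ is \emph{not} jointly Gaussian, since $\xi_i$ depends on $\bx_k$ both through the bilinear piece $d^{-1}y_iy_k\,\bx_k^\T\bx_i$ and through the rank-one perturbation of $\bK_{\setminus i}^{-1}$ (Sherman--Morrison). You would have to peel off these non-Gaussian contributions and verify they do not inflate the joint tail $\Pr[\xi_i\ge 1,\xi_k\ge 1]$ beyond the $(1+o(1))$ factor your second-moment bound can absorb. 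The paper's decomposition is, in effect, a device for avoiding exactly this computation; your approach trades that structural trick for a heavier (but more standard) correlation analysis.
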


\begin{remark}\label{rmk:transition-width}
Theorem~\ref{thm:asymptotic} characterizes the width of the phase transition: the difference $w_n$ between the values of $d$ where the probability of SVP is (say) $0.9$ and $0.1$ satisfies $w_n = O(n \sqrt{\log{n}})$.
\end{remark}


It remains an open problem to determine if this transition width estimate is sharp.
Specifically, the bound can be sharpened by exhibiting some sequence $\eps_n$ for which the asymptotic probability of support-vector proliferation is non-degenerate.

The proof of Theorem~\ref{thm:asymptotic} is given in Appendix~\ref{app:asymp-proofs}.
In the case where $d = (2  - \eps_n) n \ln n$, the proof mirrors that of Theorem~\ref{thm:anisotropic}, but deviates in the final step by using the limiting distribution of the maximum of independent Gaussians.
When $d = (2 + \eps_n) n \ln n$, we follow the basic argument in the proof of Theorem~\ref{thm:hmx-ub} from \cite{hmx20}, but we sharpen the analysis by taking advantage of Gaussianity to find the limiting probability as $n\to\infty$.


\section{Experimental validation of SVP phase transition and universality}\label{sec:experiments}

While Theorem~\ref{thm:asymptotic} identifies the exact SVP  phase transition for only isotropic Gaussian samples, we demonstrate experimentally that a similarly sharp cutoff occurs for a broader category of data distributions.
These experiments suggest that the phase transition phenomenon extends beyond the distributions with independent subgaussian components considered in Theorem~\ref{thm:anisotropic}, and that it occurs at the same location ($d = 2 n \log n$), with the transition sharpening as $n \to \infty$.

%
%

Our simulation procedure is as follows.
We generate data sets $(\bX,y) \in \R^{n \times d} \times \flip^n$, where $y \in \flip^n$ is a fixed vector of labels with exactly $n/2$ positive labels, and $\bx_1,\dotsc,\bx_n \sim_{\text{i.i.d.}} \distr^{\otimes d}$ where $\distr$ is one of six sample distributions on $\R$.
For each data set, we check for SVP by solving the problem in~\eqref{eq:svm-restricted-primal-lp}, and checking if its solution additionally satisfies the constraints from~\eqref{eq:svm-primal-lp}.
Let $\hat\bp \coloneqq \hat\bp(n,d; \distr,M)$ denote the observed frequency of SVP in $M=400$ independent trials when $\bX \in \R^{n \times d}$ is generated using $\distr$.
(Full details are given in Appendix~\ref{app:exp-procedure}.)

Figure~\ref{fig:heatmap-different-distributions} shows a heat map of $\hat\bp(n,d; \distr,M)$ with $M=400$.
The striking similarity across the distributions suggests that SVP is a universal phenomenon for a broad class of sample distributions that vary qualitatively in different aspects: biased vs.\ unbiased, continuous vs.\ discrete, bounded vs.\ unbounded, and subgaussian vs.\ non-subgaussian.
Moreover, the boundary at which the sharp transition occurs is visibly indistinguishable across the different sample distributions.

\begin{figure}[t]
    \centering
    \includegraphics[scale = 0.8]{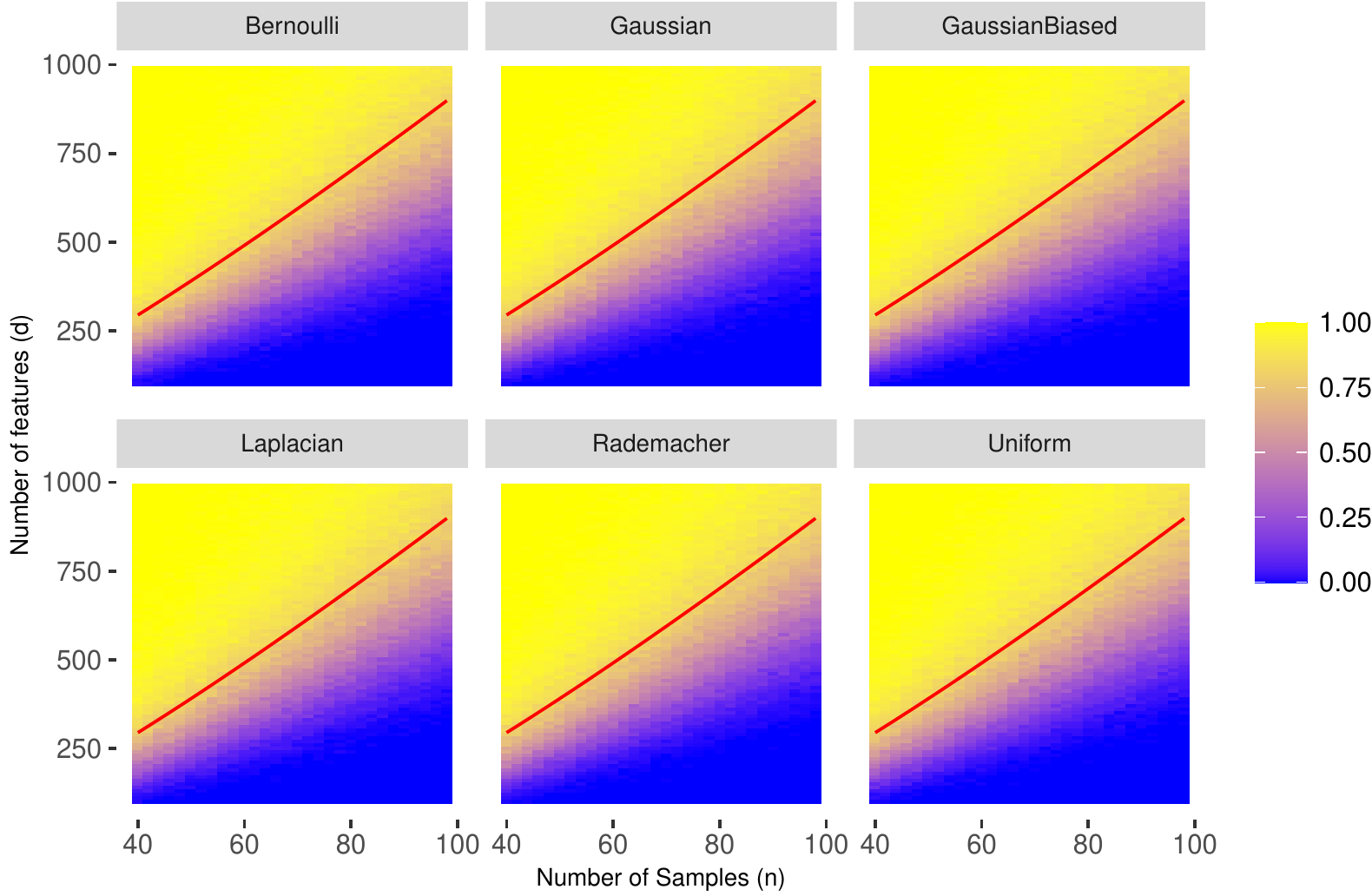}
    \caption{%
    \label{fig:heatmap-different-distributions}%
    The fraction of $M = 400$ trials where support vector proliferation occurs for $n$ samples, $d$ features, and six different sample distributions $\distr$.
    All distributions demonstrate sharp phase transitions near the theoretical boundary $n \mapsto 2n \log n$, illustrated by the red curve.} 
\end{figure}


We also investigate the universality of SVP using statistical methodology inspired by \citet{donoho2009observed}.
Specifically, we model $\hat\bp$ using Probit regression to test our \emph{universality hypothesis}: that the occurrence of SVP for $\ell_2$-SVM on data with independent features matches the behavior under Gaussian features as $n$ and $d$ grow large.
Our model is
$M \cdot \hat\bp \sim \Binomial(p(n,d;\distr), M)$,
where
\begin{align*}
  p(n,d ; \distr)
  & = \Phi\paren{ \mu^{(0)}(n,\distr) + \mu^{(1)}(n, \distr) \times \tau + \mu^{(2)}(n,\distr) \times \log \tau  } \\
  \text{with} \quad
  \mu^{(i)}(n, \distr)
  & = \mu^{(i)}_{0}(\distr) + \frac{\mu^{(i)}_{1}(\distr)}{ \sqrt{n} } .
\end{align*}
Here,
$\Phi(t)$ is the standard normal distribution function,
$\tau = d / (2n \log n)$,
and the model parameters are $\mu^{(i)}_0(\distr)$ and $\mu^{(i)}_1(\distr)$ for $i \in \{0,1,2\}$ and the six different distributions $\distr$ (shown in Table~\ref{tab:dist} in Appendix~\ref{app:exp-procedure}).
Figure~\ref{fig:constant-slice} visualizes the fitted Probit function $p$ for fixed $n$ and $\tau$ and demonstrates that the model provides a very accurate approximation of $\hat{\mathbf{p}}$. 

The universality hypothesis corresponds to the model in which the parameters $\mu^{(i)}_0(\distr)$ are ``tied together'' (i.e., forced to be the same) for all distributions $\distr$.
That is, only the parameters scaled down by a factor of $\sqrt{n}$, $\mu^{(i)}_1(\distr)$, are allowed to vary with $\distr$.
The scaling ensures that their effect tends to zero as $n \to \infty$.
The alternative (non-universality) hypothesis corresponds to the model in which all parameters
(both $\mu^{(i)}_0(n,\distr)$ and $\mu^{(i)}_1(n,\distr)$ for each $i$) are allowed to vary with $\distr$.
We compare the models' goodness-of-fit using analysis of deviance~\cite{hastie2017generalized}.
Our main finding is that the experimental data are consistent with the universality hypothesis (and also that we can reject a null hypothesis in which all parameters are ``tied together'' for all $\distr$).
The details and model diagnostics are given in Appendix~\ref{app-sub:universality}.

\begin{figure}[t]
    \centering
    \begin{tabular}{cc}
    \includegraphics[width=0.48\textwidth]{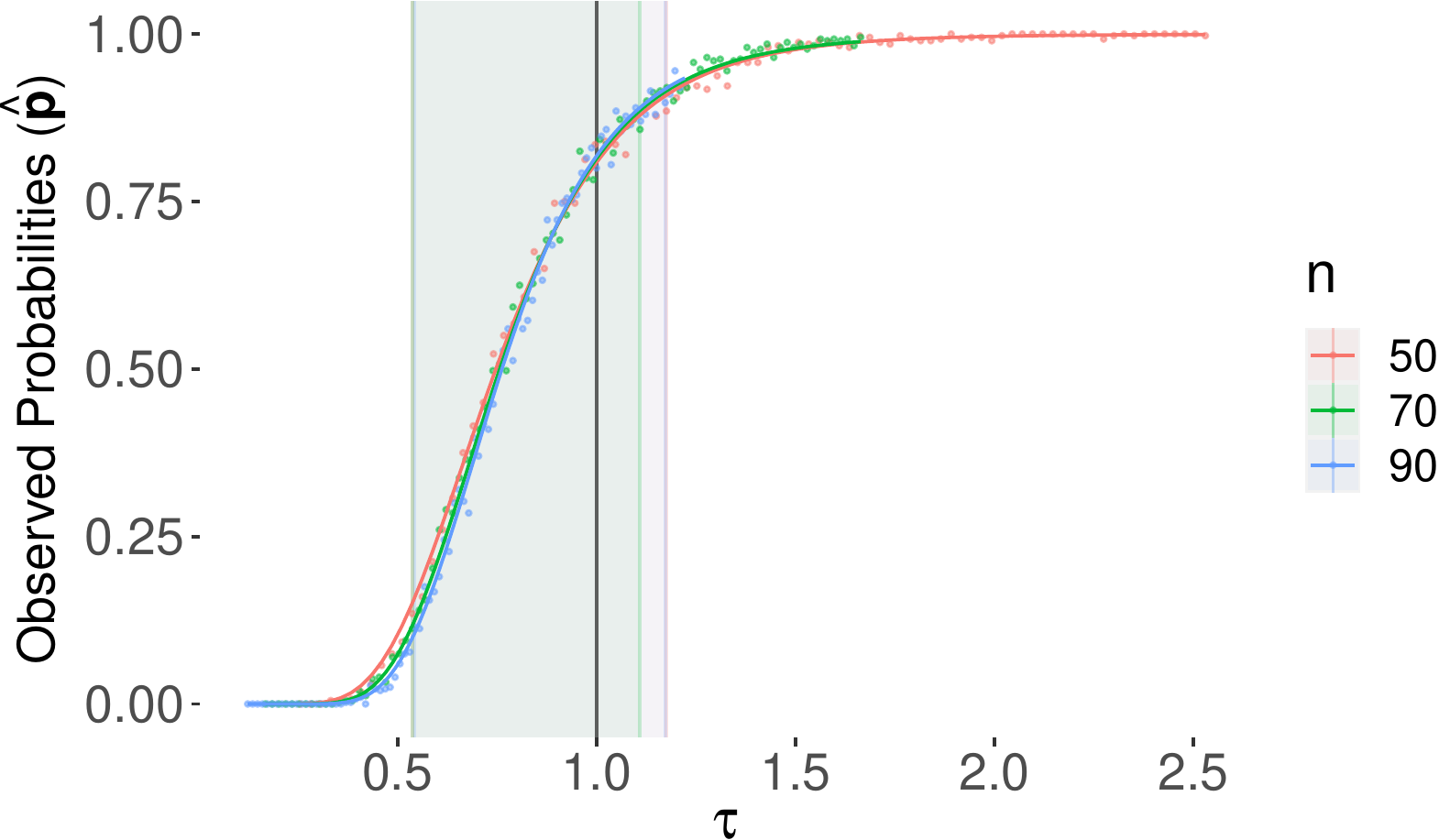} &
    \includegraphics[width=0.48\textwidth]{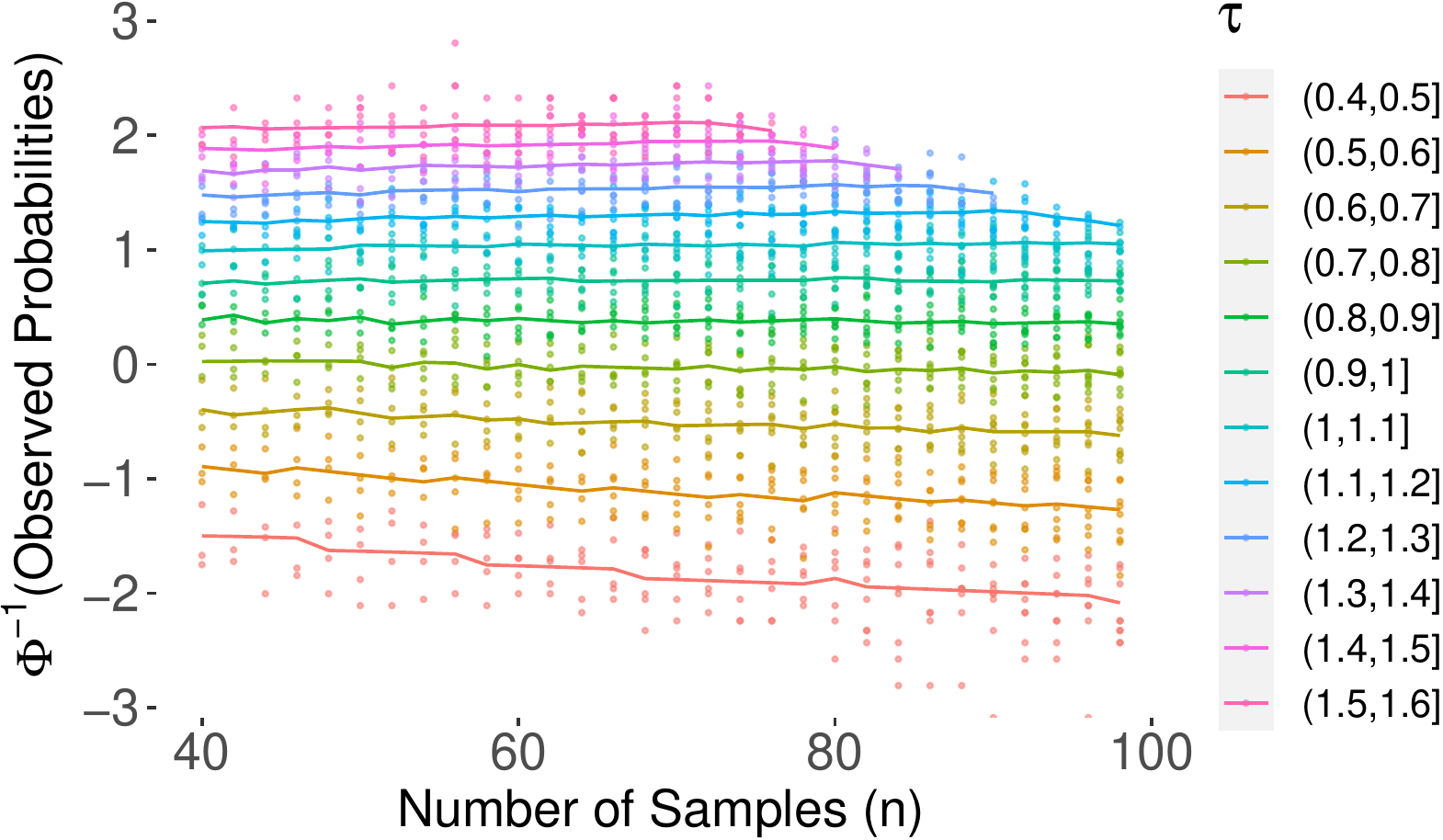}
    \\
    (a) & (b)
    \end{tabular}
    \caption{%
    \label{fig:constant-slice}%
    Visualizations of SVP frequencies for constant slices of $n$ and $\tau$ for $d = 2\tau n \log n$.
    \emph{Left panel (a):}
    The points are $(\tau,\hat\bp)$ from the Gaussian samples, for fixing $n \in \{50,70,90\}$.
    The black vertical line corresponds to $\tau = 1$.
    The Probit model's predictions are overlaid, and shaded regions correspond to $\tau$ for which the model's predicted probabilities are between $0.1$ and $0.9$.
    \emph{Right panel (b):}
    The points show $(n,\Phi^{-1}(\hat\bp))$ from a Gaussian distribution, fixing $\tau$ to lie in one of 12 different intervals.
    The Probit model's predictions (averaged over all $\tau$ within an interval) are overlaid.
}
\end{figure}



Finally, in Appendix~\ref{app-sub:width},
we provide empirical support for the generality of Remark~\ref{rmk:transition-width}, namely that the transition width is roughly $n \sqrt{\log n}$ for data models other than Gaussian ensembles.

\section{SVP phase transition for $\ell_1$-SVMs?}\label{sec:l1}
Because both the SVM and linear regression problems can be formulated for general $\ell_p$-norms, we can ask similar questions about when their solutions coincide.
Here, we examine the $\ell_1$ case: the coincidence of SVM with an $\ell_1$-penalty and $\ell_1$-norm minimizing interpolation (also called Basis Pursuit~\cite{chen1994basis}).
Linear models with $\ell_1$ regularization are often motivated by the desire for sparse weight vectors~\cite[e.g.,][]{chen1994basis,tibshirani1996regression,ng2004feature,muthukumar2020harmless}.

Based on experimental evidence and the differences in high-dimensional geometry between $\ell_{\infty}$ and $\ell_2$ balls, we conjecture that SVP for $\ell_1$-SVMs only occurs in a much higher-dimensional regime.
\begin{conjecture}\label{main-conjecture}
  Let $(\bX,y) \in \R^{n \times d} \times \R^n$ be an isotropic Gaussian sample. 
Then, the probability of SVP occurring for an $\ell_{1}$-SVM with $\bX$ and $y$ undergoes a phase transition around $d = f(n)$, for some $f(n) = \omega(n \log n)$. Formally, there exist positive constants $c$ and $c'$ with $c \leq c'$ such that
\[
\lim_{n \to \infty} \pr{ \text{SVP occurs for $\ell_{1}$-SVM} } = \begin{cases}
  0  &  \text{if $d < c f(n)$} , \\
  1 &  \text{if $d > c' f(n)$} .
\end{cases} \]
\end{conjecture}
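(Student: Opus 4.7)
The plan is to first establish an $\ell_1$ analogue of Proposition~\ref{prop:equiv}. By LP duality, $\ell_1$-SVP is equivalent to the existence of a strictly positive optimal solution $\alpha \in \R_+^n$ to the LP $\max \sum_i \alpha_i$ subject to $\|\bA^\T \alpha\|_\infty \leq 1$; geometrically, the set of $\ell_\infty$-projections of the origin onto $\bT$ must intersect $\bT^+$. Because the $\ell_\infty$ ball is not strictly convex, this projection is set-valued rather than a singleton, so the closed-form leave-one-out criterion in part~(5) of Proposition~\ref{prop:equiv} has no direct $\ell_1$ analogue and must be replaced by an LP dual-witness argument.

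For the upper bound (existence of $c'$), I would adapt the leave-one-out strategy of \citet{hmx20}. For each $i \in [n]$, solve the $\ell_1$-interpolation LP on $\bX_{\setminus i}$ to obtain a dual witness $\alpha_{\setminus i}$, then perturb it by adding a small positive weight on coordinate $i$ and verify that the perturbed $\alpha$ remains dual-feasible. For isotropic Gaussian features this reduces to controlling $\|\bA^\T \alpha\|_\infty$, which concentrates like the supremum of $d$ coupled Gaussians; standard suprema-of-Gaussian concentration combined with careful perturbation analysis should certify feasibility once $d$ exceeds $c' f(n)$ for the appropriate $f$.

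The main obstacle is the lower bound, which is where the conjectured strict $\omega(n \log n)$ separation from the $\ell_2$ case must come from. The proofs of Theorems~\ref{thm:anisotropic} and~\ref{thm:asymptotic} rely on the closed-form expression $y_i y_{\setminus i}^\T \bK_{\setminus i}^{-1} \bX_{\setminus i} \bx_i$ together with Berry--Esseen anti-concentration, neither of which transfers to $\ell_1$: the $\ell_1$-interpolation solution is sparse (at most $n$ nonzero coordinates by basic-feasible-solution theory) and depends on $\bX$ combinatorially through its support $S \subseteq [d]$. My plan is to condition on $S$, reducing the problem to the geometry of the random $n \times |S|$ submatrix whose columns are indexed by $S$, and then to invoke the statistical-dimension framework of \citet{almt14} together with the random-polytope face counts of \citet{donoho2009counting}. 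Concretely, $\ell_1$-SVP should fail iff a certain descent cone at the optimal dual solution avoids $\R_+^n$, and the Approximate Kinematic Formula should convert a Gaussian-width estimate of this cone into the desired phase-transition threshold $f(n)$.

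The central technical difficulty is controlling the joint distribution of the random support $S$ and the off-support columns, since $S$ depends intricately on $\bX$. A standard device in the compressed-sensing literature is to exploit rotational invariance of Gaussian ensembles to argue that, conditioned on $S$ and the KKT multipliers, the off-support columns behave as conditionally independent Gaussians, enabling a union bound over the $\binom{d}{|S|}$-many possible supports. Making this rigorous and extracting the precise rate $f(n) = \omega(n \log n)$ is the crux of the conjecture; I expect the argument will hinge on delicately balancing the combinatorial complexity of the support against the anti-concentration of the residual dual multipliers, and it is plausible that entirely new tools beyond those in Section~\ref{sec:nonasymp} will be required.
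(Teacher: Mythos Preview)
The statement you are trying to prove is a \emph{conjecture}; the paper does not prove it and explicitly leaves it open. What the paper actually offers in support of Conjecture~\ref{main-conjecture} is (i) the simulation evidence in Figure~\ref{fig:l1-l2-compare}, (ii) a geometric reformulation (Lemma~\ref{lem:l1-optimality}) showing that an optimal $\alpha^*$ must be orthogonal to the facet of the random polytope $\mathcal{C}=\operatorname{Conv}\{\pm \bA_{\cdot,i}\}$ hit by the ray through $\mathbf{1}$, and (iii) a very weak partial result, Theorem~\ref{thm:weak-ub}, which only shows that SVP fails with high probability when $d<Cn$ for some constant $C\approx 1.29$, obtained by union-bounding over the at most $\binom{d}{n}$ facets of $\mathcal{C}$ and using rotational invariance. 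The paper itself calls this bound ``very loose'' and notes that the union bound is the source of the slack; it does not even recover the $\ell_2$ rate $n\log n$, let alone establish a strictly larger threshold.

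So your proposal should be read not as an alternative proof but as an alternative \emph{attack plan} on an open problem. In that light: your upper-bound sketch via a leave-one-out dual witness is reasonable in spirit, but the step ``verify that the perturbed $\alpha$ remains dual-feasible'' hides the entire difficulty, since adding positive mass on coordinate $i$ changes $\bA^\T\alpha$ in all $d$ coordinates simultaneously and the $\ell_\infty$ constraint gives no slack on the active coordinates of the LP solution. Your lower-bound plan --- condition on the support $S$, then apply the \citet{almt14} kinematic formula --- runs into exactly the obstacle the paper identifies: the support $S$ is determined by $\bX$ in a combinatorial way, and rotational invariance does not decouple the on-support and off-support columns in the manner you suggest (the KKT sign pattern on $S$ already depends on the off-support columns through the optimality conditions). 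The paper's own geometric route via facets of $\mathcal{C}$ and the Dvoretzky--Milman heuristic points toward $f(n)=\exp(\Theta(n))$, but neither that route nor yours currently yields anything beyond the trivial $d=\Theta(n)$ regime. In short, there is no proof here to compare against; both your plan and the paper's stop well short of the conjecture.
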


\begin{figure}[t]
    \centering
    \includegraphics[width=\textwidth]{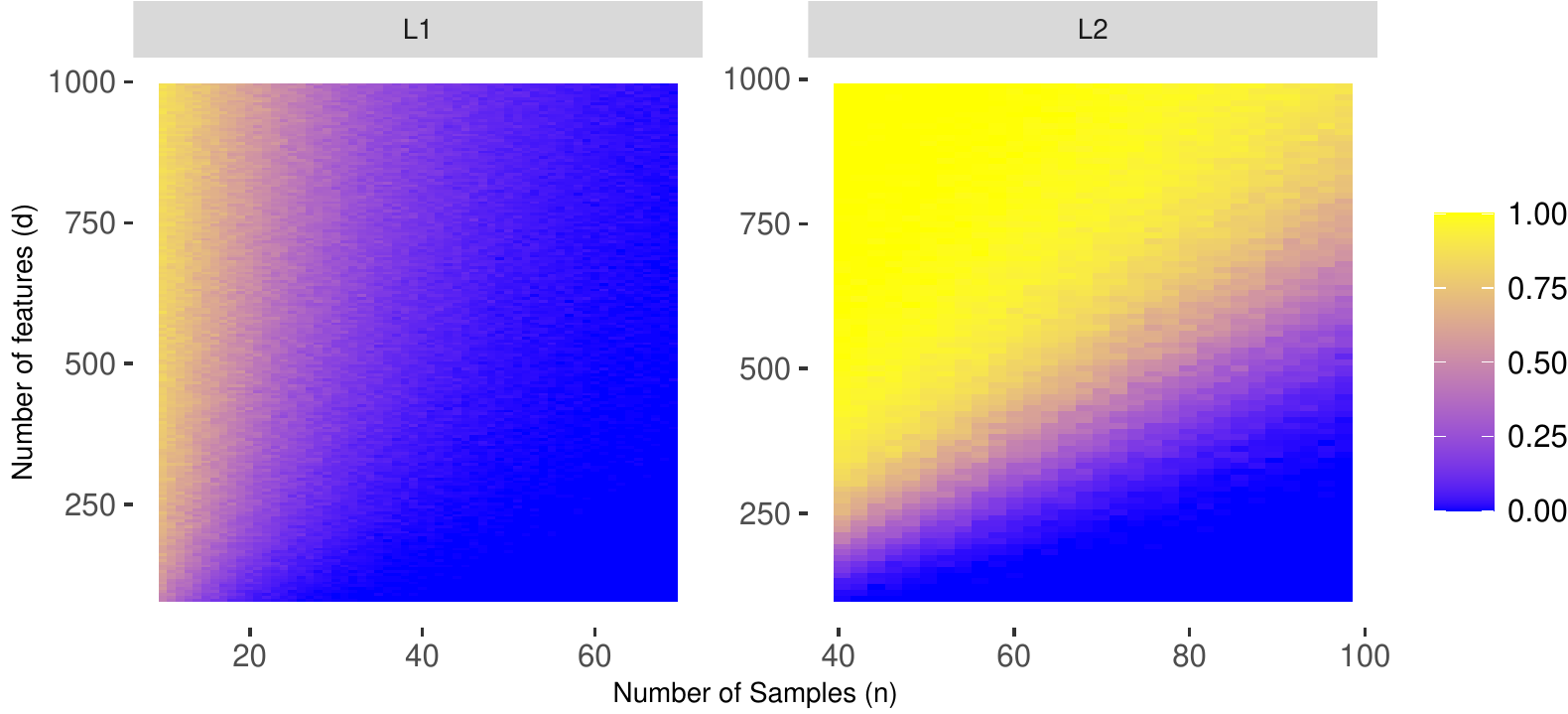}
    \caption{
    The observed probabilities of support vector proliferation for $\ell_1$- and $\ell_2$-SVMs for $d$-dimensional isotropic Gaussian samples of size $n$. 
    }
    \label{fig:l1-l2-compare}
\end{figure}

Conjecture~\ref{main-conjecture} is consistent with our preliminary experimental findings, summarized in Figure~\ref{fig:l1-l2-compare}.
It shows larger values of $d$ relative to $n$ are needed to ensure SVP for $\ell_1$-SVMs and that the transition appears to be less sharp.
Indeed, the experiments indicate that the true phase transition may occur when $d$ is asymptotically \emph{much} larger than $n \log n$.
They do not rule out the possibility that the transition may even require $d = \exp(\Omega(n))$.
Further experimental details are given in Appendix~\ref{app-sub:l1-exp}.


Answering whether support vector proliferation occurs in the $\ell_1$ case is equivalent to determining whether the optimal solution $\alpha^*$ to \eqref{eq:svm-restricted-dual-lp} problem lies in the positive orthant $\R_+^n$.\footnote{While Proposition~\ref{prop:equiv} does not imply this equivalence for $\ell_1$-SVMs for arbitrary data, the results of the proposition are valid for isotropic Gaussian samples, because the corresponding $\ell_\infty$ projection $\proj$ in those cases is well-defined almost surely.}
In the $\ell_{1}$ case, we have $q = \infty$, so solving the problem amounts to characterizing the solutions to the following linear program for data matrix $\bA =\diag(y) \bX \in \R^{n \times d}$:
\begin{equation}
    \tag{Dual L1}
    \label{eq:svm-dual-l1}
    \begin{aligned}
    \max_{\alpha \in \R^n} & \quad \sum_{i=1}^{n}{\alpha_{i}}
    &
    \text{s.t.} & \quad -\mathbf{1} \leq  \bA^\T \alpha \leq \mathbf{1}.
    \end{aligned}
\end{equation}
There is a line of work that gives high probability guarantees about whether related random linear programs are feasible and where their solutions reside~\cite{ab12, almt14}.
Similar analyses of random linear programs may be useful for understanding how large $d$ must be to have $\alpha^* \in \R^n_+$, and we carry out a preliminary characterization in Appendix~\ref{app:l1}. 

Conjecture~\ref{main-conjecture} and the other questions raised in this work point to a broader scope of investigations about high-dimensional phenomena and universality concerning optimization problems commonly used in machine learning and statistics.
Our results, along with those from prior works, provide new analytic and empirical approaches that may prove useful in tackling these questions.



\begin{ack}
D.~Hsu acknowledges support from
NSF grants CCF-1740833 and IIS-1563785,
NASA ATP grant 80NSSC18K109,
and a Sloan Research Fellowship.
C.~Sanford acknowledges support from
NSF grant CCF-1563155
and a Google Faculty Research Award to D.~Hsu.
N.~Ardeshir acknowledges support from Columbia Statistics Department.
This material is based upon work supported by the National Science Foundation under grant numbers listed above.
Any opinions, findings and conclusions or recommendations expressed in this material are those of the authors and do not necessarily reflect the views of the National Science Foundation.
We acknowledge computing resources from Columbia University's Shared Research Computing Facility project, which is supported by NIH Research Facility Improvement Grant 1G20RR030893-01, and associated funds from the New York State Empire State Development, Division of Science Technology and Innovation (NYSTAR) Contract C090171, both awarded April 15, 2010.
\end{ack}

\bibliographystyle{plainnat}

\bibliography{bib}

\begin{thebibliography}{48}
\providecommand{\natexlab}[1]{#1}
\providecommand{\url}[1]{\texttt{#1}}
\expandafter\ifx\csname urlstyle\endcsname\relax
  \providecommand{\doi}[1]{doi: #1}\else
  \providecommand{\doi}{doi: \begingroup \urlstyle{rm}\Url}\fi

\bibitem[Amelunxen and B{\"u}rgisser(2012)]{ab12}
Dennis Amelunxen and Peter B{\"u}rgisser.
\newblock Intrinsic volumes of symmetric cones.
\newblock \emph{arXiv preprint arXiv:1205.1863}, 2012.

\bibitem[Amelunxen et~al.(2014)Amelunxen, Lotz, McCoy, and Tropp]{almt14}
Dennis Amelunxen, Martin Lotz, Michael~B McCoy, and Joel~A Tropp.
\newblock Living on the edge: Phase transitions in convex programs with random
  data.
\newblock \emph{Information and Inference: A Journal of the IMA}, 3\penalty0
  (3):\penalty0 224--294, 2014.

\bibitem[Bartlett and Tewari(2007)]{bartlett2007sparseness}
Peter~L Bartlett and Ambuj Tewari.
\newblock Sparseness vs estimating conditional probabilities: Some asymptotic
  results.
\newblock \emph{Journal of Machine Learning Research}, 8\penalty0
  (Apr):\penalty0 775--790, 2007.

\bibitem[Bartlett et~al.(2020)Bartlett, Long, Lugosi, and Tsigler]{bllt20}
Peter~L. Bartlett, Philip~M. Long, Gábor Lugosi, and Alexander Tsigler.
\newblock Benign overfitting in linear regression.
\newblock \emph{Proceedings of the National Academy of Sciences}, 117\penalty0
  (48):\penalty0 30063–30070, Apr 2020.

\bibitem[Belkin et~al.(2020)Belkin, Hsu, and Xu]{bhx20}
Mikhail Belkin, Daniel Hsu, and Ji~Xu.
\newblock Two models of double descent for weak features.
\newblock \emph{SIAM Journal on Mathematics of Data Science}, 2\penalty0
  (4):\penalty0 1167--1180, 2020.

\bibitem[Berry(1941)]{berry41}
Andrew~C. Berry.
\newblock The accuracy of the gaussian approximation to the sum of independent
  variates.
\newblock \emph{Transactions of the American Mathematical Society}, 49\penalty0
  (1):\penalty0 122--136, 1941.

\bibitem[Buhot and Gordon(2001)]{bg01}
Arnaud Buhot and Mirta~B Gordon.
\newblock Robust learning and generalization with support vector machines.
\newblock \emph{Journal of Physics A: Mathematical and General}, 34\penalty0
  (21):\penalty0 4377--4388, may 2001.

\bibitem[Cand{\`e}s and Sur(2020)]{candes2018phase}
Emmanuel~J Cand{\`e}s and Pragya Sur.
\newblock The phase transition for the existence of the maximum likelihood
  estimate in high-dimensional logistic regression.
\newblock \emph{The Annals of Statistics}, 48\penalty0 (1):\penalty0 27--42,
  2020.

\bibitem[Cao et~al.(2021)Cao, Gu, and Belkin]{cao2021risk}
Yuan Cao, Quanquan Gu, and Mikhail Belkin.
\newblock Risk bounds for over-parameterized maximum margin classification on
  sub-gaussian mixtures.
\newblock \emph{arXiv preprint arXiv:2104.13628}, 2021.

\bibitem[Chatterji and Long(2021)]{cl20}
Niladri~S Chatterji and Philip~M Long.
\newblock Finite-sample analysis of interpolating linear classifiers in the
  overparameterized regime.
\newblock \emph{Journal of Machine Learning Research}, 22\penalty0
  (129):\penalty0 1--30, 2021.

\bibitem[Chen and Donoho(1994)]{chen1994basis}
Shaobing Chen and David Donoho.
\newblock Basis pursuit.
\newblock In \emph{Proceedings of 1994 28th Asilomar Conference on Signals,
  Systems and Computers}, volume~1, pages 41--44. IEEE, 1994.

\bibitem[Cortes and Vapnik(1995)]{cortes1995support}
Corinna Cortes and Vladimir Vapnik.
\newblock Support-vector networks.
\newblock \emph{Machine learning}, 20\penalty0 (3):\penalty0 273--297, 1995.

\bibitem[Cover(1965)]{cover1965geometrical}
Thomas~M Cover.
\newblock Geometrical and statistical properties of systems of linear
  inequalities with applications in pattern recognition.
\newblock \emph{IEEE Transactions on Electronic Computers}, 14\penalty0
  (3):\penalty0 326--334, 1965.

\bibitem[Dietrich et~al.(1999)Dietrich, Opper, and
  Sompolinsky]{dietrich1999statistical}
Rainer Dietrich, Manfred Opper, and Haim Sompolinsky.
\newblock Statistical mechanics of support vector networks.
\newblock \emph{Physical Review Letters}, 82\penalty0 (14):\penalty0 2975,
  1999.

\bibitem[Donoho and Tanner(2009{\natexlab{a}})]{donoho2009counting}
David Donoho and Jared Tanner.
\newblock Counting faces of randomly projected polytopes when the projection
  radically lowers dimension.
\newblock \emph{Journal of the American Mathematical Society}, 22\penalty0
  (1):\penalty0 1--53, 2009{\natexlab{a}}.

\bibitem[Donoho and Tanner(2009{\natexlab{b}})]{donoho2009observed}
David Donoho and Jared Tanner.
\newblock Observed universality of phase transitions in high-dimensional
  geometry, with implications for modern data analysis and signal processing.
\newblock \emph{Philosophical Transactions of the Royal Society A:
  Mathematical, Physical and Engineering Sciences}, 367\penalty0
  (1906):\penalty0 4273--4293, 2009{\natexlab{b}}.

\bibitem[Dvoretsky(1964)]{dvoredsky1961some}
AP~Dvoretsky.
\newblock Some results on convex bodies and banach spaces.
\newblock \emph{Matematika}, 8\penalty0 (1):\penalty0 73--102, 1964.

\bibitem[Fisher and Tippett(1928)]{fisher1928limiting}
Ronald~Aylmer Fisher and Leonard Henry~Caleb Tippett.
\newblock Limiting forms of the frequency distribution of the largest or
  smallest member of a sample.
\newblock In \emph{Mathematical proceedings of the Cambridge philosophical
  society}, volume~24, pages 180--190. Cambridge University Press, 1928.

\bibitem[Germain et~al.(2011)Germain, Lacoste, Laviolette, Marchand, and
  Shanian]{germain2011pac}
Pascal Germain, Alexandre Lacoste, Fran{\c{c}}ois Laviolette, Mario Marchand,
  and Sara Shanian.
\newblock A {PAC-Bayes} sample-compression approach to kernel methods.
\newblock In \emph{ICML}, 2011.

\bibitem[Graepel et~al.(2005)Graepel, Herbrich, and
  Shawe-Taylor]{graepel2005pac}
Thore Graepel, Ralf Herbrich, and John Shawe-Taylor.
\newblock {PAC-Bayesian} compression bounds on the prediction error of learning
  algorithms for classification.
\newblock \emph{Machine Learning}, 59\penalty0 (1-2):\penalty0 55--76, 2005.

\bibitem[Hastie et~al.(2019)Hastie, Montanari, Rosset, and Tibshirani]{hmrt19}
Trevor Hastie, Andrea Montanari, Saharon Rosset, and Ryan~J Tibshirani.
\newblock Surprises in high-dimensional ridgeless least squares interpolation.
\newblock \emph{arXiv preprint arXiv:1903.08560}, 2019.

\bibitem[Hastie and Pregibon(2017)]{hastie2017generalized}
Trevor~J Hastie and Daryl Pregibon.
\newblock Generalized linear models.
\newblock In \emph{Statistical models in S}, pages 195--247. Routledge, 2017.

\bibitem[Hsu et~al.(2021)Hsu, Muthukumar, and Xu]{hmx20}
Daniel Hsu, Vidya Muthukumar, and Ji~Xu.
\newblock On the proliferation of support vectors in high dimensions.
\newblock In \emph{Twenty-Fourth International Conference on Artificial
  Intelligence and Statistics}, 2021.

\bibitem[Huang et~al.(2020)Huang, Hogg, and Villar]{huang2020dimensionality}
Ningyuan Huang, David~W Hogg, and Soledad Villar.
\newblock Dimensionality reduction, regularization, and generalization in
  overparameterized regressions.
\newblock \emph{arXiv preprint arXiv:2011.11477}, 2020.

\bibitem[Ji and Telgarsky(2019)]{ji2019implicit}
Ziwei Ji and Matus Telgarsky.
\newblock The implicit bias of gradient descent on nonseparable data.
\newblock In \emph{Conference on Learning Theory}, pages 1772--1798. PMLR,
  2019.

\bibitem[Ledoux(2001)]{ledoux2001concentration}
Michel Ledoux.
\newblock \emph{The concentration of measure phenomenon}.
\newblock Number~89. American Mathematical Soc., 2001.

\bibitem[Liang and Sur(2020)]{liang2020precise}
Tengyuan Liang and Pragya Sur.
\newblock A precise high-dimensional asymptotic theory for boosting and
  minimum-l1-norm interpolated classifiers.
\newblock \emph{arXiv preprint arXiv:2002.01586}, 2020.

\bibitem[Liu(2019)]{liu2019exact}
Haoyang Liu.
\newblock Exact high-dimensional asymptotics for support vector machine.
\newblock \emph{arXiv preprint arXiv:1905.05125}, 2019.

\bibitem[Mahdaviyeh and Naulet(2019)]{mahdaviyeh2019risk}
Yasaman Mahdaviyeh and Zacharie Naulet.
\newblock Risk of the least squares minimum norm estimator under the spike
  covariance model.
\newblock \emph{arXiv preprint arXiv:1912.13421}, 2019.

\bibitem[Malzahn and Opper(2005)]{malzahn2005statistical}
D{\"o}rthe Malzahn and Manfred Opper.
\newblock A statistical physics approach for the analysis of machine learning
  algorithms on real data.
\newblock \emph{Journal of Statistical Mechanics: Theory and Experiment},
  2005\penalty0 (11):\penalty0 P11001, 2005.

\bibitem[Matousek(2013)]{matousek2013lectures}
Jiri Matousek.
\newblock \emph{Lectures on discrete geometry}, volume 212.
\newblock Springer Science \& Business Media, 2013.

\bibitem[Mei and Montanari(2019)]{mei2019generalization}
Song Mei and Andrea Montanari.
\newblock The generalization error of random features regression: {P}recise
  asymptotics and double descent curve.
\newblock \emph{arXiv preprint arXiv:1908.05355}, 2019.

\bibitem[Mitra(2019)]{mitra2019understanding}
Partha~P Mitra.
\newblock Understanding overfitting peaks in generalization error: Analytical
  risk curves for $\ell_2 $ and $\ell_1 $ penalized interpolation.
\newblock \emph{arXiv preprint arXiv:1906.03667}, 2019.

\bibitem[Muthukumar et~al.(2020)Muthukumar, Vodrahalli, Subramanian, and
  Sahai]{muthukumar2020harmless}
Vidya Muthukumar, Kailas Vodrahalli, Vignesh Subramanian, and Anant Sahai.
\newblock Harmless interpolation of noisy data in regression.
\newblock \emph{IEEE Journal on Selected Areas in Information Theory},
  1\penalty0 (1):\penalty0 67--83, 2020.

\bibitem[Muthukumar et~al.(2021)Muthukumar, Narang, Subramanian, Belkin, Hsu,
  and Sahai]{mnsbhs20}
Vidya Muthukumar, Adhyyan Narang, Vignesh Subramanian, Mikhail Belkin, Daniel
  Hsu, and Anant Sahai.
\newblock Classification vs regression in overparameterized regimes: Does the
  loss function matter?
\newblock \emph{Journal of Machine Learning Research}, 22\penalty0
  (222):\penalty0 1--69, 2021.

\bibitem[Ng(2004)]{ng2004feature}
Andrew~Y Ng.
\newblock Feature selection, l 1 vs. l 2 regularization, and rotational
  invariance.
\newblock In \emph{Proceedings of the twenty-first international conference on
  Machine learning}, page~78, 2004.

\bibitem[Sch{\"o}lkopf et~al.(2000)Sch{\"o}lkopf, Smola, Williamson, and
  Bartlett]{scholkopf2000new}
Bernhard Sch{\"o}lkopf, Alex~J Smola, Robert~C Williamson, and Peter~L
  Bartlett.
\newblock New support vector algorithms.
\newblock \emph{Neural computation}, 12\penalty0 (5):\penalty0 1207--1245,
  2000.

\bibitem[Soudry et~al.(2018)Soudry, Hoffer, Nacson, Gunasekar, and
  Srebro]{shmgs18}
Daniel Soudry, Elad Hoffer, Mor~Shpigel Nacson, Suriya Gunasekar, and Nathan
  Srebro.
\newblock The implicit bias of gradient descent on separable data.
\newblock \emph{J. Mach. Learn. Res.}, 19\penalty0 (1):\penalty0 2822–2878,
  January 2018.

\bibitem[Steinwart(2003)]{steinwart2003sparseness}
Ingo Steinwart.
\newblock Sparseness of support vector machines.
\newblock \emph{Journal of Machine Learning Research}, 4\penalty0
  (Nov):\penalty0 1071--1105, 2003.

\bibitem[Theisen et~al.(2021)Theisen, Klusowski, and Mahoney]{theisen2021good}
Ryan Theisen, Jason Klusowski, and Michael Mahoney.
\newblock Good classifiers are abundant in the interpolating regime.
\newblock In \emph{Proceedings of The 24th International Conference on
  Artificial Intelligence and Statistics}, 2021.

\bibitem[Tibshirani(1996)]{tibshirani1996regression}
Robert Tibshirani.
\newblock Regression shrinkage and selection via the lasso.
\newblock \emph{Journal of the Royal Statistical Society: Series B
  (Methodological)}, 58\penalty0 (1):\penalty0 267--288, 1996.

\bibitem[Vandenberghe(2010)]{vandenberghe2010cvxopt}
Lieven Vandenberghe.
\newblock The cvxopt linear and quadratic cone program solvers, 2010.
\newblock URL
  \url{https://www.seas.ucla.edu/~vandenbe/publications/coneprog.pdf}.

\bibitem[Vapnik(1982)]{vapnik1982estimation}
Vladimir~Naumovich Vapnik.
\newblock \emph{Estimation of dependences based on empirical data}.
\newblock Springer-Verlag, 1982.

\bibitem[Vapnik(1995)]{vapnik1995nature}
Vladimir~Naumovich Vapnik.
\newblock \emph{The Nature of Statistical Learning Theory}.
\newblock Springer-Verlag, 1995.

\bibitem[Vershynin(2015)]{vershynin14}
Roman Vershynin.
\newblock Estimation in high dimensions: a geometric perspective.
\newblock In \emph{Sampling theory, a renaissance}, pages 3--66. Springer,
  2015.

\bibitem[Wang and Thrampoulidis(2021)]{wang2021binary}
Ke~Wang and Christos Thrampoulidis.
\newblock Binary classification of gaussian mixtures: Abundance of support
  vectors, benign overfitting and regularization.
\newblock \emph{arXiv preprint arXiv:2011.09148v4}, 2021.

\bibitem[Wang and Fan(2017)]{wang2017asymptotics}
Weichen Wang and Jianqing Fan.
\newblock Asymptotics of empirical eigenstructure for high dimensional spiked
  covariance.
\newblock \emph{Annals of Statistics}, 45\penalty0 (3):\penalty0 1342, 2017.

\bibitem[Xu and Hsu(2019)]{xu2019number}
Ji~Xu and Daniel Hsu.
\newblock On the number of variables to use in principal component regression.
\newblock In \emph{Advances in Neural Information Processing Systems 32}, 2019.

\end{thebibliography}

\clearpage

\appendix

\section{Proofs for Section~\ref{sec:prelim}}\label{app:prelim-proof}

\newcommand\lesscrazyparagraph[1]{\paragraph{\mbox{#1}}}

We restate and prove Proposition~\ref{prop:equiv}.

\propequiv*

\begin{proof}
  The proof henceforth proceeds under the assumption that $\bK$ is invertible.
  Since $\bK$ is symmetric, the invertibility of $\bK$ implies that all of its principal minors (i.e., the $\bK_{\setminus i}$'s) are invertible.

The equivalences between the first four statements follow from simple implications of the definition of support vector proliferation and the derivation of the dual optimization problems to \eqref{eq:svm-primal-lp} and \eqref{eq:svm-restricted-primal-lp}.
Lemma~1 of \cite{hmx20} proves the equivalence between (1) and (5) for the $\ell_2$ case and completes the argument.
We supplement the argument with an additional equivalence between (4) and (5) to show how the ``leave-one-out terms'' in (5) can be intuitively understood through the geometric framing of the dual problem.

\lesscrazyparagraph{(1) $\iff$ (2):}
This is immediate from the fact that the definition of SVP corresponds exactly to the equality constraints that are present in \eqref{eq:svm-restricted-primal-lp} and not in \eqref{eq:svm-primal-lp}.

\lesscrazyparagraph{(2) $\iff$ (3):}
This equivalence follows by deriving the duals of the two optimization problems, \eqref{eq:svm-restricted-primal-lp} and \eqref{eq:svm-primal-lp}, and by noting that the only difference between the corresponding duals is that the latter has an additional requirement that $\alpha \in\R_+^d$.

By adding Lagrange multipliers $\alpha \in \mathbb{R}^{n}$, we obtain the dual of \eqref{eq:svm-restricted-primal-lp}:
\[
\max_{\alpha \in \mathbb{R}^{n}} \sum_{i=1}^{n}{\alpha_{i}} + \min_{w \in \mathbb{R}^{d}}{  \|w\|_{p} -  w^\T \sum_{i=1}^{n}{\alpha_{i}y_{i}\bx_{i} }}.
\]
By Holder's inequality, if we take $q$ to be the dual of $p$ (i.e. $\frac{1}{p} + \frac{1}{q} = 1$), then $|w^\T u| \le \|w\|_{p} \|u\|_{q}$, with equality when $|u_{i}|^{q}$ is proportional to $|w_{i}|^{p}$.
Therefore,
\[\min_{w \in \mathbb{R}^{d}}  \|w\|_{p} - w^\T u = 
\begin{cases}
0 & \|u\|_{q} \le 1 \\
-\infty & \text{otherwise.}\\
\end{cases}
\]

We further denote $\bA =  \diag(y) \bX  \in \mathbb{R}^{n \times d}$, whose $i$th row is $y_{i}\bx_{i}$. We conclude that the dual of the optimization problem \eqref{eq:svm-restricted-primal-lp} is exactly \eqref{eq:svm-restricted-dual-lp}.
We similarly find that the dual of \eqref{eq:svm-primal-lp} is
\begin{equation*}\label{eq:svm-dual-lp}\tag{SVM Dual}
    \max_{\alpha \in \mathbb{R}^{n}_+} \sum_{i=1}^{n}{\alpha_{i}} \quad
    \text{such that}  \quad \|\bA^\T \alpha \|_{q} \le 1 .
\end{equation*}

Because \eqref{eq:svm-restricted-dual-lp} and \eqref{eq:svm-dual-lp} coincide if and only if the $\alpha$ that solves the former is in the positive orthant ($\alpha \in \R^d_+$), the equivalence follows.

\lesscrazyparagraph{(3) $\iff$ (4):} 
We reconfigure \eqref{eq:svm-restricted-dual-lp} to rewrite the optimization problem as a projection.
Let $\Pi_{\bT}$ and $\Pi_{\bT^+}$ be the $\ell_q$-norm minimizing projection operators onto $\bT$ and $\bT^+$, which are uniquely defined when $\bK$ is invertible. 
Then the solution to \eqref{eq:svm-restricted-dual-lp} is:\footnote{
Note that \eqref{eq:svm-restricted-dual-lp} will never be optimized by $\vec{0}$, because there must always exist some $\alpha$ with strictly positive components such that $\norml[q]{\bA^\T \alpha} \leq 1$. Therefore, we need not worry about the projection being undefined.}
\begin{equation}
    \label{eq:projection-lq}
    \tag{Interpolation Projection}
    \min_{\alpha \in \mathbb{R}^{n}}{ \norm[q]{\bA^\T \frac{\alpha}{\sum_{i=1}^{n}{\alpha_{i}}}} 
    = \norm[q]{\Pi_{\bT}(\vec{0}) }}.
\end{equation}

By the definition of $\bT$ and $\bT^+$ and the fact that $\proj = \bA^\T \alpha^* / \vec{1}^\T \alpha^*$ for $\alpha^*$ optimizing \eqref{eq:projection-lq}, we have that $\alpha^* \in \R_+^d$ if and only if $\proj \in \bT^+$.

\lesscrazyparagraph{(4) $\iff$ (5):}
By the \eqref{eq:projection-lq} formulation, $\proj$ can be alternatively interpreted as the projection of the origin onto the affine space $\mathbf{T}$. 
Therefore we have,
\begin{equation}\label{eq:projection-representation}
    \proj = \sum_{i=1}^{n}{ a^{*}_{i}\bA_{i} } = \arg \min_{u \in \mathbf{T}} \norm[2]{u}^{2}
\end{equation}
where $a^{*}_{i} \in \mathbb{R}$ is proportional to $\alpha_{i}^{*}$ such that $\sum_{i=1}^{n}{a^{*}_{i}} = 1$. 
This is possible because the optimal value of \eqref{eq:svm-restricted-primal-lp} is positive. 

The following steps show the equivalence:
\begin{enumerate}
    \item For every $i \in [n]$ we show,
    \begin{equation}\label{eq:coef-interpretation}
        a^{*}_{i} > 0 \quad \Longleftrightarrow \quad \bA_{i}^\T \projj{i}  < \norm[2]{\projj{i}}^{2}
    \end{equation}
    by leveraging the fact that $\ell_{2}$ space is equipped with inner product which allows us to decompose the contribution of each sample to the projection. 
    
    \item We find an explicit expression for $\proj$:
        \begin{equation}\label{eq:projection-explicit}
            \proj =  \frac{\bA^\T \paren{\bA\bA^\T}^{-1} \one }{\one^\T \paren{\bA\bA^\T}^{-1} \one }= \frac{\bX \paren{\bX^{\T}\bX}^{-1} y }{y^\T \paren{\bX^{\T}\bX}^{-1} y }.
        \end{equation}
        An analogous expression for $\projj{i}$ can be found for any fixed $i$ by the same method, which gives the desired equivalence by combining with \eqref{eq:coef-interpretation}.
\end{enumerate}

\emph{First Step:} Fix some index $i \in [n]$. Since $\Pi_{\mathbf{T}_{\setminus i}}(\bA_i)$ is on the affine space $\mathbf{T}_{\setminus i}$, which is closed under affine linear combination, one can express $\mathbf{T}$ in the following way:
\begin{align*}
    \mathbf{T} &= \Big\{ \sum_{j=1}^{n}{a_{j}\bA_j} : \sum_{j=1}^{n}{a_{j}} = 1 \Big\} \\
    &= \Big\{ a_{i} \paren{\bA_i - \Pi_{\mathbf{T}_{\setminus i}}(\bA_i)} + \Big(a_{i}  \Pi_{\mathbf{T}_{\setminus i}}(\bA_i) + \sum_{j \neq i}{a_{j}\bA_{j}}\big)  : \sum_{j \neq i}{a_{j}} = 1-a_{i} \ , a_{i}\in \mathbb{R} \Big\} \\
    &= \Big\{ a_{i} (\bA_i - \Pi_{\mathbf{T}_{\setminus i}}(\bA_i)) + u: u \in \mathbf{T}_{\setminus i} \ , a_{i}\in \mathbb{R} \Big\} .
\end{align*}

By the definition of the projection onto $\mathbf{T}$, we represent $\proj = a^{*}\paren{\bA_i - \Pi_{\mathbf{T}_{\setminus i}}(\bA_i)} + u^{*}$ where,
\[
(a^{*}, u^{*}) = \argmin_{(a, u) \in \mathbb{R} \times \mathbf{T}_{\setminus i}}  \norm[2]{ a( \bA_{i} - \Pi_{\mathbf{T}_{\setminus i}}(\bA_{i}) ) + u }^{2}.
\]
It is straightforward to see that $a^{*} = a_{i}^{*}$ by comparing this representation with equation \eqref{eq:projection-representation} alongside with the fact that $\bA_i - \Pi_{\mathbf{T}_{\setminus i}}(\bA_i)$ is orthogonal to $\mathbf{T}_{\setminus i}$:

\[
\zero = \proj - \proj = \paren{a^{*} - a_{i}^{*}}\paren{\bA_i - \Pi_{\mathbf{T}_{\setminus i}}(\bA_i)} + \Big( u^{*} - \underbrace{\Big(\sum_{j \neq i}{a_{j}^{*} \bA_{j}} + a_{i}^{*} \projj{i}\Big)}_{ \in \mathbf{T}_{\setminus i}}  \Big).
\]

To find $a^{*}$, we sequentially optimize over $u$, substitute its optimal value, and then optimize over $a$. 
It suffices to minimize $\norm[2]{u}^{2}$ because $ u^\T (\bA_i - \Pi_{\mathbf{T}_{\setminus i}}(\bA_i) )$ is constant for all $u \in \mathbf{T}_{\setminus i}$ due to orthogonality and the definition of $\mathbf{T}_{\setminus i}$.
Hence, $u^{*} = \projj{i}$ is optimal. 
Subsequently, by optimizing over $a$ and setting the derivative to zero, 
\[
( \bA_{i} - \Pi_{\mathbf{T}_{/i}}(\bA_{i}))^\T  \paren{a^{*} \paren{ \bA_{i} - \Pi_{\mathbf{T}_{/i}}(\bA_{i})} + \projj{i}} = 0
\]
or equivalently,
\[
a^{*} 
= \frac{(\Pi_{\mathbf{T}_{/i}}(\bA_{i}) - \bA_{i})^\T \projj{i}}{ \norm[2]{\bA_{i} - \Pi_{\mathbf{T}_{\setminus i}}(\bA_{i})}^{2} } 
= \frac{ \norm[2]{ \projj{i} }^{2} -  \bA_{i}^\T \projj{i} }{ \norm[2]{\bA_{i} - \Pi_{\mathbf{T}_{/i}}(\bA_{i})}^{2} }
\]
For the last step, we combine the facts that $ \projj{i}^\T u$ is constant over $\mathbf{T}_{\setminus i}$ and $\projj{i} \in \mathbf{T}_{\setminus i}$. 
Note that the denominator is non-zero because the invertibility of $\bK$ implies that $\bA_i$ will not lie on the span of the remaining rows $\bA_{\setminus i}$, and hence will not be in $\bT_{\setminus i}$.
This immediately proves \eqref{eq:coef-interpretation}. 

\emph{Second Step:} We now shift our focus to expressing $\proj$ explicitly in terms of $\bA$. As discussed in Section \ref{sec:prelim}, the projection of the origin onto $\mathbf{T}$ for $\ell_{2}$-SVM can be expressed in an explicit way as $\proj = \bA^\T \fracl {\alpha^{*}}{ \one^{\T} \alpha^{*} }$, where $\alpha^{*}$ is the unique solution to \eqref{eq:svm-restricted-dual-lp} for $p=q=2$. In order to represent $\alpha^{*}$ explicitly one can reform \eqref{eq:svm-restricted-dual-lp} using a simple change of variables $\nu = \paren{\bA\bA^\T}^{\fracl{1}{2}}\alpha$ into,
\[
\nu^{*} = \argmax_{\nu \in \R^n} \nu^\T \paren{\bA\bA^\T}^{\fracl{-1}{2}} \one \quad \text{s.t. } \norm[2]{\nu} \le 1.
\]
This problem can be easily understood using a simple Cauchy-Schwarz inequality, 
\[
\alpha^{*} = \paren{\bA\bA^\T}^{\fracl{-1}{2}} \nu^{*} = \fracl{\paren{\bA\bA^\T}^{-1} \one}{ \sqrt{ \one^\T \paren{\bA\bA^\T}^{-1} \one } .}
\]
In conclusion, we can express the projection as,
\[
\proj = \frac{\bA^\T \alpha^{*}}{\one^\T \alpha^* } = \frac{\bA^\T \paren{\bA\bA^\T}^{-1} \one }{\one^\T \paren{\bA\bA^\T}^{-1} \one } = \frac{\bX^\T \paren{\bX\bX^\T}^{-1} y }{y^\T \paren{\bX\bX^\T}^{-1} y }.\qedhere
\]
\end{proof}

\section{Proofs for Section~\ref{sec:asymptotic}}\label{app:lemmas}

We prove Theorem~\ref{thm:anisotropic}, which we restate below.

\thmanisotropic*

\begin{proof}

As discussed in Section~\ref{sec:nonasymp}, it suffices to prove that $\bK_{\setminus i}$ is invertible for all $i$ and
\[
\max_{i \in [m]} \bracket{y_i y_{\setminus i}^\T \paren{\bK_{\setminus i}^{-1}-\frac{1}{\norm[1]{\lambda}} I_{n-1}} \bX_{\setminus i} \bx_i 
+ \frac{1}{\norm[1]{\lambda}}y_i y_{[m] \setminus i}^\T \bX_{[m] \setminus i} \bx_i 
+ \frac{1}{\norm[1]{\lambda}}y_i y_{\setminus [m]}^\T \bX_{\setminus [m]} \bx_i} \geq 1
\]
with probability $1-\delta$ for some $m \leq n$.
We do so by showing that the following three events each hold with probability $1 - \frac{\delta}{3}$:
\begin{align*}
    &\max_{i \in [m]} \abs{y_i y_{\setminus i}^\T \paren{\bK_{\setminus i}^{-1}-\frac{1}{\norm[1]{\lambda}} I_{n-1}} \bX_{\setminus i} \bx_i} \leq 1, \\
    &\max_{i \in [m]} \frac{1}{\norm[1]{\lambda}} \abs{y_i y_{[m] \setminus i}^\T \bX_{[m] \setminus i} \bx_i} \leq 1,  \ \text{and} \\
    &\max_{i \in [m]} \frac{1}{\norm[1]{\lambda}}y_i y_{\setminus [m]}^\T \bX_{\setminus [m]} \bx_i \geq 3.
\end{align*}

It remains to plug in the results of Lemmas~\ref{lemma:e1}, \ref{lemma:e2}, and \ref{lemma:e3} to show that the three events occur with high probability given the conditions imposed on $n$, $d_2$, $d_{\infty}$, and $\delta$ in \eqref{eq:3}.
Let $m := \ceill{\exp(\frac{d_2}{2\Cb n})}$. 
By \eqref{eq:3}, $m \leq \sqrt{n}+1 \leq \frac{n}{\ln n} \leq \frac{n}{2}$ for sufficiently small constant $\Ca$.


\begin{enumerate}
    \item 
    By Lemma~\ref{lemma:e1} in Appendix~\ref{asec:e1} with $\delta := \frac{\delta}{3m}$, it follows that for any fixed $i \in [m]$, $\bK_{\setminus i}$ is invertible and
    \[\abs{y_i y_{\setminus i}^\T \paren{\bK_{\setminus i}^{-1} - \frac{1}{\norml[1]{\lambda}} I_{n-1} } \bX_{\setminus i} \bx_i } \leq 1\] with probability at least $1 - \frac{\delta}{3m}$ as long as the following conditions hold:
    \begin{align}
      d_{\infty} & \geq \cc'\paren{n\paren{\log\frac{3m}{\delta}}^{1/3} + n^{1/3} \log \frac{3m}{\delta}} ,
      \label{eq:proof-anisotropic-cond1}
      \\
      d_2 d_{\infty} & \geq \cd n \log \frac{3m}{\delta} \paren{n + \log \frac{3m}{\delta}}.
      \label{eq:proof-anisotropic-cond2}
    \end{align}
    We show that the inequalities in
    \eqref{eq:proof-anisotropic-cond1}
    and
    \eqref{eq:proof-anisotropic-cond2}
    are implied by the preconditions of Theorem~\ref{thm:anisotropic} in
    \eqref{eq:3} by choosing sufficiently large constants $C_1$, $C_3$, and $C_4$.
    For \eqref{eq:proof-anisotropic-cond1},
    \begin{align*}
        n \paren{\log \frac{3m}{\delta}}^{1/3} + n^{1/3} \log \frac{3m}{\delta} 
        &\leq n \paren{\frac{d_2}{C_2 n} + \log \frac{3}{\delta}}^{1/3} + n^{1/3}\log \frac{3n}{2\delta} \\
        &\leq \frac{n^{2/3}d_2^{1/3}}{C_2}  + n \paren{\log\frac{3}{\delta}}^{1/3}  + n \log \frac{3}{\delta} \\
        &\leq \frac{n^{1/3} d_\infty^{2/3}}{C_2 C_4^{1/3}}+ 2 n \log \frac{3}{\delta} ,
    \end{align*}
    which implies the desired inequality.
    To establish \eqref{eq:proof-anisotropic-cond2}:
%
    \begin{align*}
        n \log \frac{3m}{\delta} \paren{n + \log \frac{3m}{\delta}}
        &\leq \frac{d_2}{C_2} \paren{n + \log \frac{3n}{2\delta}} 
        \leq  \frac{d_2}{C_2} \paren{2n + \log \frac{1}{\delta}} \leq \frac{d_2}{C_2} \cdot \frac{3d_\infty}{C_3}.
    \end{align*}
    
    By applying a union bound to all $m$ events, they all occur with probability at least $1 - \frac{\delta}{3}$.

	\item 
	By applying Lemma~\ref{lemma:e2} in Appendix~\ref{asec:e2} for all $i \in [m]$ with $\delta$ as before and union-bounding over the corresponding events, with probability $1  - \frac{\delta}{3}$, \[\frac{1}{\norml[1]{\lambda}}   y_i y_{[m] \setminus i}^\T \bX_{[m] \setminus i} \bx_i \geq -1\] for all $i \in [m]$ as long as
	 \[d_2 \geq \ca m \log \frac{3m}{\delta}
	 \quad \text{and} \quad 
	 d_\infty \geq \cb \sqrt{m} \log \frac{3m}{\delta}.
    \]
    Both inequalities follow from the third inequality of \eqref{eq:3} for sufficiently large $C_3$ and by $m \leq \frac{n}{\log n}$.

	\item By Lemma~\ref{lemma:e3} in Appendix~\ref{asec:e3} with $t := 3$, 
	\[\max_{i \in [m]} \frac{1}{\norml[1]{\lambda}} y_i y_{\setminus [m]}^\T \bX_{ \setminus [m]} \bx_i \geq 3\]
	with probability $1 - \frac{\delta}{3}$, if
	\begin{align*}
	     & n-m \geq \ca \paren{\log \frac{3}{\delta}}^2, &&
     \exp\paren{\frac{9 d_2}{\cb (n-m)}} \leq m \leq (n-m), \\
    & d_2 \geq \cc (n-m) \log \log \frac{3}{\delta},
    && \text{and} \quad
    d_{\infty}^2 \geq\cd d_{2} (n-m).
    \end{align*}
    The inequalities are satisfied as immediate consequences of \eqref{eq:3} and the fact that $m = \ceill{\exp(\frac{d_2}{2C_2n})} \leq \frac{n}{2}$ for sufficiently small $C_2$. \qedhere
\end{enumerate}
\end{proof}

In the subsequent three sections, we prove Lemmas~\ref{lemma:e1}, \ref{lemma:e2}, and \ref{lemma:e3}.

\subsection{Bounded difference between leave-one-out terms with $\bK^{-1}$ and scaled identity}\label{asec:e1}

\begin{lemma}\label{lemma:e1}
    Let $(\bX, y) \in \R^{n \times d} \times \R^n$ and $(\bx', y') \in \R^{d} \times \R$ be $\lambda$-anisotropic subgaussian samples that are independent of one another with $\bK = \bX \bX^\T$.
    Pick any $\delta \in (0, \frac{1}{2})$.
    There exist universal constants $\cc, \ccc$ such that if $d_{\infty} \geq \cc(n + \log \frac{1}{\delta})$,
    then with probability $1 - \delta$, $\bK$ is invertible and
    \[\abs{y^\T \paren{\bK^{-1} - \frac{1}{\norm[1]{\lambda}} I_{n}} \bX \bx' } \leq c\sqrt{\frac{n \log \frac{1}{\delta}}{d_{\infty}}}\paren{\sqrt{\frac{n + \log \frac{1}{\delta}}{d_2}} + \frac{n + \log \frac{1}{\delta}}{d_\infty}}.\]
\end{lemma}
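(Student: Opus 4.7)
The plan is to combine a spectral concentration bound for $\bK$ around $\norm[1]{\lambda} I_n$ with a conditional subgaussian argument that exploits the independence of $\bx'$ from $\bX$. The starting point is the resolvent-type identity
\[
\bK^{-1} - \tfrac{1}{\norm[1]{\lambda}} I_n
\;=\; -\tfrac{1}{\norm[1]{\lambda}}\, \bK^{-1} (\bK - \norm[1]{\lambda} I_n),
\]
which decouples the size of the inverse from the size of the perturbation.

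First, I would establish that, with probability at least $1 - \delta/2$,
\[
\norml{\bK - \norm[1]{\lambda} I_n} \;\leq\; c_1 \norm[1]{\lambda} \paren{\sqrt{\tfrac{n+\log(1/\delta)}{d_2}} + \tfrac{n+\log(1/\delta)}{d_\infty}}.
\]
This is a standard anisotropic sample-covariance concentration bound, obtainable by combining a Hanson--Wright bound for the off-diagonal entries $\bK_{ij} = \sum_k \lambda_k \bz_{i,k}\bz_{j,k}$, a Bernstein-type bound for the diagonal entries $\bK_{ii} = \sum_k \lambda_k \bz_{i,k}^2$, and an $\epsilon$-net or matrix-Bernstein argument. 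Under the assumption $d_\infty \geq \cc(n+\log(1/\delta))$, and using $d_2 \geq d_\infty$, we can choose $\cc$ large enough so that the right-hand side is at most $\norm[1]{\lambda}/2$; this gives invertibility of $\bK$ and $\norml{\bK^{-1}} \leq 2/\norm[1]{\lambda}$.

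Second, set $v := (\bK^{-1} - \tfrac{1}{\norm[1]{\lambda}} I_n)\, y$, so the target quantity equals $v^\T \bX \bx'$. Condition on $\bX$ and write $\bx' = \diag(\lambda)^{1/2} \bz'$ with $\bz'$ independent of $\bX$ and having independent $1$-subgaussian coordinates; then
\[
v^\T \bX \bx' \;=\; (\diag(\lambda)^{1/2} \bX^\T v)^\T \bz'
\]
is a subgaussian scalar in $\bz'$ with variance proxy $v^\T \bX \diag(\lambda) \bX^\T v \leq \norm[\infty]{\lambda}\, v^\T \bK v$, using $\diag(\lambda) \preceq \norm[\infty]{\lambda}\, I_d$. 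A Hoeffding-type tail bound gives, with conditional probability at least $1-\delta/2$,
\[
|v^\T \bX \bx'| \;\leq\; c_2 \sqrt{\log(1/\delta)}\, \sqrt{\norm[\infty]{\lambda}\, v^\T \bK v}.
\]
Combining the definition of $v$ with $\norml{\bK^{-1}} \leq 2/\norm[1]{\lambda}$ and $\norm[2]{y} = \sqrt{n}$ yields
\[
v^\T \bK v
\;=\; \tfrac{1}{\norm[1]{\lambda}^2}\, y^\T (\bK - \norm[1]{\lambda} I_n)\, \bK^{-1}\, (\bK - \norm[1]{\lambda} I_n)\, y
\;\leq\; \tfrac{2n}{\norm[1]{\lambda}^3}\, \norml{\bK - \norm[1]{\lambda} I_n}^2.
\]
Substituting $\norm[\infty]{\lambda}/\norm[1]{\lambda} = 1/d_\infty$ gives
\[
|v^\T \bX \bx'| \;\leq\; C \sqrt{\tfrac{n \log(1/\delta)}{d_\infty}} \cdot \tfrac{\norml{\bK - \norm[1]{\lambda} I_n}}{\norm[1]{\lambda}},
\]
and plugging in the bound from the first step finishes the proof after a union bound over the two high-probability events.

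The main obstacle is producing the spectral concentration bound in the first step with the precise $d_2$- and $d_\infty$-dependence above; once that is in place, the remaining ingredients---the resolvent identity, the conditional subgaussian reduction, and the algebraic simplification of $v^\T \bK v$---are routine. The key analytical insight is that bounding the conditional variance by $\norm[\infty]{\lambda}\, v^\T \bK v$ produces the crucial $1/\sqrt{d_\infty}$ improvement over the naive Cauchy--Schwarz estimate $|v^\T \bX \bx'| \leq \norm[2]{v}\, \norm[2]{\bX \bx'}$, which would not distinguish between $d_2$ and $d_\infty$ in the final bound.
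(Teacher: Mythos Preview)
Your proposal is correct and is essentially the same as the paper's proof: the paper also conditions on $\bX$, treats $y^\T(\bK^{-1}-\norm[1]{\lambda}^{-1}I_n)\bX\bx'$ as a subgaussian scalar with variance proxy $\norm[\infty]{\lambda}\,\norm[2]{y^\T(\bK^{-1}-\norm[1]{\lambda}^{-1}I_n)\bX}^2 = \norm[\infty]{\lambda}\,v^\T\bK v$, and bounds this via the spectral concentration of $\bK$ (their Lemma~\ref{lemma:gram-evals}, which is exactly your first step and which they in turn attribute to Lemma~8 of \cite{hmx20}). The only cosmetic difference is that the paper bounds $v^\T\bK v \leq \norm[2]{y}^2\,\norml{\bK^{-1}-\norm[1]{\lambda}^{-1}I_n}^2\,\norml{\bK}$ using the operator norm of the \emph{inverse} perturbation directly, whereas you route through the resolvent identity to use $\norml{\bK-\norm[1]{\lambda}I_n}$ and $\norml{\bK^{-1}}$; these yield the same estimate.
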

\begin{remark}
    To guarantee that $\absl{y^\T \parenl{\bK^{-1} - \norm[1]{\lambda}^{-1} I_{n}} \bX \bx' } \leq \eps$
    with probability $1 - \delta$ for some $\eps > 0$,
    it suffices to show that 
    \[d_\infty \geq \cc' \cdot \frac{n \paren{\log \frac{1}{\delta}}^{1/3} + n^{1/3} \log \frac{1}{\delta}}{\eps^{2/3}} 
    \quad \text{and} \quad
    d_2 d_\infty \geq \cd \cdot \frac{n \log \frac{1}{\delta} \paren{n + \log \frac{1}{\delta}}}{\eps^2},\]
    for universal constants $\cc'$ and $\cd$.
\end{remark}

The proof of Lemma~\ref{lemma:e1} relies heavily on a concentration bound on the eigenvalues of the Gram matrix $\bK$, which draws from a technical lemma of \citet{hmx20}. 
We present and prove this result below and then use it to prove Lemma~\ref{lemma:e1}.

\begin{lemma}\label{lemma:gram-evals}
	Let $(\bX, y) \in \R^{n \times d} \times \R$ be $\lambda$-anisotropic subgaussian samples with Gram matrix $\bK := \bX \bX^\T \in \R^{n \times n}$.
	Pick any $\delta \in (0, \half)$.
	For some universal constant $\ccc$, with probability $1-\delta$,
	\begin{equation}
	\label{eq:K-norm-bound}
	    \normop{\bK - \norm[1]{\lambda} I_n} \leq \ccc \norm[1]{\lambda} \paren{\sqrt{\frac{n + \log \frac{1}{\delta}}{d_2}} + \frac{n + \log \frac{1}{\delta}}{d_\infty}} ,
	\end{equation}
  where $\normop{\cdot}$ denotes the spectral (operator) norm.
	If additionally $d_\infty \geq \cc (n + \log \frac{1}{\delta})$ for some universal constant $\cc$, then for the same event, $\bK$ is invertible and
	\begin{equation}
	\label{eq:Kinv-norm-bound}
	\norm{\bK^{-1} - \frac{1}{\norm[1]{\lambda}} I_n} \leq  \frac{\ccc}{\norm[1]{\lambda}} \paren{\sqrt{\frac{n + \log \frac{1}{\delta}}{d_2}} + \frac{n + \log \frac{1}{\delta}}{d_\infty}} .
	\end{equation}
\end{lemma}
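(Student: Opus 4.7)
The plan is to establish \eqref{eq:K-norm-bound} by reducing the operator norm to the supremum of a quadratic form in independent subexponential scalars via an $\epsilon$-net, and then to derive \eqref{eq:Kinv-norm-bound} as a routine Neumann-type perturbation consequence.

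\textbf{Step 1 (quadratic-form reduction).} For any fixed unit vector $u \in \R^n$, let $\bw_j := u^\T \bz_{\col{j}}$, where $\bz_{\col{j}} \in \R^n$ is the $j$-th column of $\bZ$. Since the columns of $\bZ$ are independent and its entries are $1$-subgaussian with mean $0$ and unit variance, the scalars $\bw_1, \dots, \bw_d$ are independent, $1$-subgaussian, and satisfy $\EE[\bw_j^2] = \norm[2]{u}^2 = 1$. A direct computation gives
\[
u^\T \bK u \;=\; \sum_{j=1}^d \lambda_j (u^\T \bz_{\col{j}})^2 \;=\; \sum_{j=1}^d \lambda_j \bw_j^2 ,
\]
so $u^\T \bigl(\bK - \norml[1]{\lambda} I_n\bigr) u = \sum_{j=1}^d \lambda_j (\bw_j^2 - 1)$ is a sum of independent, centered, subexponential random variables.

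\textbf{Step 2 (Bernstein).} Because each $\bw_j^2 - 1$ has universal subexponential norm, the summand $\lambda_j(\bw_j^2 - 1)$ has variance $O(\lambda_j^2)$ and subexponential norm $O(\lambda_j)$. Bernstein's inequality for subexponential sums gives, for every fixed unit $u$ and every $t > 0$,
\[
\Pr\!\left[\,\bigl|u^\T \bigl(\bK - \norml[1]{\lambda} I_n\bigr) u\bigr| > C\bigl(\norml[2]{\lambda}\sqrt{t} + \norml[\infty]{\lambda}\,t\bigr)\right] \;\leq\; 2 \exp(-t) ,
\]
for a universal constant $C$. Then I would use a standard $\tfrac{1}{4}$-net $\mathcal{N}$ of $S^{n-1}$ with $|\mathcal{N}| \leq 9^n$. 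Since $\bK - \norml[1]{\lambda} I_n$ is symmetric, its operator norm is at most $2 \max_{u \in \mathcal{N}} |u^\T (\bK - \norml[1]{\lambda} I_n) u|$, so a union bound over $\mathcal{N}$ at level $t \asymp n + \log(1/\delta)$ gives
\[
\normop{\bK - \norml[1]{\lambda} I_n} \;\leq\; C'\bigl(\norml[2]{\lambda}\sqrt{n + \log(1/\delta)} + \norml[\infty]{\lambda}(n + \log(1/\delta))\bigr)
\]
with probability at least $1 - \delta$. Rewriting via $d_2 = \norml[1]{\lambda}^2/\norml[2]{\lambda}^2$ and $d_\infty = \norml[1]{\lambda}/\norml[\infty]{\lambda}$ and dividing through by $\norml[1]{\lambda}$ outside the bracket yields exactly \eqref{eq:K-norm-bound}.

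\textbf{Step 3 (inverse bound).} When $d_\infty \geq \cc(n + \log(1/\delta))$ with $\cc$ chosen sufficiently large, the right-hand side of \eqref{eq:K-norm-bound} is at most $\norml[1]{\lambda}/2$. By Weyl's inequality, all eigenvalues of $\bK$ then lie in the interval $[\norml[1]{\lambda}/2,\,3\norml[1]{\lambda}/2]$, so $\bK$ is invertible with $\normop{\bK^{-1}} \leq 2/\norml[1]{\lambda}$. Applying the elementary identity
\[
\bK^{-1} - \tfrac{1}{\norml[1]{\lambda}} I_n \;=\; -\tfrac{1}{\norml[1]{\lambda}}\, \bK^{-1}\,\bigl(\bK - \norml[1]{\lambda} I_n\bigr)
\]
and submultiplicativity of the operator norm gives \eqref{eq:Kinv-norm-bound} with a constant equal to twice the one in \eqref{eq:K-norm-bound}.

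\textbf{Main obstacle.} The only delicate point is checking that $\bw_j^2 - 1$ has universal subexponential norm when $\bw_j$ is $1$-subgaussian but not necessarily Gaussian; this is a standard consequence of MGF bounds for squares of subgaussian random variables, but one must be careful to carry the variance factor $\norml[2]{\lambda}^2$ and the per-term scale $\norml[\infty]{\lambda}$ separately so that the final bound decomposes into the $\sqrt{1/d_2}$ and $1/d_\infty$ terms of \eqref{eq:K-norm-bound}. The net argument and the Neumann step are then routine, and an alternative approach via matrix Bernstein for subexponential summands (closer to the ``technical lemma'' of \citet{hmx20} referenced in the text) would give the same conclusion with essentially the same constants.
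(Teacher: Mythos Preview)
Your proposal is correct and follows essentially the same approach as the paper. The paper obtains \eqref{eq:K-norm-bound} by directly invoking Lemma~8 of \citet{hmx20} (whose proof is precisely the $\epsilon$-net plus Bernstein argument you spell out in Steps~1--2), and then derives \eqref{eq:Kinv-norm-bound} by bounding the extreme eigenvalues of $\bK^{-1}$ via $\mu_{\max}(\bK^{-1}) = 1/\mu_{\min}(\bK)$ and the inequality $1/(1-x)-1 \leq 2x$; your resolvent identity $\bK^{-1} - \tfrac{1}{\norml[1]{\lambda}} I_n = -\tfrac{1}{\norml[1]{\lambda}} \bK^{-1}(\bK - \norml[1]{\lambda} I_n)$ is an equivalent and equally standard route to the same bound.
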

\begin{proof}

Equation~\eqref{eq:K-norm-bound} follows from Lemma~8 of \citet{hmx20}.
For some universal constant $\cp$ and sufficiently large $\ccc$, we have the following:
\begin{align*}
    &\pr{ \normop{\bK - \norm[1]{\lambda} I_n} \geq \ccc \norm[1]{\lambda} \paren{\sqrt{\frac{n + \log \frac{1}{\delta}}{d_2}} + \frac{n + \log \frac{1}{\delta}}{d_\infty}}}\\
    &\quad\leq 2 \cdot 9^n \cdot \exp\paren{-\cp\min\paren{\frac{\ccc^2\norm[1]{\lambda}^2(n + \log \frac{1}{\delta})}{\norm[2]{\lambda}^2 d_2},\frac{\ccc\norm[1]{\lambda}(n + \log \frac{1}{\delta})}{\norm[\infty]{\lambda} d_{\infty}}}}\\
    &\quad \leq 2 \exp\paren{n\log 9 - \cp \min(\ccc^2, \ccc) \paren{n + \log \frac{1}{\delta}}} \leq \delta.
\end{align*}

If equation~\eqref{eq:Kinv-norm-bound} holds and $\cc$ is sufficiently large, then all eigenvalues of $\bK$ are strictly positive and $\bK$ is invertible.
We now derive equation~\eqref{eq:Kinv-norm-bound} by bounding the eigenvalues of $\bK^{-1}$, assuming that the event in equation~\eqref{eq:K-norm-bound} occurs, and rescaling $\ccc$:
\begin{align*}
\lefteqn{
    \normop{\bK^{-1} - \frac{1}{\norm[1]{\lambda}} I_n}
    } \\
    &\leq \max\paren{\mu_{\max}(\bK^{-1}) - \frac{1}{\norm[1]{\lambda}}, - \mu_{\min}(\bK^{-1}) + \frac{1}{\norm[1]{\lambda}}} \\
    &= \max\paren{\frac{1}{\mu_{\min}(\bK)} - \frac{1}{\norm[1]{\lambda}}, - \frac{1}{\mu_{\max}(\bK)} + \frac{1}{\norm[1]{\lambda}}} \\
    &\leq \frac{1}{\norm[1]{\lambda}} \max\paren{\frac{1}{1 - \ccc  \paren{\sqrt{\frac{n + \log \frac{1}{\delta}}{d_2}} + \frac{n + \log \frac{1}{\delta}}{d_\infty}}} - 1, 1- \frac{1}{1 + \ccc  \paren{\sqrt{\frac{n + \log \frac{1}{\delta}}{d_2}} + \frac{n + \log \frac{1}{\delta}}{d_\infty}}}} \\
    &\leq \frac{1}{\norm[1]{\lambda}}\cdot 2\ccc\paren{\sqrt{\frac{n + \log \frac{1}{\delta}}{d_2}} + \frac{n + \log \frac{1}{\delta}}{d_\infty}}.\qedhere
\end{align*}
\end{proof}

\begin{proof}[Proof of Lemma~\ref{lemma:e1}]
Conditioned on $\bX$, $y^\T (\bK^{-1} - \norml[1]{\lambda}^{-1} I_{n}) \bX \bx'$ is a univariate subgaussian random variable with mean 0 and variance proxy at most $\norml{y^\T (\bK^{-1} - \norml[1]{\lambda}^{-1} I_{n}) \bX}^2 \norml[\infty]{\lambda}$, as long as $\bK$ is invertible.
We bound the variance proxy and show that $\bK$ is invertible with high probability by applying Lemma~\ref{lemma:gram-evals} for some universal $\cp$ with probability $1 - \frac{\delta}{2}$:
\begin{align*}
    \norm{y^\T (\bK^{-1} - \frac{1}{\norm[1]{\lambda}} I_{n}) \bX}^2 \norm[\infty]{\lambda}
    &\leq \norm[2]{y}^2 \norm{\bK^{-1} - \frac{1}{\norm[1]{\lambda}} I_{n}}^2 \norm{\bK} \norm[\infty]{\lambda} \\
    &\leq n \cdot \frac{\cp}{\norm[1]{\lambda}^2} \paren{\sqrt{\frac{n + \log \frac{1}{\delta}}{d_2}} + \frac{n + \log \frac{1}{\delta}}{d_\infty}}^2 \cdot 2 \norm[1]{\lambda} \norm[\infty]{\lambda} \\
    &=  \frac{2\cp n}{d_\infty} \paren{\sqrt{\frac{n + \log \frac{1}{\delta}}{d_2}} + \frac{n + \log \frac{1}{\delta}}{d_\infty}}^2 
\end{align*}
    We observe that the bound holds for a proper choice of $\ccc$ for a standard concentration bound for a subgaussian random variable.
\end{proof}

\subsection{Concentration of leave-one-out terms}\label{asec:e2}

\begin{lemma}\label{lemma:e2}
    Let $(\bX, \bZ, y) \in \R^{n \times d} \times \R^{n \times d} \times \R^n$ and $(\bx', \bz', y') \in \R^d \times \R^{d} \times \R$ be $\lambda$-anisotropic subgaussian samples that are independent of one another.
    Pick any $\delta \in (0, \half)$.
    There exists a universal constant $\ccc$ such that with probability $1 - \delta$,
    \[\frac{1}{\norm[1]{\lambda}} \abs{y^\T \bX \bx'} \leq c \paren{\sqrt{\frac{n \log \frac{1}{\delta}}{d_2}} + \frac{\sqrt{n} \log \frac{1}{\delta}}{d_\infty}}.\]
\end{lemma}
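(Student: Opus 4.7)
The plan is to exploit the independence of $\bX$ and $\bx'$ by conditioning on $\bx'$ and then applying two levels of concentration: subgaussian concentration on the bilinear form given $\bx'$, and a subexponential (chi-squared-type) tail bound on the conditional variance proxy.

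First, I would write $y^\T \bX \bx' = \sum_{i=1}^n y_i (\diag(\lambda)^{1/2} \bz_i)^\T \bx' = \sum_{i=1}^n y_i \bz_i^\T \diag(\lambda)^{1/2} \bx'$. Conditional on $\bx'$, this is a sum of $n$ independent mean-zero subgaussian random variables, each with variance proxy $\norm[2]{\diag(\lambda)^{1/2} \bx'}^2 = \sum_{j=1}^d \lambda_j (\bx'_j)^2$. A standard subgaussian tail bound gives, with conditional probability $1 - \delta/2$,
\[
  |y^\T \bX \bx'| \;\leq\; C_0 \sqrt{n \log\tfrac{1}{\delta}}\,\sqrt{\textstyle\sum_j \lambda_j (\bx'_j)^2}.
\]

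Next, I would bound the random variance proxy. Since $\bx'_j = \sqrt{\lambda_j}\,\bz'_j$ with $\bz'_j$ a centered $1$-subgaussian, $\sum_j \lambda_j (\bx'_j)^2 = \sum_j \lambda_j^2 (\bz'_j)^2$ is a nonnegative weighted sum of squares of independent subgaussians with mean $\norm[2]{\lambda}^2$. The centered variables $\lambda_j^2((\bz'_j)^2 - 1)$ are subexponential with variance proxy $\lambda_j^4$ and scale $\lambda_j^2$, so Bernstein's inequality yields, with probability $1 - \delta/2$,
\[
  \textstyle\sum_j \lambda_j^2 (\bz'_j)^2 \;\leq\; \norm[2]{\lambda}^2 + C_1\!\left(\sqrt{\norm[4]{\lambda}^4 \log\tfrac{1}{\delta}} \;+\; \norm[\infty]{\lambda}^2 \log\tfrac{1}{\delta}\right).
\]
Using the elementary bound $\norm[4]{\lambda}^4 \leq \norm[2]{\lambda}^2 \norm[\infty]{\lambda}^2$ and $\sqrt{a+b+c} \leq \sqrt{a}+\sqrt{b}+\sqrt{c}$, the square root of the variance proxy is bounded by $C_2\bigl(\norm[2]{\lambda} + \norm[2]{\lambda}^{1/2}\norm[\infty]{\lambda}^{1/2}(\log\tfrac{1}{\delta})^{1/4} + \norm[\infty]{\lambda}\sqrt{\log\tfrac{1}{\delta}}\bigr)$.

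Combining the two estimates via a union bound and dividing by $\norm[1]{\lambda}$ yields a bound of the form $C\bigl(\sqrt{n\log(1/\delta)}\,\tfrac{\norm[2]{\lambda}}{\norm[1]{\lambda}} + \sqrt{n}\log(1/\delta)\,\tfrac{\norm[\infty]{\lambda}}{\norm[1]{\lambda}}\bigr)$ plus a cross term proportional to $\sqrt{n\log(1/\delta)}\cdot\norm[2]{\lambda}^{1/2}\norm[\infty]{\lambda}^{1/2}(\log\tfrac{1}{\delta})^{1/4}/\norm[1]{\lambda}$. The final step is to absorb this cross term using the weighted AM-GM inequality $\norm[2]{\lambda}^{1/2}\norm[\infty]{\lambda}^{1/2}(\log\tfrac{1}{\delta})^{1/4} \leq \tfrac{1}{2}\norm[2]{\lambda} + \tfrac{1}{2}\norm[\infty]{\lambda}\sqrt{\log\tfrac{1}{\delta}}$, which recovers the stated expression after recognizing $d_2 = \norm[1]{\lambda}^2/\norm[2]{\lambda}^2$ and $d_\infty = \norm[1]{\lambda}/\norm[\infty]{\lambda}$. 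I do not expect any part of this argument to be particularly delicate; the only step requiring care is keeping track of the interplay between the subgaussian and subexponential tails so that the final bound is expressible solely through the dimension proxies $d_2$ and $d_\infty$.
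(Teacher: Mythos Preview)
Your argument is correct, but it takes a different route from the paper. You condition on $\bx'$, apply subgaussian concentration in the $i$-direction, and then separately control the random variance proxy $\sum_j \lambda_j^2 (\bz'_j)^2$ via a Bernstein-type bound, finally absorbing the resulting cross term with AM--GM. The paper instead collapses the $i$-sum first: writing $\tilde{\bz}_j := n^{-1/2}\sum_{i=1}^n y_i \bz_{i,j}$ (each a $1$-subgaussian), one has $y^\T \bX \bx' = \sqrt{n}\sum_{j=1}^d \lambda_j \tilde{\bz}_j \bz'_j$, a single weighted sum of products of independent subgaussians. Invoking the fact that such products are subexponential (Vershynin, Lemma~2.7.7), Bernstein's inequality applied once to this sum immediately yields $\sqrt{n}\bigl(\|\lambda\|_2\sqrt{\log(1/\delta)} + \|\lambda\|_\infty \log(1/\delta)\bigr)$, and dividing by $\|\lambda\|_1$ gives the claim directly. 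The paper's approach is shorter---one concentration step rather than two, and no cross term to clean up---while yours has the advantage of using only the most basic subgaussian tail bound and never needing the ``product of subgaussians is subexponential'' lemma.
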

\begin{remark}
    To ensure that $\norm[1]{\lambda}^{-1} \absl{y^\T \bX \bx'} \leq \eps$ with probability $1 - \delta$ for some $\eps > 0$, it suffices to show that
    \[d_2 \geq \frac{\ca n \log \frac{1}{\delta}}{\eps^2}
    \quad \text{and} \quad
    d_\infty \geq \frac{\cb \sqrt{n} \log \frac{1}{\delta}}{\eps}\]
    for universal constants $\ca, \cb$.
\end{remark}



\begin{proof}
    We can rewrite $y^\T \bX \bx'$ for some vector of 1-subgaussian random variables $\tilde{\bz} \in \R^d$.
    \begin{align*}
        y^\T \bX \bx'
        &= y^\T \bZ \diag(\lambda) \bz'
        = \sqrt{n} \tilde{\bz}^\T \diag(\lambda) \bz'
        = \sqrt{n} \sum_{i=1}^d \lambda_i \tilde{\bz}_i \bz'_i.
    \end{align*}
    By Lemma~2.7.7 of \cite{vershynin14}, each $\tilde{\bz}_i \bz'_i$ is an independent $(1,2)$-subexponential random variable.
    Thus, $\sum_{i=1}^d \lambda_i \tilde{\bz}_i \bz'_i$ is $( \norml[2]{\lambda}^2, 2 \norml[\infty]{\lambda})$-subexponential, and with probability $1 - \delta$,
    \[\abs{ y^\T \bX \bx'} \leq \sqrt{n} \ccc \paren{\sqrt{\norm[2]{\lambda}^2 \log \frac{1}{\delta}} + \norm[\infty]{\lambda} \log \frac{1}{\delta}}.\]
    We have the claim by dividing by $\norml[1]{\lambda}$.
\end{proof}

\subsection{Anti-concentration for independent leave-one-out terms}\label{asec:e3}

\begin{lemma}\label{lemma:e3}
  Let $(\bX,\bZ,y) \in \R^{n \times d} \times \R^{n \times d} \times \R^n$
  and
  $(\bX',\bZ',y') \in \R^{m \times d} \times \R^{m \times d} \times \R^m$
  be $\lambda$-anisotropic subgaussian samples that are independent of each other.
  Pick any $t>0$ and $\delta \in (0, \half)$. 
    For universal constants $\ca$, $\cb$, $\cc$, and $\cd$, if 
    \[
    n \geq \ca \paren{\log \frac{1}{\delta}}^2,
    \quad
     \exp\paren{\frac{t^2 d_2}{\cb n}} \leq m \leq n,
    \quad
    d_2 \geq \cc n \log \log \frac{1}{\delta},
    \quad \text{and} \quad
    d_{\infty}^2 \geq\cd d_{2} n,\]
    then with probability $1 - \delta$
    \[\max_{i \in [m]} \frac{1}{\norm[1]{\lambda}}  y^\T \bX \bx_i' \geq t.\]
\end{lemma}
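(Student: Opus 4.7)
The plan is to condition on $(\bX,y)$ and exploit the fact that the random variables $\mathbf{S}_i := \frac{1}{\norm[1]{\lambda}} y^\T \bX \bx_i'$ for $i \in [m]$ are conditionally i.i.d. across $i$. Each $\mathbf{S}_i$ will be approximated by a centered Gaussian via the Berry--Esseen theorem, and then a standard tail bound for the maximum of conditionally independent (approximately) Gaussian variables yields the claim.

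First I would compute the unconditional variance $\mathbb{E}[\mathbf{S}_i^2] = n/d_2$ by expanding and using the independence of coordinates across $k$ and $j$, and set the conditional variance $\sigma^2(\bX) := \mathbb{E}[\mathbf{S}_i^2 \mid \bX] = \frac{1}{\norm[1]{\lambda}^2} \sum_{j=1}^d \lambda_j (y^\T \bX_\col{j})^2$. Since each $y^\T \bX_\col{j}$ has variance $n\lambda_j$ and is $\sqrt{n\lambda_j}$-subgaussian, $\sigma^2(\bX)$ is a weighted sum of independent centered subexponentials with weights $\lambda_j^2$. Bernstein's inequality combined with the hypothesis $d_\infty^2 \geq \cd d_2 n$ (equivalently $\norm[2]{\lambda}^2 \gtrsim n\norm[\infty]{\lambda}^2$) and $n \gtrsim (\log(1/\delta))^2$ should give $\sigma^2(\bX) \geq \tfrac{1}{2} n/d_2$ with probability at least $1 - \delta/4$ over $\bX$.

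Next, conditional on $\bX$, $\mathbf{S}_i$ is a weighted sum of the independent $1$-subgaussians $\bz'_{i,j}$ with weights proportional to $\sqrt{\lambda_j}\,(y^\T \bX_\col{j})$. The Berry--Esseen error, of order $\sum_j \lambda_j^{3/2} |y^\T \bX_\col{j}|^3 / (\norm[1]{\lambda}^3 \sigma(\bX)^3)$, concentrates around $\norm[3]{\lambda}^3 / \norm[2]{\lambda}^3 \leq \norm[\infty]{\lambda}/\norm[2]{\lambda} \leq 1/\sqrt{\cd n}$ by the same hypothesis. I would then lower bound $\pr{\mathbf{S}_i \geq t \mid \bX} \geq \bar\Phi(t/\sigma(\bX)) - O(1/\sqrt{n})$ and invoke the Mills ratio estimate $\bar\Phi(t/\sigma(\bX)) \gtrsim \exp(-t^2/(2\sigma(\bX)^2))/(1 + t/\sigma(\bX))$. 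The constraint $m \leq n$ together with $m \geq \exp(t^2 d_2/(\cb n))$ implies $t^2 d_2/n \leq \cb \log n$, so for $\cb$ chosen sufficiently small the Gaussian tail dominates the Berry--Esseen error, yielding $\pr{\mathbf{S}_i \geq t \mid \bX} \geq \tfrac{1}{2} \bar\Phi(t/\sigma(\bX))$.

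Finally, by conditional independence of the $\mathbf{S}_i$ given $\bX$, $\pr{\max_{i \in [m]} \mathbf{S}_i < t \mid \bX} \leq \exp(-\tfrac{m}{2} \bar\Phi(t/\sigma(\bX)))$. Substituting $m \geq \exp(t^2 d_2/(\cb n))$ and choosing $\cb$ strictly less than $2$ makes $m \bar\Phi(t/\sigma(\bX)) \geq \exp(\Omega(t^2 d_2/n))$, and the hypothesis $d_2 \geq \cc n \log\log(1/\delta)$ then boosts this past $\log(2/\delta)$. Combining with the good-$\bX$ event from the first step completes the argument. The main obstacle I anticipate is the tight balancing in the third step: the Berry--Esseen error of order $1/\sqrt{n}$ must be smaller than the Gaussian tail $\bar\Phi(t/\sigma(\bX))$, which can be as small as $n^{-\cb/2}$, forcing careful tuning of the universal constant $\cb$ (in terms of $\cd$) so that the Gaussian tail still dominates the approximation error in the regime where $d_2$ is close to $n \log n$.
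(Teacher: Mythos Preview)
Your high-level strategy---condition on $\bX$, lower-bound the conditional variance $\sigma^2(\bX)$, approximate each $\mathbf S_i$ by a Gaussian via Berry--Esseen, and then use conditional independence to control the maximum---is exactly the paper's approach, and your identification of the balancing act between the Gaussian tail and the Berry--Esseen error is correct.

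The one place where your outline is underspecified is the control of the Berry--Esseen error itself. You write that the Lyapunov ratio ``concentrates around $\norm[3]{\lambda}^3/\norm[2]{\lambda}^3 \leq 1/\sqrt{\cd n}$,'' but making this precise requires a high-probability upper bound on the random cubic sum $\sum_j \lambda_j^3 |V_j|^3$ (with $V_j = y^\T \bZ_\col{j}$), and this must hold except with probability $O(\delta)$. Since each $|V_j|^3$ is only sub-Weibull (not subexponential), a direct concentration argument for this sum at the required confidence level is not immediate; Markov or Chebyshev will not deliver failure probability $e^{-c\sqrt n}$, which is what $n \geq \ca(\log(1/\delta))^2$ demands. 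The paper sidesteps this by instead bounding the Berry--Esseen ratio crudely by $\max_j |\lambda_j V_j|\,/\,\sigma(\bX)\norm[1]{\lambda}$, and then controlling $\max_j |\lambda_j V_j|$: it partitions the $d$ coordinates into $n$ buckets of comparable $\ell_2$ weight and applies Hanson--Wright to each bucket, obtaining $\max_j |\lambda_j V_j| \leq \norm[1]{\lambda} n^{1/4}/\sqrt{d_2}$ with the needed probability. This yields a Berry--Esseen error of order $n^{-1/4}$ rather than the $n^{-1/2}$ you anticipate, but $n^{-1/4}$ is still enough: the paper compares it against a Gaussian tail of size $m^{-1/6}$ (by choosing $\cb \leq 1/8$ so that $t/\sigma(\bX) \leq \sqrt{(\log m)/4}$), and since $m \leq n$ the error is dominated. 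So your tuning worry is real, and the paper resolves it by being a bit more generous with the Gaussian tail exponent rather than squeezing the Berry--Esseen constant.
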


\begin{proof}
\newcommand{\bxig}{\text{$\bX$ is good}}
\newcommand{\bxing}{\text{$\bX$ is not good}}

Let $\bq_{j} := \norml[1]{\lambda}^{-1} y^\T \bZ_{\cdot, j}$ for $j \in [d]$, where $\bZ_{\cdot, j} = (\bz_{1,j}, \dots, \bz_{n,j}) \in \R^n$.
Note that 
\[\frac{1}{\norm[1]{\lambda}}  y^\T \bX \bx_i' = \sum_{j=1}^d  \bq_{j} \lambda_j  \bz_{i,j}'\] and all $\bq_{j}$ and $\bz_{i,j}'$ are independent and that $\bq_j$ is subgaussian with variance proxy $\fracl{n}{\norml[1]{\lambda}^2}$.
The proof of the claim is powered by the Berry-Esseen theorem and comes in two parts:
\begin{itemize}
    \item We first show that $\bX$ is well-behaved with high probability; we say that $\bX$ is \emph{good} if the following hold:
    \begin{equation}\label{eq:good-norm}
         \sum_{j=1}^d \lambda_j^2 \bq_j^2 \geq \frac{n}{2 d_2},
    \end{equation}
    \begin{equation}\label{eq:good-max}
         \max_j \abs{\lambda_j \bq_j} \leq \frac{n^{1/4}}{\sqrt{d_2}}.
    \end{equation}
    We prove that $\prl{\bxig} \geq 1 - \frac{2\delta}{3}$.

    \item 
    Then, we use the Berry-Esseen Theorem to show that, for each $i \in [m]$,
    \[\pr{\frac{1}{\norm[1]{\lambda}} y^{\T} \bX \bx_i' \leq t \mid \bxig } \leq \paren{\frac{\delta}{3}}^{1/m} . \] 
\end{itemize}
Because $\norm[1]{\lambda}^{-1} y_i' y^\T \bX \bx_i'$ for $i \in [m]$ are conditionally independent given $\bX$, we obtain the desired statement:
\begin{align*}
    \pr{\max_{i \in [m]} \frac{1}{\norm[1]{\lambda}} y_i' y^\T \bX \bx_i' \leq t}
    &\leq \pr{\bxing} + \pr{\max_{i \in [m]} \frac{1}{\norm[1]{\lambda}} y_i' y^\T \bX \bx_i' \leq t \mid \bxig} \\
    & \leq \frac{2\delta}{3} + \prod_{i=1}^m \pr{ \frac{1}{\norm[1]{\lambda}} y_i' y^\T \bX \bx_i' \leq t \mid \bxig} \leq \delta .
\end{align*}

\paragraph{$\bX$ satisfies \eqref{eq:good-norm} with high probability.}
The claim follows from the Hanson-Wright inequality \cite{vershynin14}, the lower-bound on $n$ with respect to $\delta$, the assumption $d_\infty^2 \geq c_4 d_2 n$, and the fact $\norm[4]{\lambda}^4 \leq \norm[2]{\lambda}^2 \norm[\infty]{\lambda}^2$.
\begin{align*}
    \pr{\sum_{j=1}^d \lambda_j^2 \bq_j^2 \leq \frac{n}{2d_2}}
    &\leq \pr{\abs{\sum_{j=1}^d \lambda_j^2 \bq_j^2 - \EE{\sum_{j=1}^d \lambda_j^2 \bq_j^2}} \leq \frac{n}{2d_2}} \\
    &\leq 2 \exp\paren{-c'' \min\paren{\frac{n^2}{4d_2^2} \cdot \frac{\norm[1]{\lambda}^4}{n^2} \cdot \frac{1}{\norm[4]{\lambda}^4}, \ \frac{n}{2d_2} \cdot \frac{\norm[1]{\lambda}^2}{n} \cdot \frac{1}{\norm[\infty]{\lambda}^2}}} \\
    &\leq 2\exp\paren{- \frac{c'' d_\infty^2}{4d_2}} 
    \leq 2\exp\paren{- \frac{c'' c_4 n}{4}}
    \leq \frac{\delta}{3},
\end{align*}
for sufficiently large absolute constant $c_4$.

\paragraph{$\bX$ satisfies \eqref{eq:good-max} with high probability.}

We introduce a different sequence of random variables $\br_1, \dots, \br_{n}$ to eliminate any dependence on $d$.
Set $0 = k_0 < k_1 < \dots < k_{n} = d$ such that for all $i \in [n]$, 
\[\sum_{j=k_{i-1}+1}^{k_{i}} \lambda_j^2 \leq \frac{2\norm[2]{\lambda}^2}{n}.\]
Such a partition is possible because we can require that $\norml[\infty]{\lambda}^2 \leq \fracl{\norml[2]{\lambda}^2}{n}$ by assuming $\cd$ is large enough.
Let \[\br_i = \sum_{j=k_{i-1}+1}^{k_{i}} \lambda_j^2 \bq_j^2.\]  
Note that all $\br_i$ are independent and that \[\sum_{j=1}^d \lambda_j^2 \bq_j^2 = \sum_{i=1}^{n} \br_i.\] 
Because $\EEl{\bq_j} = 0$ and $\EEl{\bq_j^2} = \fracl{n}{\norml[1]{\lambda}^2}$, we can bound $\EEl{\br_i}$:
\[	\EE{\br_i} 
		\leq \frac{2n\norm[2]{\lambda}^2}{ n \norm[1]{\lambda}^2}
		\leq \frac{2}{d_2}. \]
		
We upper-bound $\max_j \absl{\lambda_j \bq_j}$ by noting that $\max_j \absl{\lambda_j \bq_j} \leq \max_i \sqrt{\br_i}$.
By the Hanson-Wright inequality, the subgaussianity of $\bq_i$, and the lower-bound on $n$ with respect to $\delta$,
\begin{align*}
    \pr{\br_i \geq \frac{ \sqrt{n}}{d_2}}
    &\leq \pr{\br_i \geq \EE{\br_i} + \frac{\sqrt{n}}{2 d_2}} \\
    &\leq 2\exp\paren{-c' \min\paren{\frac{ n}{4d_2^2}\cdot \frac{ \norm[1]{\lambda}^4}{ n^2} \cdot \frac{1}{\sum_{j=k_{i-1}+1}^{k_i} \lambda_j^4}, \frac{ \sqrt{n}}{2 d_2}\frac{ \norm[1]{\lambda}^2}{n} \frac{1}{\max_{k_{i-1} < j \leq k_i} \lambda_j^2}}} \\
    &\leq 2\exp\paren{-c' \min \paren{ \frac{n}{4}, \frac{\sqrt{n}}{2}}} 
    \leq \frac{\delta}{3n}
\end{align*}
for some absolute constants $c'$ and for a sufficiently large setting of $c_1$.
Thus, \eqref{eq:good-max} is satisfied with probability $1 - \frac{\delta}{3}$ by a union bound.

\paragraph{Bound on term given good $\bX$.}
We use the Berry-Esseen theorem \cite{berry41} to relate the maximization over $\norml[1]{\lambda}^{-1} y^\T \bX \bx_1'$ to a maximization over standard Gaussians.
Consider some fixed good $\bX$ (and hence, fixed $\bq_j$ for all $j \in [d]$). Then, for some absolute constant $\ccc$ and for univariate standard Gaussian $\bg$,
\[\sup_{t \in \R} \abs{\pr{\frac{\norm[1]{\lambda}^{-1} y^\T \bX \bx_1'}{\sqrt{\sum_{j=1}^d \bq_j^2\lambda_j^2 \EE{ \bz_{i,j}^{\prime2}}}} \leq t} - \pr{\bg \leq t}} 
\leq \frac{\ccc}{\sqrt{\sum_{j=1}^d \bq_j^2\lambda_j^2 \EE{ \bz_{i,j}^{\prime2}}}} \cdot \max_{j \in [d] }\frac{\abs{\bq_{j}^3}\lambda_j^3\EE{\abs{\bz_{i,j}^{\prime3}} }}{\bq_{j}^2\lambda_j^2\EE{\bz_{i,j}^{\prime2} }}.\]

Because each $\bz_{i,j}$ is subgaussian, $\EEl{\bz_{i,j}^3} \leq \rho = O(1)$ for some $\rho$. 
We simplify the expression by plugging in the second and third moments of $\bz_{i,j}'$ and rescaling $t$:
\[ \sup_{t \in \R}\abs{\pr{\frac{1}{\norm[1]{\lambda}} y^\T \bX \bx_1' \leq t} - \pr{\bg \leq \frac{t}{\sqrt{\sum_{j=1}^d \lambda_j^2 \bq_j^2}}}} \leq \frac{\ccc \rho}{\sqrt{\sum_{j=1}^d \lambda_j^2\bq_j^2  }} \cdot \max_{j \in [d] } \abs{\bq_j\lambda_j}.\]

Because we assume that $\bX$ is good, we plug in our upper-bound on $\max_j \absl{\lambda_j \bq_j}$ and lower-bound on $\sum_{j=1}^d \lambda_j^2 \bq_j^2$.
\begin{align*}
	\pr{\frac{1}{\norm[1]{\lambda}} y^\T \bX \bx_1' \leq t}
	&\leq \pr{\bg \leq \frac{t}{\sqrt{\sum_{j=1}^d \lambda_j^2 \bq_j^2}}} + \frac{\ccc \rho}{\sqrt{\sum_{j=1}^d \lambda_j^2 \bq_j^2 }} \cdot \max_{j \in [d] } \abs{\lambda_j \bq_j} \\
	&\leq \pr{\bg \leq \frac{t\sqrt{2d_2}}{\sqrt{n}}} + \frac{ \ccc \rho \sqrt{2d_2}}{\sqrt{n}} \cdot   \frac{n^{1/4}}{\sqrt{d_2}} \\
	&\leq \pr{\bg \leq \frac{t\sqrt{2d_2}}{\sqrt{n}}} + \frac{\sqrt{2}c \rho}{n^{1/4}}
\end{align*}
	
We now bound the first term by invoking the
Mills ratio bound for the Gaussian distribution function (Fact~\ref{fact:mills})
with the assumption that $\cb \leq \frac{1}{8}$.
The remainder of the inequalities follow by enforcing that $\ca$ and $\cb$ be sufficiently large and small respectively:
\begin{align*}
	\pr{\bg \leq \frac{t\sqrt{2d_2}}{\sqrt{n}}}
	&\leq \pr{\bg \leq \sqrt{\frac{1}{4}\log m}}\\
	& 
	\leq 1 - \frac1{\sqrt{2\pi}} \paren{\frac{1}{\sqrt{\log (m) / 4}} - \frac{1}{(\log(m) / 4)^{3/2}}} \exp\paren{-\frac{\log m}{8}} \\
	&\leq 1 - \frac{1}{m^{1/6}}
	\leq 1 - \frac{1}{\sqrt{m}} -  \frac{\sqrt{2}c \rho}{n^{1/4}}.
\end{align*}

	Therefore, 
	\[\pr{\frac{1}{\norm[1]{\lambda}} y^\T \bX \bx_1' \leq t \mid \bxig} \leq 1 - \frac{1}{\sqrt{m}} \leq \exp\paren{-\frac{1}{\sqrt{m}}} \leq \paren{\frac{\delta}{3}}^{1/m},\]
	which above holds when $m \geq \parenl{\log \frac{3}{\delta}}^2$ and is ensured by a sufficiently large choice of $\cc$.
\end{proof}

The following well-known fact is the Mills ratio bound.
\begin{fact} \label{fact:mills}
Let $\Phi$ denote the standard Gaussian distribution function. Then for any $t \geq 0$,
\[ \left( \frac1t - \frac1{t^3} \right)  \cdot \frac1{\sqrt{2\pi}} e^{-t^2/2} \leq 1 - \Phi(t) \leq \frac1t \cdot \frac1{\sqrt{2\pi}} e^{-t^2/2} . \]
\end{fact}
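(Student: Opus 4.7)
The statement to prove is the standard Mills ratio sandwich for the Gaussian tail $1-\Phi(t)$. The plan is to derive both inequalities by successive integrations by parts of the tail integral $1-\Phi(t) = \int_t^{\infty}\phi(u)\,du$, where $\phi(u) = (2\pi)^{-1/2}e^{-u^{2}/2}$. The single structural fact I will use is the identity $\phi'(u) = -u\phi(u)$, which lets me rewrite $\phi(u) = -\phi'(u)/u$ and turns the integrand into a product whose antiderivative is immediate.

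For the upper bound, I substitute $\phi(u) = -\phi'(u)/u$ into the tail integral and integrate by parts with $w = 1/u$ and $dz = -\phi'(u)\,du$. The boundary term at $u=t$ contributes $\phi(t)/t$, the boundary at infinity vanishes, and the surviving integral $\int_t^{\infty}\phi(u)/u^{2}\,du$ is non-negative. Discarding it yields $1-\Phi(t) \le \phi(t)/t$.

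For the lower bound, I apply the same integration-by-parts template a second time to the leftover integral, now with $w = 1/u^{3}$ and $dz = -\phi'(u)\,du$. This produces a boundary term $\phi(t)/t^{3}$ along with a new non-negative integral $3\int_t^{\infty}\phi(u)/u^{4}\,du$. Discarding the latter gives $\int_t^{\infty}\phi(u)/u^{2}\,du \le \phi(t)/t^{3}$, and substituting back into the first identity yields $1-\Phi(t) \ge (1/t - 1/t^{3})\phi(t)$, which is the advertised lower bound.

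There is no real obstacle here: both inequalities reduce to the same template applied once and twice, respectively, and iterating the scheme actually recovers the full asymptotic expansion of the Gaussian tail. The degenerate case $t=0$ is trivial in both directions (the upper bound reads $+\infty$ and the lower bound is negative), so the argument only needs to be carried out for $t > 0$.
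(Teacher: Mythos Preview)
Your argument is correct: the two integrations by parts using $\phi'(u) = -u\phi(u)$ give exactly the stated sandwich, and your handling of the boundary case $t=0$ is fine. The paper itself does not prove this statement---it is recorded as a ``well-known fact'' and left without argument---so there is no alternative approach to compare against; your derivation is the standard one.
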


\subsection{Modification of Theorem~\ref{thm:anisotropic} to support dependent labels}\label{assec:dependent-labels}

While an apparent weakness of Theorem~\ref{thm:anisotropic} is the fixed labels $y$, this can be surmounted.
Here, we outline how the proof can be easily modified to include labels $\by_i = \sign{v^\T \bx_i}$ for some unit vector $v$ for the isotropic Gaussian case; we believe this can be further generalized, but we present this version for the sake of simplicity.

\begin{theorem}[Lower-bound on SVP threshold with dependent labels]\label{thm:dependent-labels}
    Fix any $v \in \R^d$ with $\norml{v} = 1$, and consider an isotropic Gaussian sample $(\bX,\by)$ where each $\by_i = v^\T \bx_i$ and any $\delta \in (0, \half)$
	For absolute constants $\Ca, \Cb, \Cc, \Cd$, assume that $d$ and $n$ satisfy
\begin{equation}
	  n \geq \Ca \paren{\log \frac{1}{\delta}}^2, \quad d \leq \Cb n \log n, \quad \text{and} \quad
	  d \geq \Cc n \log\frac{1}{\delta}, 
	 \end{equation}
	Then\footnote{The fourth constraint is omitted because $d_2 = d_\infty = d$ in the isotropic case.}, SVP occurs for $\ell_2$-SVM with probability at most $\delta$.
\end{theorem}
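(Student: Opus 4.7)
The plan is to adapt the proof of Theorem~\ref{thm:anisotropic} by exploiting the structural fact that, for an isotropic Gaussian, the component of $\bx_i$ along $v$ is independent of the component orthogonal to $v$. By rotational invariance, I may assume without loss of generality that $v = e_1$, so that $\bu_i := v^\T \bx_i = \bx_{i,1}$ and $\tilde{\bx}_i := \bx_i - \bu_i v$ lies in $v^\perp$. Then $\bu_i \sim \mathcal{N}(0,1)$, the vector $\tilde{\bx}_i$ is isotropic Gaussian supported on the $(d-1)$-dimensional subspace $v^\perp$, and $\bu_i \perp \tilde{\bx}_i$. Since $\by_i = \sign{\bu_i}$, the matrix $\tilde{\bX} := [\tilde{\bx}_1 | \cdots | \tilde{\bx}_n]^\T$ is independent of $\by$; conditional on $\bu = (\bu_1,\dots,\bu_n)$, the labels $\by$ are fixed and $\tilde{\bX}$ is distributed as an isotropic Gaussian sample in dimension $d-1$ with those fixed labels.

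Writing $\bK = \bu\bu^\T + \tilde{\bK}$ with $\tilde{\bK} := \tilde{\bX}\tilde{\bX}^\T$, and $\bX_{\setminus i}\bx_i = \tilde{\bX}_{\setminus i}\tilde{\bx}_i + \bu_i \bu_{\setminus i}$, the key SVP quantity from Proposition~\ref{prop:equiv}(5) decomposes as
\[
\by_i \by_{\setminus i}^\T \bK_{\setminus i}^{-1} \bX_{\setminus i}\bx_i
\;=\; \by_i \by_{\setminus i}^\T \bK_{\setminus i}^{-1} \tilde{\bX}_{\setminus i}\tilde{\bx}_i
\;+\; |\bu_i| \cdot \by_{\setminus i}^\T \bK_{\setminus i}^{-1}\bu_{\setminus i},
\]
where I used the identity $\by_j \bu_j = |\bu_j|$ in the second term. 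Applying Sherman--Morrison to $\bK_{\setminus i} = \tilde{\bK}_{\setminus i} + \bu_{\setminus i}\bu_{\setminus i}^\T$ expresses $\bK_{\setminus i}^{-1}$ as $\tilde{\bK}_{\setminus i}^{-1}$ minus a rank-$1$ correction. Lemma~\ref{lemma:gram-evals} applied to $\tilde{\bX}$ in dimension $d-1$ gives $\|\tilde{\bK}_{\setminus i}^{-1} - \tfrac{1}{d-1}I_{n-1}\| = o(1/d)$ with high probability, while $\chi^2$-concentration gives $\|\bu_{\setminus i}\|^2 = O(n)$, so this correction has operator norm $O(n/d^2) = O(1/(d\log n))$, of strictly lower order than $1/d$.

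With this in hand, I would follow the same three-term decomposition used in the proof of Theorem~\ref{thm:anisotropic}, applied to $\by_i \by_{\setminus i}^\T \tilde{\bK}_{\setminus i}^{-1} \tilde{\bX}_{\setminus i}\tilde{\bx}_i$ after replacing $\bK_{\setminus i}^{-1}$ by $\tilde{\bK}_{\setminus i}^{-1}$ via Sherman--Morrison and absorbing the correction. Since $\tilde{\bX}$ and $\by$ are independent conditional on $\bu$, Lemmas~\ref{lemma:e1}--\ref{lemma:e3} apply verbatim to $\tilde{\bX}$ (with $d_2 = d_\infty = d-1$) and produce the desired lower bound of $1$ on the max over $i \in [m]$. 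For the second piece $|\bu_i| \by_{\setminus i}^\T \bK_{\setminus i}^{-1}\bu_{\setminus i}$, its leading behavior is $\tfrac{|\bu_i|}{d-1}\sum_{j\neq i}|\bu_j| \geq 0$, a non-negative contribution that only helps the bound.

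The main obstacle will be carefully bookkeeping the Sherman--Morrison correction and the extra $|\bu_i| \by_{\setminus i}^\T \bK_{\setminus i}^{-1}\bu_{\setminus i}$ term across the three lemmas, and verifying that both are of lower order than the dominant anti-concentration contribution from Lemma~\ref{lemma:e3}. Once these routine perturbation bounds are in place, the parameter hypotheses on $n$ and $d$ are inherited directly from Theorem~\ref{thm:anisotropic} in the isotropic case ($d_2 = d_\infty = d$), with $d-1$ playing the role of $d$ throughout; this only affects constants.
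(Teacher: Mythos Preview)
Your approach is correct and will go through, but it takes a genuinely different route from the paper's.

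Both arguments rest on the same orthogonal decomposition $\bx_i = (v^\T\bx_i)v + \bx_i'$ with $v^\T\bx_i \perp \bx_i'$, but the paper never touches $\bK_{\setminus i}^{-1}$ and never passes to the $(d-1)$-dimensional space. Instead it makes two observations. First, Lemma~\ref{lemma:e1} already works for arbitrary $\by\in\{-1,1\}^n$: conditional on $\bX_{\setminus i}$ the labels $\by_{\setminus i}$ are determined, $\bx_i$ is still standard Gaussian and independent of $\bX_{\setminus i}$, and $|\by_i|=1$ drops out of the absolute value. Second, for the remaining $\tfrac{1}{d}\by_i\by_{\setminus i}^\T\bX_{\setminus i}\bx_i$ piece, the paper writes
\[
\by_i\by_j\,\bx_j^\T\bx_i \;=\; |v^\T\bx_i\,v^\T\bx_j| \;+\; \operatorname{sign}(v^\T\bx_i\,v^\T\bx_j)\,\bx_j'^{\T}\bx_i'
\;\geq\; v^\T\bx_i\,v^\T\bx_j \;+\; \operatorname{sign}(v^\T\bx_i\,v^\T\bx_j)\,\bx_j'^{\T}\bx_i',
\]
and notes that by symmetry of the $\bx_k'$ (independent of the signs), the right-hand side has exactly the joint distribution of $\bx_j^\T\bx_i$. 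This yields a pointwise lower bound by a process with the fixed-label law, so Lemmas~\ref{lemma:e2} and~\ref{lemma:e3} are invoked directly with $y=\vec{1}$ in dimension $d$, with no Sherman--Morrison and no separate accounting for the $\bu$-contribution.

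Your route buys a clean conditional-independence picture---once $v$ is projected out, $(\tilde{\bX},\by)$ is literally a fixed-label isotropic sample in dimension $d-1$, so all three lemmas apply verbatim---at the price of the rank-one perturbation analysis of $\bK_{\setminus i}^{-1}$ and the extra cross-terms, which you correctly identify as lower order. The paper's route is shorter and avoids that bookkeeping, but relies on the sign-symmetry/stochastic-domination trick rather than your more structural Sherman--Morrison reduction.
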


The proof of the theorem relies on the same decomposition as Theorem~\ref{thm:anisotropic}.
The first step (Lemma~\ref{lemma:e1}) proceeds identically, because the proof of the lemma uses no other properties of the $\by$ besides the fact that it belongs to $\flip^n$.
The following inequality allows the remainder of the proof to proceed identically by fixing $y = \vec{1}$. 
For all $t \in \R$ and $i \in [n]$,
\[\pr{\by_i \by_{\setminus i}^\T \bX_{\setminus i} \bx_i \geq t} \geq \pr{\vec{1}^\T \bX_{\setminus i} \bx_i \geq t}.\]
We can prove this fact for the simple Gaussian setting by taking advantage of the fact that orthogonal components of a spherical Gaussian are independent.
For all $i$, we write $\bx_i = (v^\T \bx_i) v + \bx_i'$ where $v^\T \bx_i' = 0$.
Then
\begin{align*}
    \by_i \by_{\setminus i}^\T \bX_{\setminus i} \bx_i
    &= \sum_{j\neq i} (\by_j \bx_j)^\T(\by_i \bx_i)
    = \sum_{j\neq i} \sign{v^\T \bx_j v^\T \bx_i} [v^\T \bx_j v^\T \bx_i \norml[2]{v}^2 + \bx_j^{\prime \T} \bx_i'] \\
    &= \sum_{j\neq i}[\absl{v^\T \bx_j v^\T \bx_i} + \sign{v^\T \bx_j v^\T \bx_i}\bx_j^{\prime \T} \bx_i']
    \geq \sum_{j\neq i}[{v^\T \bx_j v^\T \bx_i} + \sign{v^\T \bx_j v^\T \bx_i}\bx_j^{\prime \T} \bx_i'].
\end{align*}

By independence and symmetry, each term in the last sum is distributed identically to ${v^\T \bx_j v^\T \bx_i} + \bx_j^{\prime \T} \bx_i' = \bx_j^\T \bx_i$. This gives the claim.

\section{Proofs for Section~\ref{sec:asymptotic}}\label{app:asymp-proofs}

In this section, we give the proof of Theorem~\ref{thm:asymptotic}.

\thmasymptotic*

We divide the proof into two cases, which we each prove in the two following subsections.

\subsection{Below the threshold}

We first consider the case where the dimension is below the threshold, specifically $d = (2-\eps_n) n \log n$.

Our proof follows the same strategy as that of Theorem~\ref{thm:anisotropic}.
Let $m :=  n / \log n$ and assume $n$ is sufficiently large so that $m \leq n/2$.
Using the equivalence from Proposition~\ref{prop:equiv}, it suffices to  show that the following event has probability tending to $1$:
\[
  \max_{i \in [m]}
  \bracket{
    y_i y_{\setminus i}^\T \paren{\bK_{\setminus i}^{-1}-\frac{1}{d} I_{n-1}} \bX_{\setminus i} \bx_i 
    + \frac{1}{d} y_i y_{[m] \setminus i}^\T \bX_{[m] \setminus i} \bx_i 
    + \frac{1}{d} y_i y_{\setminus [m]}^\T \bX_{\setminus [m]} \bx_i
  } \geq 1
  .
\]

\begin{lemma} \label{lem:asymptotic-lb1}
  For any $C_0>0$ and $c_1 \in (0,2)$, there exists $C_2>0$ and $n_0 > 0$ such that the following statements hold for all $n \geq n_0$, and all $\eps_n$ satisfying $2-c_1 \geq \eps_n \geq C_2/\sqrt{\log n}$ for all $n \geq n_0$, with $d = (2-\eps_n)n\log n$:
  \begin{enumerate}
    \item $\displaystyle \pr{\bK_{\setminus i} \text{ is invertible,} \ \max_{i \in [m]} \abs{y_i y_{\setminus i}^\T \paren{ \bK_{\setminus i}^{-1} - \frac{1}{d} I_{n-1} } \bX_{\setminus i} \bx_i } \le \frac{\eps_{n}}{2C_0} } \geq 1 - \frac1n$.

    \item $\displaystyle \pr{ \max_{i \in [m]} \abs{\frac{1}{d} y_i y_{[m] \setminus i}^\T \bX_{[m] \setminus i} \bx_i} \leq \frac{\eps_{n}}{2C_0} } \geq 1 - \frac1n$.

  \end{enumerate}
\end{lemma}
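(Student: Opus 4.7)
The plan is to specialize Lemmas~\ref{lemma:e1} and \ref{lemma:e2} from Section~\ref{sec:nonasymp} to the isotropic Gaussian case (where $d_2 = d_\infty = \norm[1]{\lambda} = d$), apply them to each fixed $i \in [m]$ with per-trial failure probability $\delta = 1/(nm)$, and conclude by a union bound over $i \in [m]$. Since $m \leq n$, the total failure probability is at most $m \cdot 1/(nm) = 1/n$ for each of the two parts, matching the target. The isotropic Gaussian assumption is in fact stronger than what the underlying lemmas require, so no new probabilistic machinery is needed.

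For part (1), I would invoke Lemma~\ref{lemma:e1} with $\delta = 1/(nm)$, noting $\log(1/\delta) \leq 2\log n$ and substituting $d_2 = d_\infty = (2-\eps_n) n \log n$ into its conclusion. The dominant term of the bound becomes
\[
c \sqrt{\tfrac{n \log(1/\delta)}{d_\infty}} \cdot \sqrt{\tfrac{n + \log(1/\delta)}{d_2}} = O\!\paren{\tfrac{1}{\sqrt{(2-\eps_n) \log n}}},
\]
and the secondary term is even smaller. Because $2 - \eps_n \geq c_1$, this simplifies to $O(1/\sqrt{\log n})$ with an absolute constant depending only on $c_1$, so it is bounded by $\eps_n / (2 C_0)$ whenever $\eps_n \sqrt{\log n} \geq C_2$ and $C_2$ is taken sufficiently large in terms of $c_1$ and $C_0$. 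The invertibility of $\bK_{\setminus i}$ also follows from Lemma~\ref{lemma:e1} on the same event.

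For part (2), I would apply Lemma~\ref{lemma:e2} to the $(m-1)$-sample subproblem (replacing the lemma's ``$n$'' by $m-1 \leq n/\log n$), again with $\delta = 1/(nm)$. Dividing by $\norm[1]{\lambda} = d = (2-\eps_n) n \log n$, the resulting bound is
\[
O\!\paren{\sqrt{\tfrac{(m-1) \log n}{d}} + \tfrac{\sqrt{m-1}\, \log n}{d}} = O\!\paren{\tfrac{1}{\sqrt{\log n}}},
\]
using $m - 1 \leq n/\log n$. The same choice of $C_2$ then guarantees this is at most $\eps_n/(2 C_0)$.

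The only real obstacle is the bookkeeping of constants: one must choose $C_2$ (and the threshold $n_0$) large enough so that simultaneously (i) the preconditions in the remarks following Lemmas~\ref{lemma:e1} and \ref{lemma:e2} are met at $\eps = \eps_n/(2C_0)$ and $\delta = 1/(nm)$, and (ii) the raw upper bounds from those lemmas are below $\eps_n/(2C_0)$. Each precondition reduces, after plugging in $d = (2-\eps_n) n \log n$ and $\log(1/\delta) = O(\log n)$, to an inequality of the form $\eps_n^2 \log n \gtrsim 1$ (possibly multiplied by bounded factors involving $2-\eps_n \geq c_1$), and each is therefore implied by $\eps_n \sqrt{\log n} \geq C_2$ for $C_2$ large enough depending only on $c_1$ and $C_0$.
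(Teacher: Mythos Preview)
Your proposal is correct and follows essentially the same route as the paper: invoke Lemma~\ref{lemma:e1} and Lemma~\ref{lemma:e2} at per-index failure level $\delta = 1/(mn)$, union-bound over $i\in[m]$, and check that in the isotropic case with $d=(2-\eps_n)n\log n$, $m=n/\log n$, and $\log(1/\delta)=O(\log n)$ the resulting bounds are $O(1/\sqrt{\log n})$, hence $\le \eps_n/(2C_0)$ once $C_2$ is large enough depending on $c_1,C_0$. The only cosmetic difference is that the paper keeps the factor $(2-\eps_n)$ explicit and phrases the constraint as $(2-\eps_n)\eps_n \gtrsim 1/\sqrt{\log n}$, whereas you absorb $2-\eps_n\ge c_1$ into the constant immediately; both are equivalent.
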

\begin{proof}
  We start with the first claim.
  By Lemma~\ref{lemma:e1} and a union bound, we have with probability at least $1 - 1/n$, $\bK_{\setminus i}$ is invertible and
  \begin{align*}
    \max_{i \in [m]}
    \abs{ y_i y_{\setminus i}^\T \paren{ \bK_{\setminus i}^{-1} - \frac{1}{d} I_{n-1} } \bX_{\setminus i} \bx_i }
    & \le C \sqrt{\frac{n \log(mn)}{d}}
    \paren{ \sqrt{\frac{n + \log(mn)}{d}} + \frac{n + \log(mn)}{d} } .
  \end{align*}
  For sufficiently large $n$, we have $d \geq c_1 n \log n$ and
  \begin{align*}
    C \sqrt{\frac{n \log(mn)}{d}}
    \paren{ \sqrt{\frac{n + \log(mn)}{d}} + \frac{n + \log(mn)}{d} }
    & \leq
    2C \sqrt{\frac{n \log(mn)}{d}} \sqrt{\frac{n + \log(mn)}{d}} \\
    & \leq \frac{3Cn\sqrt{\log n}}{(2-\eps_n)n\log n} + \frac{4C\sqrt{n}\log n}{c_1 n\log n}
  .
  \end{align*}
  The second term on the right-hand side is at most $C_2/(4C_0\sqrt{\log n})$ for sufficiently large $n$, and hence at most $\eps_n/(4C_0)$.
  The first term on the right-hand side is also at most $\eps_n / (4C_0)$ provided that
  \begin{equation*}
    (2-\eps_n) \eps_n \geq \frac{12C_0 C}{\sqrt{\log n}} ,
  \end{equation*}
  which is equivalent to
  \begin{equation*}
    \eps_n \geq 1 - \sqrt{1 - \frac{12C_0 C}{\sqrt{\log n}}}
  \end{equation*}
  (since we already assume $\epsilon_n \leq 2-c_1$ for sufficiently large $n$).
  This is satisfied provided that $C_2 \geq 12C_0 C$.
  This proves the first claim.

  For the second claim, we have by Lemma~\ref{lemma:e2} (with $n$ in the statement of Lemma~\ref{lemma:e2} set to $m-1$) and a union bound, we have with probability at least $1 - 1/n$:
  \begin{align*}
    \max_{i \in [m]}
    \abs{\frac{1}{d} y_i y_{[m] \setminus i}^\T \bX_{[m] \setminus i} \bx_i}
    & \leq 
    C \paren{\sqrt{\frac{m \log{m} + m\log(mn)}{d}} + \frac{\sqrt{m} \log(mn)}{d} } \\
    & \leq \sqrt{\frac{C'n}{d}} +\frac{C'n}{d\log n}
  \end{align*}
  where the second inequality uses $m = n / \log n \leq n/2$ and holds for sufficiently large $n$, with $C'>0$ an absolute constant.
  The second term on the right-hand side is at most $C_2/(4C_0\sqrt{\log n})$ for sufficiently large $n$, and hence also at most $\eps_n/(4C_0)$.
  The first term on the right-hand side is also at most $\eps_n/(4C_0)$ provided that
  \begin{equation*}
    \frac{16C_0C'}{\log n} \leq (2-\eps_n) \eps_n^2 .
  \end{equation*}
  Since $\eps_n \leq 2-c_1$, the above condition holds as long as $C_2 \geq 4\sqrt{C_0C'/c_1}$.
  This proves the second claim.
\end{proof}

\begin{lemma} \label{lem:asymptotic-lb2}
  For any $C_0>4$ and $c_1 \in (0,2)$, there exists $C_2>0$ such that the following holds for all sequences $(\eps_n)$ satisfying $2-c_1 \geq \eps_n \geq C_2/\sqrt{\log n}$ for all large enough $n$, with $d = (2-\eps_n)n\log n$:
  \begin{equation*}
    \lim_{n\to\infty}
    \pr{
      \max_{i \in [m]}
      \frac{1}{d} y_i y_{\setminus [m]}^\T \bX_{ \setminus [m]} \bx_i \geq 1 + \frac{\eps_{n}}{C_0}
    } = 1
    .
  \end{equation*}
\end{lemma}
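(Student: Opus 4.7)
I would follow the same three-phase structure as the proof of Lemma~\ref{lemma:e3}, but exploit exact Gaussianity (rather than Berry--Esseen) to track constants sharply enough to capture a transition of width $O(1/\sqrt{\log n})$. The plan is to condition on $\bX_{\setminus [m]}$, set $\bv := \frac{1}{d}\bX_{\setminus [m]}^\T y_{\setminus [m]} \in \R^d$, and observe that the random variables $\bw_i := \frac{1}{d} y_i y_{\setminus [m]}^\T \bX_{\setminus [m]} \bx_i = y_i \bv^\T \bx_i$ for $i \in [m]$ are, conditionally on $\bX_{\setminus [m]}$, i.i.d.\ $N(0, \|\bv\|_2^2)$, because each $\bx_i \sim N(0, I_d)$ is independent of $\bX_{\setminus [m]}$ and $y_i \in \{\pm 1\}$.

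First, I would control $\|\bv\|_2^2$. Since $\bX_{\setminus [m]}^\T y_{\setminus [m]} \sim N(0, (n-m) I_d)$, standard $\chi^2$-concentration gives an event $E$ with $\pr{E} = 1 - o(1)$ on which $\|\bv\|_2^2 \geq (1 - o(1))\,(n-m)/d$. Second, on $E$, conditional i.i.d.\ Gaussianity yields
\[
\pr{\max_{i\in[m]} \bw_i \leq 1 + \eps_n/C_0 \mid \bX_{\setminus [m]}} \;=\; \Phi(t_n)^m \;\leq\; \exp\paren{-m(1 - \Phi(t_n))},
\]
where $t_n := (1 + \eps_n/C_0)/\|\bv\|_2$, and the Mills-ratio lower bound (Fact~\ref{fact:mills}) gives $1 - \Phi(t_n) \geq c\,e^{-t_n^2/2}/t_n$ for an absolute constant $c>0$ once $t_n \geq 1$. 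Thus it suffices to show that $\log m - t_n^2/2 - \log t_n \to \infty$ on $E$.

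Third, I would plug in $d = (2 - \eps_n)n\log n$, $m = n/\log n$, and the lower bound on $\|\bv\|_2^2$ to obtain
\[
\frac{t_n^2}{2} \;\leq\; (1 + o(1))\,\psi(\eps_n)\log n, \qquad \psi(\eps) \;:=\; \frac{(1 + \eps/C_0)^2(2 - \eps)}{2}.
\]
The key calculus observation is that $\psi(0) = 1$, $\psi'(0) = (4 - C_0)/(2C_0) < 0$ (this is where $C_0 > 4$ enters), and $\psi$ has no stationary point in $[0, 2]$ (the only critical point is $\eps = (4 - C_0)/3 < 0$), so $\psi$ is strictly decreasing on $[0, 2 - c_1]$. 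Hence there exists $c'' = c''(C_0, c_1) > 0$ with $1 - \psi(\eps) \geq c''\eps$ throughout that interval. Combined with $\log m = \log n - \log \log n$ and $\log t_n = O(\log \log n)$, this yields $\log m - t_n^2/2 - \log t_n \geq c''\eps_n \log n - O(\log \log n)$, which tends to $\infty$ as soon as $\eps_n \sqrt{\log n} \to \infty$; this is exactly what the hypothesis $\eps_n \geq C_2/\sqrt{\log n}$ provides for $C_2$ large enough (depending on $C_0$ and $c_1$).

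The main obstacle is the careful tracking of lower-order terms: the $1-1/\log n$ factor from $(n-m)/n$, the $\log \log n$ gap in $\log m - \log n$, the $\chi^2$-concentration slack in $\|\bv\|_2^2$, and the $\log t_n$ Mills-ratio correction all contribute at order $\log \log n$ and must be dominated by the linear-in-$\eps_n$ gain $c''\eps_n \log n$. This explains both the precise hypothesis $\eps_n \geq C_2/\sqrt{\log n}$ and the appearance of the threshold $C_0 > 4$: for $C_0 \leq 4$, $\psi'(0) \geq 0$ and the leading-order gain vanishes or reverses sign, so the approach would break down exactly at the conjectured asymptotic location $d = 2n\log n$ of the transition.
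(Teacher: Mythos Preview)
Your proposal is correct and follows essentially the same strategy as the paper: condition on $\bX_{\setminus[m]}$ to obtain i.i.d.\ Gaussians, concentrate the conditional variance $\sigma^2 = \|\bv\|_2^2$, and then compare the target threshold $(1+\eps_n/C_0)$ to $\sigma\sqrt{2\log m}$ via the same calculus on $\psi(\eps)=(1+\eps/C_0)^2(2-\eps)/2$ (the paper works with $\sqrt{\psi}$), which is exactly where $C_0>4$ enters. The only notable differences are cosmetic: the paper controls $\sigma$ via Gaussian Lipschitz concentration and invokes the Gumbel limit for $\max_i \bg_i$, whereas you use $\chi^2$-concentration and a direct Mills-ratio bound on $\Phi(t_n)^m$; both routes yield the same $c''\eps_n\log n - O(\log\log n)\to\infty$ endgame (and note that your phrasing ``$\eps_n\sqrt{\log n}\to\infty$'' is slightly stronger than needed---the hypothesis $\eps_n\geq C_2/\sqrt{\log n}$ already gives $\eps_n\log n\geq C_2\sqrt{\log n}\gg\log\log n$, which suffices).
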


\begin{proof}
  Observe that conditioned on $\bX_{\setminus [m]}$, the $m$ random variables
  \begin{equation*}
    \frac{1}{d} y_i y_{\setminus [m]}^\T \bX_{ \setminus [m]} \bx_i , \quad i = 1,\dotsc,m
  \end{equation*}
  are distributed as independent mean-zero Gaussian random variables with variance
  \begin{equation*}
    \sigma^2 := \frac{1}{d^2} \norml[2]{y_{\setminus [m]}^\T \bX_{ \setminus [m]}}^2 .
  \end{equation*}
  Therefore, the claim is equivalent to
  \begin{equation*}
    \lim_{n\to\infty} \pr{ \max_{i \in [m]} \sigma \bg_i \geq 1 + \frac{\eps_{n}}{C_0}} = 1 ,
  \end{equation*}
  where $\sigma^2$ is as defined above, and $\bg_1,\dotsc,\bg_m$ are i.i.d.~standard Gaussian random variables, independent of $\sigma^2$.

  Observe that
  \begin{equation*}
    \bG := \frac{1}{\sqrt{n-m}} \bX_{\setminus [m]}^\T y_{\setminus [m]}
  \end{equation*}
  is a standard Gaussian random vector in $\R^d$.
  By Gaussian concentration of Lipschitz functions~\cite[page 41, 2.35]{ledoux2001concentration}, the following holds with probability at least $1-1/n$:
  \begin{align}
    \sigma
    & \geq \frac{\sqrt{n-m}}{d}
    \paren{ \E{ \norm[2]{\bG} } - \sqrt{2\log n} }
    \nonumber \\
    & \geq \frac{\sqrt{n-m}}{d}
    \paren{ \sqrt{d} - \frac1{2\sqrt{d}} - \sqrt{2\log n} }
    \nonumber \\
    & = \sqrt{\frac{n-m}{d}}
    \paren{ 1 - \frac1{2d} - \sqrt{\frac{2\log n}{d}} }
    \nonumber \\
    & =
    \frac1{\sqrt{2\log n}}
    \frac1{\sqrt{1 - \frac{\eps_n}{2}}}
    \sqrt{1-\frac1{\log n}}
    \paren{ 1 - \frac1{2d} - \sqrt{\frac{2\log n}{d}} }
    \label{eq:good-variance-proxy}
  \end{align}
  where the second inequality follows from standard approximations of the Gamma function, and the final inequality holds assuming $C_2 \geq 1$.

  Let $\mathsf{E}$ be the event in which \eqref{eq:good-variance-proxy} holds.
  Then
  \begin{align*}
    \pr{ \max_{i \in [m]} \sigma \bg_i \geq 1 + \frac{\eps_n}{C_0} }
    & \geq
    \pr{ \max_{i \in [m]} \sigma \bg_i \geq 1 + \frac{\eps_n}{C_0} \mid \mathsf{E} } \paren{ 1 - \frac1n } \\
    & \geq
    \pr{ \max_{i \in [m]} \bg_i \geq \alpha_n \sqrt{2\log n} } \paren{ 1 - \frac1n }
  \end{align*}
  where
  \begin{equation*}
    \alpha_n := 
    \frac{\paren{ 1 + \frac{\eps_n}{C_0} }\sqrt{1 - \frac{\eps_n}{2}}}
    { \sqrt{1 - \frac{1}{\log n}} \paren{ 1 - \frac1{2d} - \sqrt{\frac{2\log n}{d}} } }
    .
  \end{equation*}
  We claim that the probability on the right-hand side tends to $1$ with $n\to\infty$ as well.

  The distribution of the random variable $\max_{i \in [m]} \bg_i$ obeys a limiting Gumbel distribution; specifically, for all $x>0$,
  \begin{equation*}
    \lim_{m\to\infty} \pr{
      \max_{i \in [m]} \bg_i \geq \sqrt{2\log m} - \frac{x + C \log \log m}{\sqrt{\log m}}
    }
    = 1 - e^{-e^x}
  \end{equation*}
  where $C>0$ is an absolute constant~\cite{fisher1928limiting}.
  Therefore, it suffices to show that for all $x>0$, we have
  \begin{equation*}
    \sqrt{2\log m} - \frac{x + C \log \log m}{\sqrt{\log m}}
    - \alpha_n \sqrt{2\log n} \geq 0
  \end{equation*}
  for all sufficiently large $n$.
  Dividing through by $\sqrt{2\log n}$ and using $m = n /\log n$, the above inequality is implied by
  \begin{equation}
    \sqrt{1 - \frac{\log\log n}{\log n}} - \frac{x + C \log \log (n/2)}{\log n} \frac1{\sqrt{1 - \frac{\log\log n}{\log n}}}
    - \alpha_n \geq 0 .
    \label{eq:gumbel-limit-ineq}
  \end{equation}
  Since $0 \leq \eps_n \leq 2-c_1$, we have
  \begin{equation*}
    \paren{ 1 + \frac{\eps_n}{C_0} }\sqrt{1 - \frac{\eps_n}{2}}
    \leq 1 - \frac{C_0-4}{2C_0\sqrt{2c_1}} \cdot \eps_n
  \end{equation*}
  by a Taylor series argument.
  So, \eqref{eq:gumbel-limit-ineq} is implied by
  \begin{equation*}
    \frac{C_0-4}{2C_0\sqrt{2c_1}} \cdot \eps_n - T(n)
    \geq 0
  \end{equation*}
  where
  \begin{multline*}
    T(n) =
    \paren{
      \frac{\log\log n}{\log n} - \frac{x + C\log\log(n/2)}{\log n} \paren{ 1 + \frac{\log\log n}{\log n} }
    }
    \\
    \cdot
    \paren{
      \sqrt{1 - \frac{1}{\log n}} \paren{ 1 - \frac1{2d} - \sqrt{\frac{2\log n}{d}} }
    }
    .
  \end{multline*}
  Since $\eps_n \geq C_2/\sqrt{\log n}$ and $T(n) = o(1/\sqrt{\log n})$, we can choose $C_2$ large enough so that \eqref{eq:gumbel-limit-ineq} holds for all sufficiently large $n$.
\end{proof}

We conclude as in the proof of Theorem~\ref{thm:anisotropic}.
The event in which all of the following hold has probability approaching $1$ as $n\to\infty$ by combining Lemma~\ref{lem:asymptotic-lb1} and Lemma~\ref{lem:asymptotic-lb2} and a union bound:
\begin{enumerate}
  \item $\max_{i \in [m]} \abs{y_i y_{\setminus i}^\T \paren{ \bK_{\setminus i}^{-1} - \frac{1}{d} I_{n-1} } \bX_{\setminus i} \bx_i } \le \frac{\eps_{n}}{2C_0}$;

  \item $\max_{i \in [m]} \abs{\frac{1}{d} y_i y_{[m] \setminus i}^\T \bX_{[m] \setminus i} \bx_i} \leq \frac{\eps_{n}}{2C_0}$;

  \item $\max_{i \in [m]} \frac{1}{d} y_i y_{\setminus [m]}^\T \bX_{ \setminus [m]} \bx_i \geq 1 + \frac{\eps_{n}}{C_0}$.

\end{enumerate}
In this event, there exists $i \in [m]$ such that
\begin{multline*}
  y_i y_{\setminus i}^\T \paren{\bK_{\setminus i}^{-1}-\frac{1}{d} I_{n-1}} \bX_{\setminus i} \bx_i 
  + \frac{1}{d} y_i y_{[m] \setminus i}^\T \bX_{[m] \setminus i} \bx_i 
  + \frac{1}{d} y_i y_{\setminus [m]}^\T \bX_{\setminus [m]} \bx_i
  \\
  \geq 
  -\frac{\eps_n}{2C_0} - \frac{\eps_n}{2C_0} + 1 + \frac{\eps_n}{C_0} = 1 .
\end{multline*}

\subsection{Above the threshold}

Now we consider the case where the dimension is above the threshold, specifically $d = (2+\eps_n) n \log n$.

By Proposition~\ref{prop:equiv}, it suffices to show that
\begin{equation*}
  \lim_{n\to\infty} \pr{
    \exists i \in [n] \ \ \text{such that} \ \
    y_i y_{\setminus i}^\T \bK_{\setminus i}^{-1}  \bX_{\setminus i} \bx_i \geq 1
  } = 0 .
\end{equation*}
This is implied by the following lemma combined with a union bound over all $i \in [n]$.

\begin{lemma}
  There exists $C_2>0$ and $n_0 > 0$ such that the following statement holds for all $n \geq n_0$, and all $\eps_n$ satisfying $\eps_n \geq C_2/\sqrt{\log n}$ for all $n \geq n_0$, with $d = (2+\eps_n)n\log n$:
  \begin{equation*}
    \text{for each $i \in [n]$:} \quad
    \pr{
      y_i y_{\setminus i}^\T \bK_{\setminus i}^{-1}  \bX_{\setminus i} \bx_i \geq 1
    }
    \leq
    \frac1{2n\sqrt{\pi\log n}} + \frac1{n^2} .
  \end{equation*}
\end{lemma}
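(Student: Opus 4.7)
The plan is to condition on $\bX_{\setminus i}$ and exploit Gaussianity of $\bx_i$. Since $(\bX,\by)$ is an isotropic Gaussian sample with fixed labels, $\bx_i$ is independent of $\bX_{\setminus i}$ and distributed as $\mathcal{N}(0, I_d)$. Setting $v := y_i y_{\setminus i}^\T \bK_{\setminus i}^{-1} \bX_{\setminus i} \in \R^d$, which is measurable with respect to $\bX_{\setminus i}$, the quantity $y_i y_{\setminus i}^\T \bK_{\setminus i}^{-1} \bX_{\setminus i} \bx_i = v^\T \bx_i$ is, conditionally on $\bX_{\setminus i}$, a centered Gaussian with variance
\[
\boldsymbol{\sigma}^2 \;=\; \|v\|_2^2 \;=\; y_{\setminus i}^\T \bK_{\setminus i}^{-1} \bX_{\setminus i} \bX_{\setminus i}^\T \bK_{\setminus i}^{-1} y_{\setminus i} \;=\; y_{\setminus i}^\T \bK_{\setminus i}^{-1} y_{\setminus i}.
\]

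Next I would control $\boldsymbol{\sigma}^2$ using the spectral concentration of Lemma~\ref{lemma:gram-evals}, applied to $\bK_{\setminus i}$ with confidence parameter $\delta = 1/n^2$. In the isotropic Gaussian case $\lambda = \mathbf{1}$, so $\|\lambda\|_1 = d_2 = d_\infty = d$, and the lemma yields that with probability at least $1 - 1/n^2$, $\bK_{\setminus i}$ is invertible and
\[
\Bigl\|\bK_{\setminus i}^{-1} - \tfrac{1}{d} I_{n-1}\Bigr\| \;\leq\; \frac{C}{d}\!\left(\sqrt{\tfrac{n + \log n}{d}} + \tfrac{n + \log n}{d}\right) \;=\; O\!\left(\tfrac{1}{d\sqrt{\log n}}\right),
\]
using $d \geq 2n \log n$. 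Since $\|y_{\setminus i}\|_2^2 = n-1$, this gives $\boldsymbol{\sigma}^2 \leq \tfrac{n-1}{d}\bigl(1 + O(1/\sqrt{\log n})\bigr)$ on this good event, and hence
\[
\frac{1}{\boldsymbol{\sigma}^2} \;\geq\; \frac{d}{n-1}\bigl(1 - O(1/\sqrt{\log n})\bigr) \;\geq\; (2+\eps_n)\log n \cdot \bigl(1 - O(1/\sqrt{\log n})\bigr) \;\geq\; 2\log n,
\]
where the final inequality uses the hypothesis $\eps_n \geq C_2/\sqrt{\log n}$ with $C_2$ taken large enough to dominate the $O(1/\sqrt{\log n})$ correction.

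Finally I apply the Mills ratio bound (Fact~\ref{fact:mills}) to the Gaussian tail. On the good event, writing $\boldsymbol{\sigma} \bg$ for the conditional random variable with $\bg \sim \mathcal{N}(0,1)$ independent of $\bX_{\setminus i}$, and $t := 1/\boldsymbol{\sigma} \geq \sqrt{2\log n}$,
\[
\Pr\!\bigl[\boldsymbol{\sigma} \bg \geq 1 \,\big|\, \bX_{\setminus i}\bigr] \;=\; 1 - \Phi(t) \;\leq\; \frac{1}{t\sqrt{2\pi}} e^{-t^2/2} \;\leq\; \frac{1}{\sqrt{2\log n}\sqrt{2\pi}} \cdot \frac{1}{n} \;=\; \frac{1}{2n\sqrt{\pi \log n}}.
\]
Combining with the at most $1/n^2$ probability of the complementary (bad) event gives the claimed bound. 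The main technical care is choosing $C_2$ large enough so that the $(1 - O(1/\sqrt{\log n}))$ factor from spectral concentration does not erode the $2 \log n$ threshold needed to produce the $1/n$ factor in the Gaussian tail; since $\eps_n \log n \geq C_2 \sqrt{\log n}$ dominates the correction, this is routine but is the only nontrivial quantitative point.
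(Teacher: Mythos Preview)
Your proof is correct and follows essentially the same approach as the paper's: condition on $\bX_{\setminus i}$ to reduce to a Gaussian tail, control the variance $\boldsymbol{\sigma}^2 = y_{\setminus i}^\T \bK_{\setminus i}^{-1} y_{\setminus i}$ via Lemma~\ref{lemma:gram-evals} with $\delta = 1/n^2$, and then apply the Mills ratio bound, choosing $C_2$ large enough so that $\frac{2+\eps_n}{1+O(1/\sqrt{\log n})} \geq 2$. The only cosmetic differences are that the paper writes the $\log(1/\delta)$ term as $2\log n$ rather than $\log n$, and carries the ratio $\frac{(2+\eps_n)\log n}{1+C'/\sqrt{\log n}}$ through to the Mills bound before simplifying, whereas you simplify to $1/\boldsymbol{\sigma}^2 \geq 2\log n$ first; these are equivalent.
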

\begin{proof}
  Conditional on $\bX_{\setminus i}$ (and the probability $1$ event that $\bX_{\setminus i}$ has rank $n-1$), the distribution of
  \begin{equation*}
    y_i y_{\setminus i}^\T \bK_{\setminus i}^{-1}  \bX_{\setminus i} \bx_i
  \end{equation*}
  is a mean-zero Gaussian with variance
  \begin{equation*}
    \sigma_i^2 := \norml[2]{y_{\setminus i}^\T \bK_{\setminus i}^{-1}  \bX_{\setminus i}}^2 = y_{\setminus i}^\T \bK_{\setminus i}^{-1}  y_{\setminus i} .
  \end{equation*}
  By Lemma~\ref{lemma:gram-evals}, we have for some absolute constant $C>0$ and sufficiently large $n$, with probability at least $1-1/n^2$,
  \begin{align*}
    \sigma_i^2
    & \leq n \cdot \mu_{\max}(\bK_{\setminus 1}^{-1})
    \\
    & \leq \frac{n}{d} \paren{ 1 + C \paren{ \sqrt{ \frac{n + 2 \log n}{d} } + \frac{n + 2 \log n}{d} } } \\
    & \leq \frac1{(2+\eps_n)\log n} \paren{ 1 + \frac{C'}{\sqrt{\log n}} } ,
  \end{align*}
  where $C'>0$ is a constant depending only on $C$.
  Let $\mathsf{E}$ be the aforementioned event.
  Then
  \begin{align}
    \pr{
      y_i y_{\setminus i}^\T \bK_{\setminus i}^{-1}  \bX_{\setminus i} \bx_i \geq 1
    }
    & \leq
    \pr{
      y_i y_{\setminus i}^\T \bK_{\setminus i}^{-1}  \bX_{\setminus i} \bx_i \geq 1 \mid \mathsf{E}
    } + \pr{ \neg \mathsf{E} }
    \nonumber \\
    & \leq
    1 - \Phi\paren{ \sqrt{\frac{(2+\eps_n)\log n}{1+C'/\sqrt{\log n}} } } + \frac1{n^2}
    \nonumber \\
    & \leq 
    \sqrt{\frac1{2\pi} \cdot \frac{1+C'/\sqrt{\log n}}{(2+\eps_n)\log n} }
    \exp\paren{ -\frac12 \cdot \frac{(2+\eps_n)\log n}{1+C'/\sqrt{\log n}} } + \frac1{n^2}
    \label{eq:asymp-ub}
  \end{align}
  where the final inequality follows by the Mills ratio bound (Fact~\ref{fact:mills}).
  By letting $C_2 \geq 2C'$, we have
  \begin{equation*}
    \frac{2+\eps_n}{1+C'/\sqrt{\log n}}
    \geq 2
  \end{equation*}
  (since we assume $\eps_n \geq C_2 / \sqrt{\log n}$), upon which bound in \eqref{eq:asymp-ub} is at most 
  \begin{equation*}
    \frac1{2n\sqrt{\pi\log n}} + \frac1{n^2}
  \end{equation*}
  as claimed.
\end{proof}


\section{Supplementary material for Section~\ref{sec:experiments}}
\label{app:supp-exp}

This appendix gives a refined statistical analysis of our hypothesis that SVP is universal for $\ell_{2}$-SVMs under the assumption that features are drawn identically and independently. 
We visually assert this universality with Figures~\ref{fig:heatmap-different-distributions} and \ref{fig:quantiles}, and formally test our hypothesis using a parametric statistical approach, borrowing several ideas from \citet{donoho2009observed}.

As described in Section~\ref{sec:experiments}, we show the significance of this universality by providing a parametric model that complies with the given universality hypothesis and fits well to the observed rates of SVP.
That is, this model permits slight difference for different sample distributions, as long as this difference decays to zero with $n$. 
Furthermore, we show that the model becomes statistically insignificant if we incorporate extra parameters to allow non-decaying dependence on sample distributions to conclude universality.

Alongside these universality results, we experimentally support the universality of the bounds on transition width, which are proved for the isotropic Gaussian sample case in Section~\ref{sec:asymptotic}. 

These analyses are implemented in Python and R. 
Our code-base can be found on Github at \url{https://github.com/scO0rpion/SVM-Proliferation-NIPS2021}.

\subsection{Experimental procedures} 
\label{app:exp-procedure}

We conduct a Monte Carlo simulation in order to validate our theoretical results and grasp their generality to distributions with different tail distributions. 
For the range $(n,d) \in \{40,42,\dots,100\} \times \{100,110,\dots,1000\}$ we study our problem in the following way:
\begin{itemize}
    \item We generate features $\bX \in \mathbb{R}^{n \times d}$ by drawing each $\bx_i$ independently from the suite of distributions shown in Table~\ref{tab:dist}.
    Subsequently, we generate a balanced set of labels $y \in \flip^{n}$ where the first $ \lfloor \frac{n}{2} \rfloor$ samples are assigned class $+1$ and the rest $-1$. 

    \item We used the quadratic program solver from CVXOPT~\cite{vandenberghe2010cvxopt}\footnote{CVXOPT is distributed at \url{https://cvxopt.org/} under a GPLv3 license.} to solve \eqref{eq:svm-restricted-dual-lp} with $p=q=2$ to tolerance level $10^{-7}$.
    
    \item We deem that SVP occurs for an instance of the simulation if the optimizer's output lies in the interior of $\mathbb{R}_{+}^{n}$.

    \item We report the fraction $\hat\bp$ of $M$ trials that exhibit SVP. Based on Figure~\ref{fig:Nsim-sensitivity}, we choose the simulation size value $M=400$ as an appropriate choice for having small enough variance for our range of $(n,d)$. 
    Throughout this section, we run $M = 400$ simulations unless stated otherwise. 
\end{itemize}

%
%

\begin{table}[t]
    \centering
    \def\arraystretch{1.2}
    \begin{tabular}{|c|c|c|c|c|}
        \hline
        \textbf{Name}
         & \textbf{Support}
         & \textbf{Mean}
         & \textbf{Variance}
         & \textbf{Subgaussian?} \\
         \hline
         Uniform
         & $[-1,1]$
         & $0$
         & $1/3$
         & Yes \\
         \hline
         Bernoulli
         & $\{0,1\}$
         & $1/2$
         & $1/4$
         & Yes \\
         \hline
         Rademacher
         & $\{-1,1\}$
         & $0$
         & $1$
         & Yes \\
         \hline
         Laplacian
         & $\R$
         & $0$
         & $2$
         & No \\
         \hline
         Gaussian
         & $\R$
         & $0$
         & $1$
         & Yes \\
         \hline
         Gaussian Biased
         & $\R$
         & $1$
         & $1$
         & Yes
         \\
         \hline
    \end{tabular}
    
    \smallskip

    \caption{\label{tab:dist}
    The suite of distributions used in experiments.
    Features $\bx_i \in \R^d$ are drawn from a product distribution $\distr^{\otimes d}$ of one of the distributions $\distr$ in this table.}
\end{table}

Our computing environment was a shared high-performance cluster, where a standard node has two Intel Xeon Gold 6226 2.9 GHz CPUs (each with 16 cores) and 192 GB memory.
It took roughly eight hours to run all SVP simulations for $\ell_2$-SVMs on a single node.
The simulations for $\ell_1$-SVMs for Figure~\ref{fig:l1-l2-compare} (Appendix~\ref{app:l1}) took two days to run on the cluster, again just on a single node.

\begin{figure}[t]
    \centering
    \includegraphics[width=\textwidth]{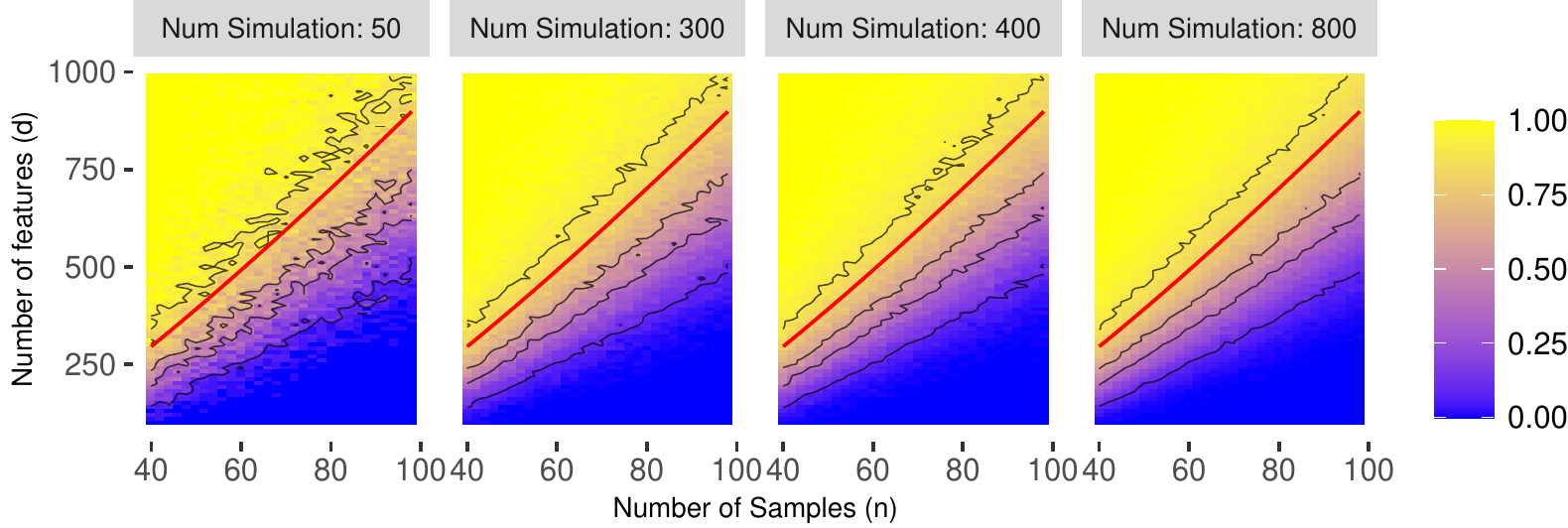}
    \caption{
    \label{fig:Nsim-sensitivity}
    The sensitivity of the experiments to simulation size $M$. 
    The blue curves are 0.1, 0.4, 0.6, and 0.9 quantile contours respectively for the observed rate of SVP for Gaussian samples.
    The red curve is limiting phase transition boundary, i.e., $n \mapsto 2 n \log(n)$}.
\end{figure}

\begin{figure}[t]
    \centering
    \includegraphics[scale=0.6]{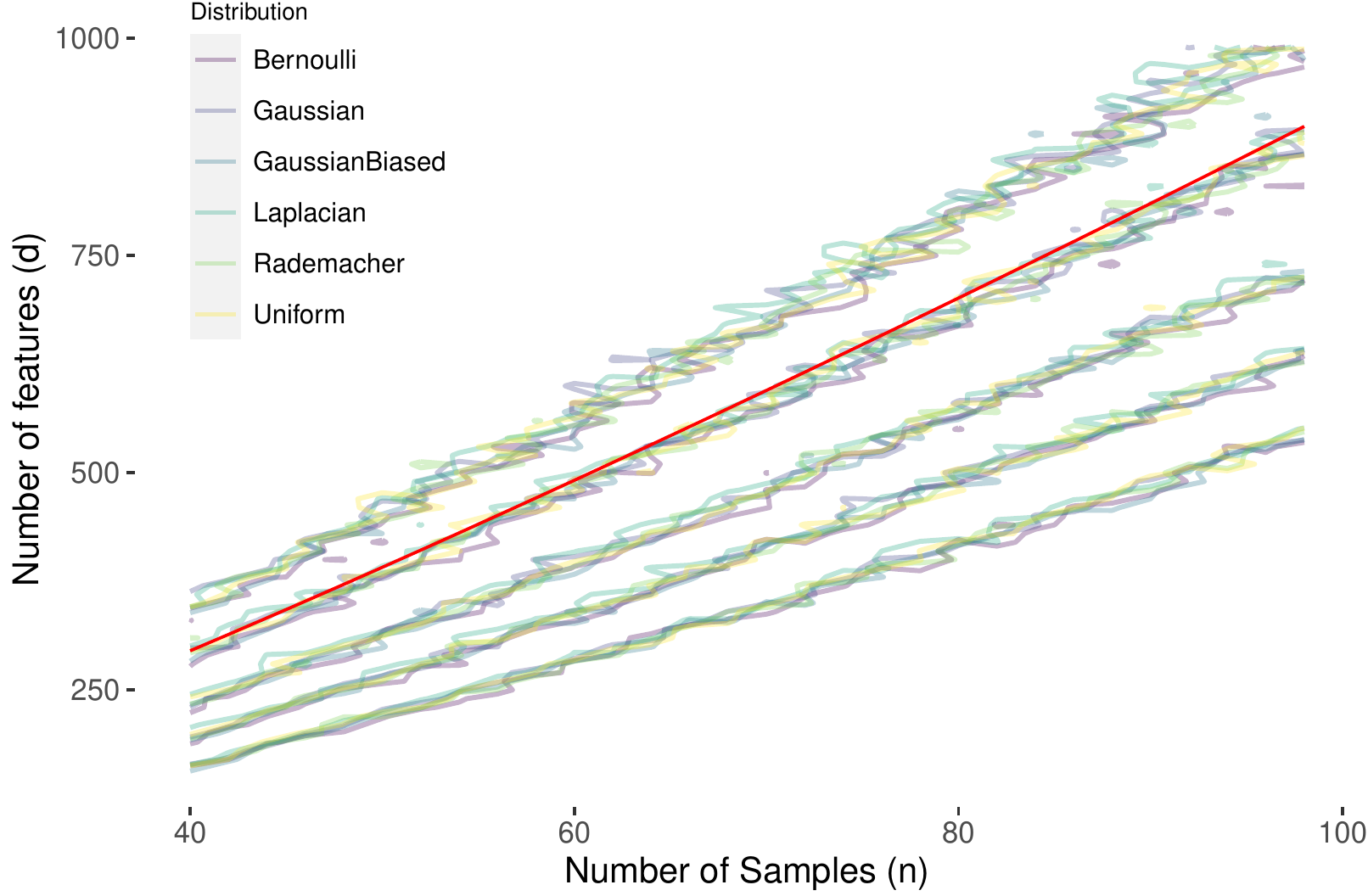}
    \caption{%
    \label{fig:quantiles}%
    Quantile plots. 
    The observed rates of SVP $\hat{\bp}$ are similar for all simulated distributions. 
    The quantile plots visualize the dimension $d$ needed for SVP to occur on a 0.2, 0.4, 0.6, 0.8, and 0.9 fraction of the trials for a fixed umber of samples $n$. 
    The red line corresponds to the asymptotic boundary $n \mapsto 2n\log n$, which closely aligns to the level curve corresponding to a 0.8 fraction of trials exhibiting SVP.
    }
\end{figure}

\subsection{Observed universality}\label{app-sub:universality}

Let $\hat{\mathbf{p}} \coloneqq \hat{\mathbf{p}}(n,d; \distr,M) \sim \text{Binom}(p(n,d; \distr), M)$ be the observed SVP rate corresponding to a sample distribution $\distr$ with independent components and with simulation size $M$. Due to the log-linear dependence of the dimension $d$ of SVP threshold occurs on $n$ from Theorem~\ref{thm:asymptotic}, we parameterize the probability of SVP as a function of $n$ and $\tau \coloneqq \fracl{d}{\paren{2n\log n}}$ instead. The objective here is to provide a reasonable parametric model for $\hat{\mathbf{p}}(n,d; \distr)$ as an inferential tool to test the universality hypothesis. To do so, we translate our universality hypothesis in the language of our parametric model and ensure that necessary statistical assumptions hold to make inferential claims.

\paragraph{Model:} 
We use Probit regression (a generalized linear model with Probit link function) to explain the transition behavior and allow the coefficients to depend explicitly on the distribution $\distr$ under which sample components are drawn from. 
We justify this specific choice of link function at the end of this subsection.
We propose the following parametric model; we motivate the terms in the model at the end of the subsection as well.
\begin{equation}
    \label{eq:model}
    p(n,d ; \distr) = \probit\paren{ \mu^{(0)}(n,\distr) + \mu^{(1)}(n, \distr) \tau + \mu^{(2)}(n,\distr) \log \tau  },
\end{equation}
where
\[
\mu^{(i)}(n, \distr) = \mu^{(i)}_{0}(\distr) + \frac{\mu^{(i)}_{1}(\distr)}{ \sqrt{n} },
\]
and
\[ \probit(t) = \int_{-\infty}^t \frac1{\sqrt{2\pi}} e^{-x^2/2} \operatorname{d}\! x \]
is the Probit link function (i.e., the standard normal distribution function).

The universality hypothesis can be translated to a testing framework under the model described \eqref{eq:model} by prohibiting $\mu^{(i)}_{0}(\distr)$ from depending on the underlying distribution $\distr$. 
\begin{itemize}
\item \textbf{Universality Hypothesis:} $\mu^{(1)}_0(\distr)$ are identical for each distribution $\distr$, and $\mu^{(1)}_0(\distr) = \mu^{(2)}_0(\distr) = 0$.
        
\item \textbf{Alternative Hypothesis:} $\mu^{(i)}_0(\distr)$ depends on the underlying distribution $\distr$ for some $i \in \{0,1,2\}$.
\end{itemize}

The Universality Hypothesis permits differences from the ground mean (which must decay to zero as $n$ grows large), but requires that other terms be identical.
The Alternative Hypothesis instead permits non-decaying difference among distributions. 
We show that our model \textbf{does not reject} the Universality Hypothesis. 

\paragraph{Inference:} We perform Probit regressions on observed SVP rates $\hat{\bp}$ sequentially on three different models, each of which is a sub-model of its successor. 
The second and third models correspond to the Universality Hypothesis and the Alternative Hypothesis respectively.
We compare their goodness-of-fit using analysis of deviance (ANOVA) to assess whether each subsequent model restriction meaningfully improves on its predecessor's ability to fit the data~\cite{hastie2017generalized}.

\begin{itemize}
    \item \textbf{Model~1:} Does not allow any deviations for different distributions; i.e., $\mu^{(i)}(n,\distr)$ does not depend on $\distr$ for $i \in \{0,1,2\}$. 
    \item \textbf{Model~2:} Allows deviations in bias that decay to zero; i.e., only $\mu^{(i)}_{1}(n,\distr)$ may vary with $\distr$.
    \item \textbf{Model~3:} Full model described in Equation~\eqref{eq:model}.
\end{itemize}

\begin{table}[t]
    \centering
    \def\arraystretch{1.2}
    \begin{tabular}{|c|c|c|c|}
    \hline
    \textbf{Models Compared}
    & \textbf{Degrees of Freedom}
    & \textbf{Deviance}
    & \textbf{P-Value} \\
    \hline
    1 vs.~2 & 5 & 1695.46 & $2\cdot10^{-16}$\\
    \hline
    2 vs.~3 & 25 & 29.25  & 0.253 \\
    \hline
    \end{tabular}

    \smallskip

    \caption{%
    \label{tab:anova}%
    Analysis of deviance for the sequence of three Probit models. P-Values are computed based on Chi-squared tails.}
\end{table}

Based on this sequential test given in Table~\ref{tab:anova}, we find that \textbf{Model~1} should be rejected, because \textbf{Model~2} substantially improves on it in a statistically significant manner. 
Furthermore, no statistical significance were detected for rejecting \textbf{Model~2}, and nearly all the excessive parameters (for \textbf{Model~3}) were statistically insignificant.
Therefore, we accept \textbf{Model~2}, which complies with the Universality Hypothesis.

Assuming that the Probit model is well-specified (e.g., the residuals satisfy the usual assumptions), we conclude that \textbf{Model 2} is significant.
The remainder of the section argues that the Probit model is the most appropriate for this setting. 
A full analysis of all three fitted models, including the fitted model parameters and p-values for these parameters, can be found in the code repository.



\paragraph{Motivating the model:} The proposed model \eqref{eq:model} is supported by a series of  empirical observations about the SVP rate $\hat{\bp}$:

\begin{itemize}
    \item $\hat{\bp}$ increases as $\tau$ increases and is mostly unaffected as $n$ changes (left panel (a) of Figure~\ref{fig:constant-slice}). 
    Therefore, the dependence on $n$ should be negligible.
    \item $\hat{\bp}$ is asymmetric around the theoretical boundary $\tau = 1$ (left panel (a) of Figure~\ref{fig:constant-slice}).
    This behavior motivates the non-linear term in our proposed model and likely originates from the asymmetry of the limiting Gumbel distribution in Section~\ref{sec:asymptotic}. 
    \item As $\tau$ varies, the slope of $n \mapsto p(n,\tau;\distr)$ changes, which motivates including terms that manage the interaction between $n$ and $\tau$ (right panel (b) in Figure \ref{fig:constant-slice}).
    \item As suggested by \citet{donoho2009observed}, including a dependence on $\fracl{1}{\sqrt{n}}$ is motivated by the theory of Edgeworth expansions, which states that the non-asymptotic behaviour of a random Gaussian problem decays to its asymptotic behavior by some power of the ``problem size.''
\end{itemize}

\paragraph{Model diagnostics:} 
While our proposed model yields formal evidence of universality, it remains to show that the model is correct, particularly because even wrong models are often statistically significant.
To avoid this pitfall, we validate the underlying assumptions required to make inferential claims in our logistic regression model. 
Figure~\ref{fig:diagnostics} demonstrates that \textbf{Model 2} accurately approximates the observed probabilities $\hat{\bp}$ when the probabilities are not too close to one or zero.\footnote{ 
These extreme cases are not concerning because we are primarily interested in values of $d$ and $n$ where the asymptotic probabilities are non-degenerate. 
The significance of \textbf{Model~2} continues to hold, even if we restricted attention to a smaller region with non-degenerate SVP rates $\hat{\bp}$, such as $\tau \in [0.4,1.6]$.} 
The figure further shows that the residuals approximately follow a standard normal distribution and do not seem to correlate with the fitted values. 
Therefore, the residuals corresponding to this model satisfy the usual assumptions necessary for regression.

\begin{figure}[t]
    \centering
    \begin{tabular}{cc}
        \includegraphics[width=0.48\textwidth]{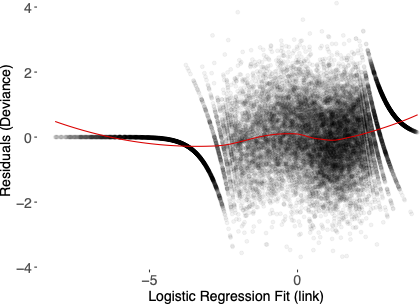} &
        \includegraphics[width=0.48\textwidth]{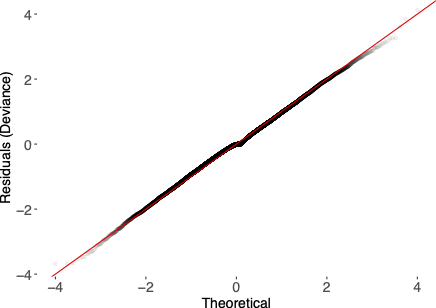} \\
        (a) & (b) \\
    \end{tabular}
    \caption{%
    Diagnostic plots.
    \emph{Left panel (a):} The relationship between logistic regression residuals and fitted probabilities for \textbf{Model 2} applied to the inverse link function. 
    The red curve is LOESS smoothing to illustrate the trend. 
    The observed probabilities drop to zero faster than tails of Probit function.
    \emph{Right panel (b):} Quantile plot of the residuals.
    The red line corresponds to standard normal quantiles.%
    \label{fig:diagnostics}
    }
\end{figure}

\paragraph{Justification for the link function and log:} We test three link functions for \textbf{Model 2} to choose an appropriate one for our parametric model~\eqref{eq:model}:
\begin{align*}
    \operatorname{Cauchit}(t) & = \frac1\pi \left[ \tan^{-1}(t) + \frac\pi2 \right] ,
    & \operatorname{Logit}(t) & = \frac1{1+e^{-t}} ,
    & \operatorname{Probit}(t) & = \Phi(t) .
\end{align*}
We justify the use of $\log{\tau}$ as a non-linear term in \eqref{eq:model} by substituting the logarithmic term with the \emph{Cox-Box transform} of $\tau$, i.e., $\fracl{\paren{\tau^{\gamma} - 1}}{\gamma}$ and cross-fitting various link functions with different exponents $\gamma = 0.2, 0.4, 0.5, 1$. 
The diagnostic plots in Figure~\ref{fig:appropriate-link-diagnostic-res-vs-fitted} indicate that the Probit link function fits best. 
Moreover, smaller values of $\gamma$ fit observed probabilities better by comparing the deviance r-squares in Table~\ref{table:Rsquared-link} as a measure of goodness of fit. 
Hence, we use the $\log \tau$, which is the limit of the Cox-Box transform when $\gamma \to 0$.

\begin{figure}[t]
    \centering
    \begin{tabular}{cc}
    \includegraphics[width=0.48\textwidth]{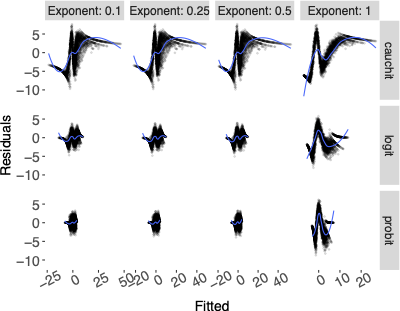}
    &
    \includegraphics[width=0.48\textwidth]{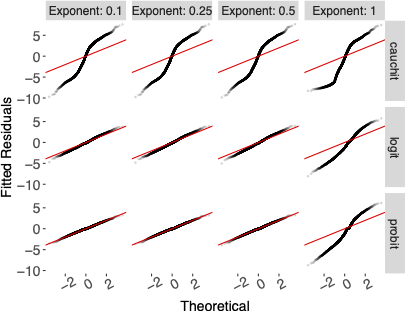}
    \\
    (a) & (b)
    \end{tabular}
    \caption{
    \label{fig:appropriate-link-diagnostic-res-vs-fitted}
    Diagnostic plots for Gaussian distribution.
    \emph{Left panel (a):} The relationships between the fitted values from logistic regression and the residuals for various link functions and exponents. We use LOESS smoothing to visualize the trend and plot it in blue. Asymptotically, the residuals should appear independent of the fitted values; hence, bumpy blue curves indicate non-compliant behaviour with model assumptions.
    \emph{Right panel (b):} Residual QQ-plots. One expects residuals to asymptotically follow a normal distribution (with a perfect fit corresponding to the overlaid red line), so smaller deviations from a normal distribution is desired.}
\end{figure}

\begin{table}[t]
    \centering
    \def\arraystretch{1.2}
\begin{tabular}{cc|c|c|c|}
\cline{3-5}
\multicolumn{1}{l}{}                                         & \multicolumn{1}{l|}{} & \multicolumn{3}{c|}{\textbf{Link}}                  \\ \cline{3-5} 
\multicolumn{1}{l}{}                                         & \multicolumn{1}{l|}{} & \textbf{Cauchit} & \textbf{Logit} & \textbf{Probit} \\ \hline
\multicolumn{1}{|c|}{\multirow{4}{*}{$\boldsymbol{\gamma}$}} & \textbf{0.10}         & 0.9606           & 0.9953         & 0.9971          \\ \cline{2-5} 
\multicolumn{1}{|c|}{} & \textbf{0.25} & 0.9602 & 0.9950 & 0.9969 \\ \cline{2-5} 
\multicolumn{1}{|c|}{} & \textbf{0.50} & 0.9595 & 0.9946 & 0.9967 \\ \cline{2-5} 
\multicolumn{1}{|c|}{} & \textbf{1.00} & 0.9501 & 0.9771 & 0.9722 \\ \hline
\end{tabular}

    
    \smallskip
    
    \caption{%
    \label{table:Rsquared-link}
    $R^{2}$ values computed from on the fraction between the null deviance and the fitted deviance for different link functions and exponents. The Probit link function with  $\gamma=0.1$ has the best fit.}
\end{table}

\subsection{Width of transition}\label{app-sub:width}

\begin{figure}[t]
    \centering
    \includegraphics[scale = 0.7]{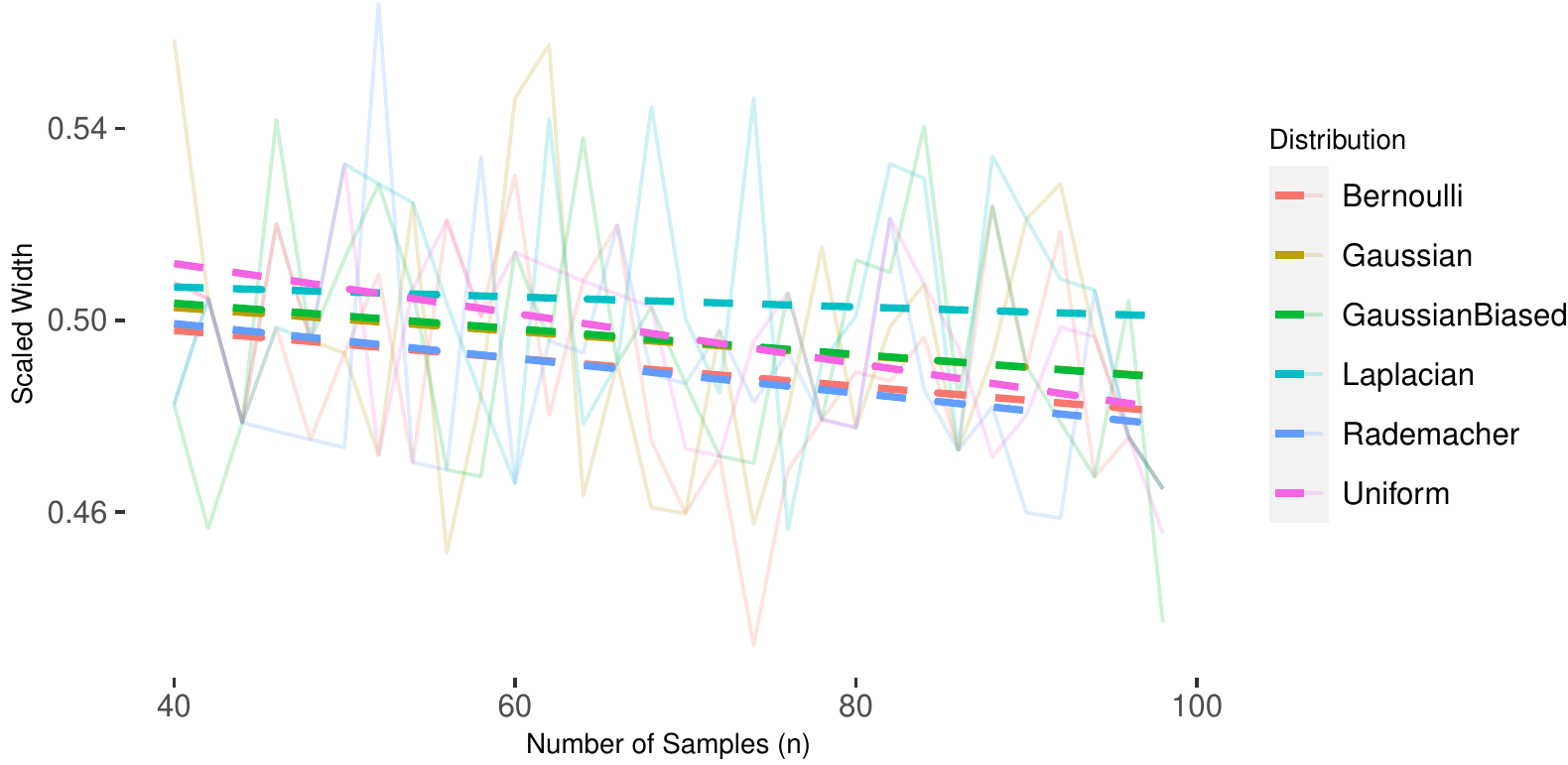}
    \caption{
    \label{fig:transition-width}
    The estimated scaled transition widths for variety of distributions. 
    Note that $\hat{\tau}_{q}$ and $\hat{\tau}_{1-q}$ are computed with both observed probabilities and non-parametric smoothing spline versions of probabilities which are shown in left and right panels respectively.
    The overall trend is overlaid using a linear regression applied to the scale width.}
\end{figure}

To estimate the width of the phase transaction, we adopt a non-parametric approach. 
For $q < \fracl{1}{2}$ and fixed $n$ we define the \emph{$q$-transition zone} to be the range $[\tau_{q},\tau_{1-q}]$, where $\tau_q$ and $\tau_{1-q}$ correspond to the ratio $\fracl{d}{\paren{2n \ln n}}$ for smallest and largest value of $d$ where support vector proliferation occurs with probability $q$ and $1-q$ respectively. In other words, within this range, the corresponding probabilities are inside $[q, 1-q]$ interval. 

Formally, we define \emph{scaled $q$-transition width estimate} as,
\[
\hat{w}_{q}(n,d) = \frac{ \hat{\tau}_{1-q} - \hat{\tau}_{q} }{ \probit^{-1}(1-q) - \probit^{-1}(q) } \sqrt{\log n},    
\]
where $\probit$ is the Probit link function introduced in \eqref{eq:model} and $\hat{\tau}_{q}$ and $\hat{\tau}_{1-q}$ are plug-in estimates of $\tau_{q}$ and $\tau_{1-q}$.  
We plot $\hat{w}_{q}(n,d)$ with $n$ in Figure~\ref{fig:transition-width}.



\section{Supplementary material for Section~\ref{sec:l1}}\label{app:l1}

\subsection{$\ell_1$ SVM experimental design}\label{app-sub:l1-exp}
To generate Figure~\ref{fig:l1-l2-compare}, we use $M=400$ Monte Carlo simulations for each pair $(n,d)$ for both $\ell_{1}$ and $\ell_{2}$ SVMs with Gaussian ensemble features $\bX$ and labels $y$ generated identically to what discussed in Appendix~\ref{app:supp-exp}. The range of values of $(n,d)$ for $\ell_{1}$-SVM is within $\{10,11,\dots,70\}\times \{80,85,\dots,1000\}$.
The remainder of the $\ell_1$ experiment is identical to the aforementioned $\ell_2$ experiments.
We determine whether SVP occurs for the $\ell_1$ case by solving the \eqref{eq:svm-dual-l1} linear program for a $\bX$ and $y$ and identifying whether the resulting $\alpha^{*}$ is in $\R_{+}^{n}$. 
We use the linear program solver in CVXOPT~\cite{vandenberghe2010cvxopt} with the default configuration (\texttt{absolute tolerance} $= 10^{-7}$, \texttt{relative tolerance} $= 10^{-6}$, \texttt{feasibility tolerance} $= 10^{-7}$).

\subsection{A geometric interpretation of $\ell_1$ SVM proliferation}\label{app-sub:l1-geometric}


A persistent line of work in the statistics and machine learning literature studies phenomena with $\ell_1$ constraints by geometrically analyzing the random mathematical programs.
One such application is \emph{exact sparse recovery}, which assumes the existence of a sparse ``ground truth'' that one aims to recover using linear observations by solving a convex optimization problem.
\citet{donoho2009counting} and \citet{almt14} show the existence of a phase transition in this and similar problems by translating the optimality conditions into a geometrical question as to whether two cones share a ray. 
Motivated by this line of work, we believe that Conjecture \ref{main-conjecture} at its core is a geometric phenomenon.
Along those lines, we translate the problem of determining whether $\alpha \in \R_+^n$ into a geometric problem that aims to characterize the faces of a random polytope.

By the Fundamental Theorem of Linear Programming, the optimal solution(s) to \eqref{eq:svm-dual-l1} lie on corner points of the polytope \[ \mathcal{C}^{*} = \{ \alpha \in \mathbb{R}^{n} : \|\bA^\T \alpha \|_{\infty} \le 1\} \]
in the dual space. 
The following lemma provides an alternate characterization the optimal solution $\alpha^*$ to the linear program by relating the corner points of $\mathcal{C}^*$ to the faces of its dual polytope, $\mathcal{C}$.

\begin{lemma}
\label{lem:l1-optimality}
 Suppose $\bA \in \mathbb{R}^{n \times d}$ is a full rank matrix for $d > n$. 
 Then $\alpha^{*}$ is an optimal solution to \eqref{eq:svm-dual-l1} iff it is perpendicular to a facet of $\mathcal{C}=\operatorname{Conv}\{\pm \bA_{.,i} : i \in [d]  \}$ which intersects with the ray passing through origin and $\mathbf{1} \in \mathbb{R}^{n}$. 
\end{lemma}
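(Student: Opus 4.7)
The plan is to exploit the polar-duality structure: $\mathcal{C}^*$ and $\mathcal{C}$ form a polar pair of polytopes, and the lemma amounts to the classical correspondence between optimal vertices of $\mathcal{C}^*$ and facets of $\mathcal{C}$ meeting the direction $\mathbf{1}$.

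First I would verify that $\mathcal{C}^*$ is exactly the polar dual $\mathcal{C}^{\circ} := \{\alpha : \alpha^\T x \le 1 \text{ for all } x \in \mathcal{C}\}$. Because $\mathcal{C}$ is the convex hull of the finite set $\{\pm \bA_{\cdot,i}\}_{i\in[d]}$, the condition $\alpha^\T x \le 1$ for all $x \in \mathcal{C}$ reduces to $|\alpha^\T \bA_{\cdot,i}| \le 1$ for every $i$, which is exactly $\|\bA^\T \alpha\|_\infty \le 1$. The full-rank hypothesis on $\bA$ guarantees that its columns span $\R^n$, so $\mathcal{C}$ contains the origin in its interior and $\mathcal{C}^*$ is a bounded polytope; hence \eqref{eq:svm-dual-l1} has a finite optimal value $v^\star$, attained at a vertex of $\mathcal{C}^*$ by the Fundamental Theorem of Linear Programming invoked in the paper.

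Next I would apply the standard identity equating the support function of $\mathcal{C}^{\circ}$ with the Minkowski gauge of $\mathcal{C}$, giving
\[
v^\star \;=\; \max_{\alpha \in \mathcal{C}^*} \mathbf{1}^\T \alpha \;=\; \gamma_{\mathcal{C}}(\mathbf{1}) \;=\; \min\{t>0 : \mathbf{1}/t \in \mathcal{C}\}.
\]
Geometrically, $\mathbf{1}/v^\star$ is precisely the point where the ray from the origin through $\mathbf{1}$ first meets $\partial\mathcal{C}$, hence lies on a facet of $\mathcal{C}$. To establish the biconditional I would argue both directions. For the forward direction, if $\alpha^\star$ is an optimal vertex of $\mathcal{C}^*$, then the contact set $F := \{x \in \mathcal{C} : (\alpha^\star)^\T x = 1\}$ is a facet of $\mathcal{C}$ with outward normal $\alpha^\star$, and the identity $(\alpha^\star)^\T(\mathbf{1}/v^\star) = (\alpha^\star)^\T \mathbf{1}/v^\star = 1$ places $\mathbf{1}/v^\star$ in $F$, so $F$ meets the ray. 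For the converse, if $F$ is a facet meeting the ray and $\alpha^\star$ is its outward normal, rescaled so that $(\alpha^\star)^\T x = 1$ on $F$, then $\alpha^\star \in \mathcal{C}^\circ = \mathcal{C}^*$ (because $\pm\bA_{\cdot,i}\in \mathcal{C}$), and using that the ray hits $F$ at $\mathbf{1}/v^\star$ we get $(\alpha^\star)^\T \mathbf{1} = v^\star$, so $\alpha^\star$ attains the LP maximum.

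The main subtlety will be degenerate cases. When $\mathbf{1}/v^\star$ lies in the relative interior of a face of codimension greater than one (simultaneously on several facets), the LP has a whole face of optimal solutions, and not every optimizer is the outward normal of a single facet. I would handle this either by restricting attention to vertex optima guaranteed by the Fundamental Theorem, or by observing that for full-rank $\bA$ in general position (in particular almost surely for the random models considered elsewhere in the paper) the ray through $\mathbf{1}$ meets $\partial\mathcal{C}$ in the relative interior of a single facet, which makes the iff entirely clean.
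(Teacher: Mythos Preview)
Your argument is correct and in fact more complete than the paper's. Both proofs rest on the polar duality between $\mathcal{C}$ and $\mathcal{C}^*$ (stated in the paper as Fact~\ref{fact:polytope-duality}), but the routes to the optimality characterization differ. The paper moves the constraint into the objective with a Lagrange multiplier $\beta$ and uses the subgradient calculus of $\alpha \mapsto \max_i \bB_i^\T\alpha$ to conclude that $\tfrac{1}{\beta}\mathbf{1}$ lies in the convex hull of the active columns, which is then identified as a facet of $\mathcal{C}$ orthogonal to the corner point $\alpha^*$. You instead invoke the gauge/support-function identity $v^\star = \max_{\alpha\in\mathcal{C}^*}\mathbf{1}^\T\alpha = \gamma_{\mathcal{C}}(\mathbf{1})$ to place $\mathbf{1}/v^\star$ directly on $\partial\mathcal{C}$, and then appeal to the vertex--facet correspondence of polar polytopes. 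Your route is slightly more geometric and has two advantages: it argues both directions of the biconditional explicitly (the paper's proof only writes out the forward implication), and it isolates the degenerate case where $\mathbf{1}/v^\star$ sits on a lower-dimensional face of $\mathcal{C}$, which the paper does not discuss. The paper's Lagrangian approach, on the other hand, makes the role of the active constraints (the green points in Figure~\ref{fig:l1-example}) more transparent and connects more directly to the LP structure used downstream.
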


As a result of Lemma~\ref{lem:l1-optimality}, we can determine whether $\ell_1$ support vector proliferation occurs by instead determining whether the ray in the direction of $\one$ passes through a facet of $\mathcal{C}$ whose closest point to the origin lies in $\R_+^n$. 
This reduces the problem to understanding the geometry of the facets of random polytope $\mathcal{C}$. If there are sufficiently many facets relative to $2^n$ and most facets cover a very small number of orthants, then it becomes increasingly likely that the facet intersected by the $\one$ ray will also have its projection from $\zero$ lie in the positive orthant and support vector proliferation is likely to occur. This intuition can be supported from Dvortzky-Milman Theorem \cite{dvoredsky1961some} since for a Gaussian ensemble $\bA$ one expects the convex hull of its columns to be isomorphic to a sphere when $d = \exp\paren{\Omega(n)}$. This geometric approach informs our conjecture that the phase transition is likely to occur at the rate $f(n) = \exp\paren{\Theta(n)}$.

Figure~\ref{fig:l1-example} for a Gaussian Ensemble with $n=2$ and $d=1500$ and illustrates the relationships between $\mathcal{C}$, $\mathcal{C}^*$, $\alpha^*$, and $\bA_{.,i}$. 
The geometric intuition conveyed in the figure is limited by its low value of $n$; we expect qualitatively different behavior to arise when $n$ is large, particularly when considering the geometry of the facets of $\mathcal{C}$.
For instance, when $n = 2$, very few samples correspond to corners of the convex hull, and the vast majority lie in the interior.
When $n$ is larger, almost all of the samples will be corners of the convex hull, unless $d$ grows exponentially with $n$.

\begin{figure}[t]
    \centering
    \includegraphics[width=0.6\textwidth]{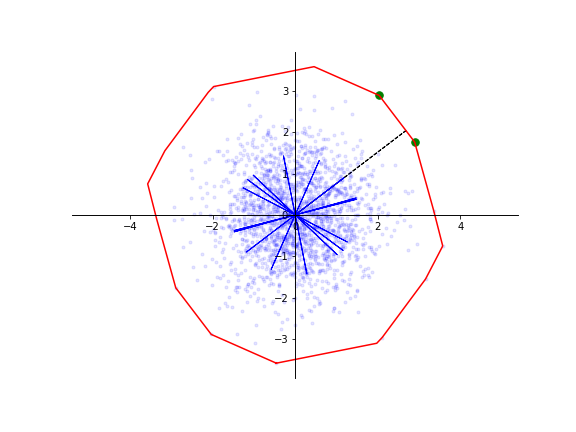}
    \caption{%
    \label{fig:l1-example}%
    Depiction of the dual space. 
        Blue points correspond to rows of a Gaussian ensemble and their reflections with respect to origin, i.e. $\pm \bA_{i,:}$. The outer red convex hull represents $\mathcal{C}$.
        Blue lines are corner points of $\mathcal{C}^{*}$, which are perpendicular to the faces of $\mathcal{C}$, and the black line is the direction of the optimal solution $\alpha^{*}$. 
        Green points indicate the active constraints where the inner product with $\alpha^{*}$ is equal to one.
        }
\end{figure}

To prove Lemma~\ref{lem:l1-optimality}, we make use of several useful facts about the geometric properties of dual polytopes, which are immediate consequences of Farkas' Lemma~\cite[see][]{matousek2013lectures}.

\begin{fact}
    \label{fact:polytope-duality}
    Let $\mathcal{C} = \operatorname{Conv}\{\pm A_{.,i} : i \in [d]\}$ and $\mathcal{C}^{*}$ defined as above. 
    Then for a full rank matrix $\bA \in \mathbb{R}^{n \times d}$ the following holds:
    \begin{itemize}
        \item The origin $\zero$ is in the interior of $\mathcal{C}$ and $\mathcal{C}^{*}$.
        \item $\mathcal{C} = (\mathcal{C}^{*})^{*} \coloneqq \{\alpha \in \mathbb{R}^{n} : \langle \alpha^{\prime} , \alpha \rangle \leq 1  , \ \forall \alpha^{\prime}  \in \mathcal{C}^{*} \}$, in words $\mathcal{C}$ and $\mathcal{C}^{*}$ are polar.
        \item Corner points of $\mathcal{C}^{*}$ are perpendicular to facets (boundary hyperplanes) of $\mathcal{C}$.
    \end{itemize}
\end{fact}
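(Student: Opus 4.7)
The plan is to establish the three bullets of Fact~\ref{fact:polytope-duality} by combining the symmetry and full rank of $\bA$ with standard polar-duality machinery for convex polytopes. The tools I would use, either directly or through Farkas-type consequences, are: (i) the standard fact that the polar of a set equals the polar of its convex hull (when the origin is included), and (ii) the bipolar theorem, which asserts $S^{\circ\circ}=S$ for any closed convex $S$ containing the origin in its interior.

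For the first bullet, I would argue as follows. Since $\bA\in\R^{n\times d}$ has full rank and $d>n$, its columns span $\R^n$, so one can select $n$ linearly independent columns $\bA_{\cdot,i_1},\dots,\bA_{\cdot,i_n}$. The convex hull of $\pm\bA_{\cdot,i_1},\dots,\pm\bA_{\cdot,i_n}$ is a full-dimensional cross-polytope (an affine image of $B_1^n$) centered at the origin, and hence contains $\zero$ in its interior; since this set is contained in $\mathcal{C}$, so does $\mathcal{C}$. For $\mathcal{C}^*$, note $\|\bA^\T\zero\|_\infty=0<1$, so $\zero$ strictly satisfies every defining half-space, placing it in the interior.

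For the second bullet, I would first rewrite
\[
 \mathcal{C}^*=\{\alpha:\|\bA^\T\alpha\|_\infty\le 1\}=\{\alpha:\langle \pm\bA_{\cdot,i},\alpha\rangle\le 1,\ \forall i\in[d]\},
\]
which is exactly the polar $\{\pm\bA_{\cdot,i}:i\in[d]\}^{\circ}$. Because polarity depends only on the convex hull of a set that contains the origin, this equals $\mathcal{C}^{\circ}$. Then apply the bipolar theorem, whose hypotheses are met by the first bullet (plus the fact that $\mathcal{C}$, as the convex hull of finitely many points, is closed): $(\mathcal{C}^*)^*=\mathcal{C}^{\circ\circ}=\mathcal{C}$.

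For the third bullet, I would invoke the standard face-lattice correspondence for polar polytopes: $k$-dimensional faces of $\mathcal{C}^*$ correspond bijectively to $(n-1-k)$-dimensional faces of $\mathcal{C}$, with a face $F\subseteq\mathcal{C}^*$ matched to $F^{\diamond}:=\{x\in\mathcal{C}:\langle \alpha,x\rangle=1\text{ for all }\alpha\in F\}$. Specializing to a vertex (a $0$-dimensional face) $\alpha^*$ of $\mathcal{C}^*$, the matched face is the facet $F^\diamond=\{x\in\mathcal{C}:\langle \alpha^*,x\rangle=1\}$, which lies in the hyperplane $\{x:\langle\alpha^*,x\rangle=1\}$; this hyperplane has normal direction $\alpha^*$, giving the claimed perpendicularity. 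I expect no serious obstacle here: the three items are folklore consequences of polar duality once the first bullet is in hand, so the main care goes into pinning down that $\zero$ actually lies in the interior (not merely in the relative interior) to legitimize the appeal to the bipolar theorem and to the vertex-facet correspondence.
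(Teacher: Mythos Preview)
Your proposal is correct. The paper does not actually supply a proof of this fact: it states the three bullets as ``immediate consequences of Farkas' Lemma'' and cites a textbook, so there is no paper-proof to compare against in detail. Your argument is exactly the standard fleshing-out one would expect behind that citation---using full rank to get $\zero$ in the interior, then the bipolar theorem for the second bullet, and the polar face-lattice (vertex $\leftrightarrow$ facet) correspondence for the third---all of which are the textbook consequences of separation/Farkas the paper is pointing to.
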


Now, we are ready to prove Lemma~\ref{lem:l1-optimality}.

\begin{proof}[Proof of Lemma~\ref{lem:l1-optimality}]
We consider the \eqref{eq:svm-dual-l1} optimization problem and move constraints into the objective for some Lagrange multiplier $\beta \in \mathbb{R}_{+}$:
\[
\max_{\alpha \in \mathbb{R}^{n}}
\mathbf{1}^\T \alpha + \beta\left( 1 - \max_{i \le 2d}{ \bB_{i}^\T \alpha } \right)
\]
where $\bB_{i} = -\bB_{i+d} = \bA_{\cdot,i} \in \R^n$ is the $i$th column of $\bA$.

If $\alpha^*$ is an optimal solution, then zero must be a subgradient of the objective at $\alpha^*$.
Since the set of subgradients of $\alpha \mapsto \max_{i \in [2d]} \bB_i^\T \alpha$ at $\alpha$ is $\operatorname{Conv}\{ \bB_i : i \in I_{\alpha} \}$, where $I_{\alpha} = \{i \in [2d] : \langle \bB_{i}, \alpha \rangle = \max_{i' \in [2d]}{ \langle \bB_{i'}, \alpha \rangle }) \}$ denotes the set of active constraints at $\alpha$, an optimal solution $\alpha^*$ must satisfy
\[
\frac{1}{\beta}\mathbf{1} \in \operatorname{Conv}\{ \bB_{i} : i \in I_{\alpha^{*}}\}.
\] 
This convex hull is a face of $\mathcal{C}$ since $\langle \bB_{j}, \alpha^{*} \rangle < \langle \bB_{i}, \alpha^{*} \rangle = 1$ for any $j \notin I_{\alpha^{*}}$ and $i \in I_{\alpha^{*}}$. 
Moreover, since $\alpha^{*}$ is also a corner point of $\mathcal{C}^{*}$ we have $|I_{\alpha^{*}}| = n$. 
Combining with Fact~\ref{fact:polytope-duality}, we conclude that $\alpha^{*}$ must be perpendicular to the facet of $\mathcal{C}$ that intersects with the ray passing through $\mathbf{1}$. Existence of such a facet is ensured by the fact that origin is in the interior of $\mathcal{C}$. 
\end{proof}

\subsection{Lower-bound on dimension needed for $\ell_1$ SVP}\label{app-sub:l1-loose}

Based on the previous relationship between support vector proliferation and the faces of a random convex polytope, we can prove a very loose bound on the minimum dimension $d$ needed for support vector proliferation to occur by bounding the number of faces of the polytope.
When $d = O\paren{n}$, the polytope will have far fewer than $2^{n}$ facets, which makes it impossible for most orthants to to be covered by projections from $\zero$ onto facets. This (along with rotational invariance of the standard Gaussian distribution) allows us to show that support vector proliferation occurs with a negligibly small probability in this regime.

\begin{theorem}\label{thm:weak-ub}
    Let $\bA \in \mathbb{R}^{n \times d}$ be a matrix of i.i.d.~$\mathcal{N}(0,1)$ random variables with $d \geq n$.
    Let $\mathbf{S}$ denote the set of maximizers of \eqref{eq:svm-dual-l1}.
    If $d < C n$ for some universal constant $C>1$, then
    \[
    \limsup_{n \to \infty}{ \pr{\mathbf{S} \cap \operatorname{int}(\mathbb{R}_{+}^{n}) \neq \emptyset }} = 0 
    .
    \]
\end{theorem}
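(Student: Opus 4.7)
The plan combines the geometric characterization in Lemma~\ref{lem:l1-optimality} with the sign-flip invariance of the Gaussian distribution and a facet-counting estimate for the random symmetric polytope $\mathcal{C} = \operatorname{Conv}\{\pm \bA_{\cdot, i}\}_{i=1}^d$.

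By Lemma~\ref{lem:l1-optimality}, the event $\mathbf{S} \cap \operatorname{int}(\mathbb{R}_+^n) \ne \emptyset$ coincides (a.s.) with the event that the outward normal $\alpha^{*}$ to the unique facet of $\mathcal{C}$ met by the ray $\{t\mathbf{1}: t>0\}$ lies in $\operatorname{int}(\mathbb{R}_{+}^{n})$. I would then exploit that $D\bA$ and $\bA$ have the same distribution for every diagonal $D \in \{\pm 1\}^n$: substituting $\beta = D\alpha$ in \eqref{eq:svm-dual-l1} yields $\Pr[\alpha^{*}(s;\bA) \in O_s] = \Pr[\mathrm{SVP}]$ for every sign pattern $s \in \{\pm 1\}^n$, where $\alpha^{*}(s;\bA)$ is the maximizer of $s^{\T}\alpha$ over $\mathcal{C}^{*}$ and $O_s$ is the orthant with sign pattern $s$. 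Summing over $s$ gives
\[
  2^n \Pr[\mathrm{SVP}] \;=\; \mathbb{E}\bigl[\#\{s : \alpha^{*}(s;\bA) \in O_s\}\bigr] \;\le\; \mathbb{E}[N(\bA)],
\]
where $N(\bA)$ is the number of facets of $\mathcal{C}$; the inequality holds because every vertex of $\mathcal{C}^{*}$ has a unique sign pattern and thus can match at most one $s$.

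The crux is to show $\mathbb{E}[N(\bA)] = o(2^n)$ whenever $d < Cn$ for some $C > 1$. By exchangeability across the $2^n \binom{d}{n}$ index-sign pairs $(S, \epsilon)$ that parameterize candidate facets, together with independence of the $d - n$ columns outside $S$,
\[
  \mathbb{E}[N(\bA)] \;=\; 2^n \binom{d}{n}\, \mathbb{E}\!\left[\bigl(1 - 2Q(1/\|\alpha\|)\bigr)^{d-n}\right],
\]
where $\alpha := \bA_{[n]}^{-\T}\mathbf{1}$ and $Q$ is the standard Gaussian upper tail. Rotational invariance of the $n \times n$ Gaussian matrix $\bA_{[n]}$, combined with a Schur-complement identity for the top-left entry of the inverse Wishart, yields $\|\alpha\|^2 \overset{d}{=} n/\chi_1^2$, so $\|\alpha\|$ concentrates near $\sqrt{n}$. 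On this typical event, the elementary estimate $1 - 2Q(t) \le t\sqrt{2/\pi}$ gives a per-column factor of order $1/\sqrt{n}$, hence
\[
  \Pr[\mathrm{SVP}] \;\le\; \binom{d}{n}\, n^{-(d-n)/2 + o(n)} \;=\; \exp\!\bigl\{O(n) - \tfrac{1}{2}(C-1) n \log n\bigr\} \;\longrightarrow\; 0
\]
for any fixed $C > 1$, since the $n\log n$ term dominates.

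The main obstacle is taming the lower tail $\|\alpha\| \ll \sqrt{n}$, where the per-column estimate $1 - 2Q(1/\|\alpha\|)$ degrades toward $1$. Here the super-polynomial decay $\Pr[\chi_1^2 > s] \lesssim e^{-s/2}$ must compensate for the weakness of the facet bound; a dyadic partition of the range of $\|\alpha\|$ followed by a union bound is needed, with a careful verification that on each layer the product (layer probability) $\times$ (per-column facet probability)$^{d-n}$ is super-exponentially small uniformly across scales. The upper tail $\|\alpha\| \gg \sqrt{n}$ and the non-full-rank event for $\bA_{[n]}$ contribute only super-exponentially small terms and are handled similarly.
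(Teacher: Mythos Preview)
Your high-level strategy coincides with the paper's: both reduce to the inequality $\Pr[\mathrm{SVP}] \le \mathbb{E}[N(\bA)]/2^n$ (you via sign-flip invariance, the paper via a uniformly random rotation) and then attempt to control the facet count $N(\bA)$ of $\mathcal{C}$. The paper simply asserts the crude bound $N(\bA) \le \binom{d}{n}$, whereas you derive the exact identity $\mathbb{E}[N(\bA)] = 2^n\binom{d}{n}\,\mathbb{E}\bigl[(1-2Q(1/\|\alpha\|))^{d-n}\bigr]$ with $\|\alpha\|^2 \stackrel{d}{=} n/\chi_1^2$, which is correct.

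The gap is in your estimate of this expectation. Writing $1/\|\alpha\| = |Z|/\sqrt{n}$ with $Z\sim\mathcal{N}(0,1)$, you claim the ``typical'' event $|Z|=O(1)$ yields a per-column factor of order $n^{-1/2}$ and hence $\mathbb{E}[\,\cdot\,] \le n^{-(d-n)/2+o(n)}$. But you are integrating $\bigl(|Z|\sqrt{2/(\pi n)}\bigr)^{(C-1)n}$ against a Gaussian, and the high moment $\mathbb{E}\bigl[|Z|^{(C-1)n}\bigr] \asymp ((C-1)n/e)^{(C-1)n/2}$ cancels the factor $n^{-(C-1)n/2}$ entirely. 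A Laplace analysis of $\int_0^\infty \exp\bigl\{n[(C-1)\ln(1-2Q(v)) - v^2/2]\bigr\}\,dv$ places the saddle at $v^* \asymp \sqrt{C-1}$ (equivalently $|Z| \asymp \sqrt{(C-1)n}$, i.e.\ $\|\alpha\| = \Theta(1)$), and one checks that the combined exponent
\[
\bigl[C\ln C - (C-1)\ln(C-1)\bigr] \;+\; (C-1)\ln\!\bigl(1-2Q(v^*)\bigr) \;-\; (v^*)^2/2
\]
is \emph{strictly positive} for every $C>1$; for small $\epsilon = C-1$ it equals $\tfrac{\epsilon}{2}\ln\!\bigl(2e/(\pi\epsilon)\bigr) + O(\epsilon^2) > 0$. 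Hence $\binom{d}{n}\,\mathbb{E}[\,\cdot\,] \to \infty$, and your final bound on $\Pr[\mathrm{SVP}]$ exceeds $1$ for every $C>1$. The dyadic partition you propose cannot repair this: on the critical layer $\|\alpha\|=\Theta(1)$ the per-column factor is $\Theta(1)$ while the layer probability is only $e^{-\Theta(n)}$, and $\binom{Cn}{n}\cdot e^{-\Theta(n)}$ need not vanish. A sanity check confirms the obstruction: at $d=n$, $\mathcal{C}$ is a linear image of the cross-polytope, so $N(\bA)=2^n$ deterministically and $\mathbb{E}[N(\bA)]/2^n = 1$; adding columns only increases the expected facet count. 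The route $\Pr[\mathrm{SVP}] \le \mathbb{E}[N(\bA)]/2^n$ therefore cannot by itself give a nontrivial bound. (Your exact formula in fact exposes the same difficulty hidden in the paper's argument: the asserted bound $N(\bA)\le\binom{d}{n}$ there is already violated at $d=n$.)
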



A key geometric insight for the proof is supplied by \cite{donoho2009counting}.
Let $k^*$ be the maximum integer $k$ such that every $k$ points from among $\{ \pm \bA_{\cdot,i} : i \in [d] \}$ spans a $k$-dimensional face of $\mathcal{C}$ (i.e., their convex hull is a $k$-dimensional face of $\mathcal{C}$).
Then $k^{*}= \Omega_{\mathbf{P}}(\frac{n}{\log(d/n)})$ for large enough values of $n$ and $d$.
Hence, it is plausible to expect that every selection of $n$ points spans a facet of $\mathcal{C}$.


\begin{proof}
Let $F$ denote the facets of $\mathcal{C}$, and to each facet $f \in F$, we associate a corner point $\alpha^{(f)}$ of $\mathcal{C}^*$.
Also let $\bU \in \R^{n \times n}$ denote a uniformly random rotation matrix, independent of $\bA$.
Note that the collection $(\alpha^{(f)})_{f \in F}$ will also have a rotational invariant distribution.
Since $\bA$ has rank $n$ almost surely, Lemma~\ref{lem:l1-optimality} implies that an optimal solution $\alpha^*$ to \eqref{eq:svm-dual-l1} must be one of these corner points $\alpha^{(f)}$.
Thus, we can upper bound the event that any given optimal solution $\alpha^*$ to \eqref{eq:svm-dual-l1} lies in the positive orthant as follows: 
\begin{align*}
    \pr{\alpha^{*} \in \operatorname{int}(\mathbb{R}_{+}^{n})}
    & \le \pr{
        \bigcup_{f \in F}
        \{ \alpha^{(f)} \in \operatorname{int}(\mathbb{R}_{+}^{n}) \}
    }
    \\
    & =
    \E{
        \pr{
            \bigcup_{f \in F}
            \{ \bU\alpha^{(f)} \subset \mathbb{R}_{+}^{n} \}
            \,\Big\vert\, (\alpha^{(f)})_{f \in F}
        }
    } \\
    & \leq
    \E{
        \sum_{f \in F}
        \pr{
            \{ \bU\alpha^{(f)} \subset \mathbb{R}_{+}^{n} \}
            \,\Big\vert\, (\alpha^{(f)})_{f \in F}
        }
    } \\
    & \le \frac{\E{|F|}}{2^{n}}
    .
\end{align*}
The second inequality is a union bound, and the final inequality uses the fact that $\mathbf{P}[\bU\alpha \in \mathbb{R}_{+}^{n}] = 1/2^{n}$ for any fixed $\alpha$.
A crude upper bound of $\binom{d}{n}$ on the number of facets gives
\[
\mathbf{P}[\alpha^{*} \in \mathbb{R}_{+}^{n}] \le \frac{{d \choose n}}{2^{n}} .
\]
The right-hand side converges to zero as $n \to \infty$ provided that $d < Cn$ for some absolute constant $C \approx 1.29$.
\end{proof}

Needless to say, the gap between Theorem~\ref{thm:weak-ub} and the $\ell_1$ support vector proliferation threshold exhibited in Figure~\ref{fig:l1-l2-compare} is substantial.
Indeed, if Theorem~\ref{thm:weak-ub} were tight, it would imply that support vector proliferation would occur for \emph{smaller} values of $d$ for $\ell_1$ than $\ell_2$, which contradicts our experimental results and geometric intuition.
We believe that union bound corresponding to the first inequality in our proof accounts for that looseness.
That inequality would be tight only if the existence of some $\alpha^{(f)} \in \operatorname{int}(\R_+^n)$ implies that $\alpha^* \in \operatorname{int}(\R_+^n)$; however, this ignores the possibility that facets span many different orthants and that the facet intersecting the ray through $\one$ may be ``close to the origin'' in many orthants simultaneously.
We believe that a more precise understanding of the geometry of the faces of random high-dimensional polytopes could tighten this bound and hence elucidate the $\ell_1$ support vector proliferation phenomenon.

\end{document}